\def\eqref#1{equation~(\ref{#1})}
\def\1{\bm{1}}
\DeclareMathAlphabet{\mathsfit}{\encodingdefault}{\sfdefault}{m}{sl}
\SetMathAlphabet{\mathsfit}{bold}{\encodingdefault}{\sfdefault}{bx}{n}
\DeclareMathOperator*{\argmax}{arg\,max}
\DeclareMathOperator{\Ind}{\mathbbm{1}}
\tikzset{
    -Latex,auto,node distance =1 cm and 1 cm,semithick,
    state/.style ={ellipse, draw, minimum width = 0.7 cm},
    point/.style = {circle, draw, inner sep=0.04cm,fill,node contents={}},
    bidirected/.style={Latex-Latex,dashed},
    el/.style = {inner sep=2pt, align=left, sloped}
}
\newtheorem*{rep@theorem}{\rep@title}
\newcommand{\newreptheorem}[2]{%
\newenvironment{rep#1}[1]{%
 \def\rep@title{#2 \ref{##1}}%
 \begin{rep@theorem}}%
 {\end{rep@theorem}}}
\newtheorem{theorem}{Theorem}
\theoremstyle{definition}
\newtheorem{definition}{Definition}
\newtheorem{assumption}{Assumption}
\newtheorem{lemma}{Lemma}
\newtheorem{proposition}{Proposition}
\newcommand{\indep}{\rotatebox[origin=c]{90}{$\models$}}
\begin{document}

%
\runningtitle{Operationalizing Counterfactual Metrics: Incentives, Ranking, and Information Asymmetry}

%

\twocolumn[

\aistatstitle{Operationalizing Counterfactual Metrics: \\Incentives, Ranking, and Information Asymmetry}

\aistatsauthor{ Serena Wang \And Stephen Bates \And  P. M. Aronow \And Michael I. Jordan}

\aistatsaddress{ UC Berkeley \And  MIT \And Yale University \And UC Berkeley } ]

\begin{abstract}



From the social sciences to machine learning, it is well documented that metrics
do not always align with social welfare. 
In healthcare, \citet{dranove2003more} showed that publishing surgery mortality metrics actually harmed sicker patients by increasing provider selection behavior.
Using a principal-agent model, we analyze the incentive misalignments that arise from such \textit{average treated outcome} metrics, and show that the incentives driving treatment decisions would align with maximizing total patient welfare if the metrics \textit{(i)} accounted for counterfactual untreated outcomes and \textit{(ii)} considered total welfare instead of averaging over treated patients.
Operationalizing this, we show how counterfactual metrics can be modified to behave reasonably in patient-facing ranking systems. Extending to realistic settings when providers observe more about patients than the regulatory agencies do, we bound the decay in performance by the degree of information asymmetry between principal and agent. In doing so, our model connects principal-agent information asymmetry with unobserved heterogeneity in causal inference. 
\end{abstract}


\section{Introduction}


As machine learning (ML) is increasingly deployed in dynamic social systems with asymmetries in power and information between stakeholders, a core challenge is that the \textit{metrics} that are optimized do not always align with social welfare.
From education to healthcare to recommender systems, it has been repeatedly shown that the impact of the ability of modern ML to optimize arbitrarily complex objectives is often limited by the difficulty in choosing \textit{what} to optimize \citep{liu2023reimagining, mcnee2006being, obermeyer2019dissecting}. 


One particularly consequential example of this gap is the incentive misalignment that occurs when \textit{average treated outcome} measures are used as accountability and ranking metrics. This has led to measurable societal harm when those being ranked may selectively choose whom to treat. 
As an illustrative example, \citet{dranove2003more} showed that the publication of hospital mortality rate metrics by the New York Health Department led to dramatically worse outcomes for severely ill patients.
Specifically, hospitals had an incentive to selectively treat the healthiest patients, rather than those who would benefit most from treatment.
Today, the Centers for Medicare and Medicaid Services (CMS) continues to invest billions of dollars in the development of quality metrics \citep{wadhera2020quality,casalino2016us}. In addition to determining direct provider compensation \citep{mips}, these metrics also feed into large scale \textit{ranking systems} such as the US News and World Report and the LeapFrog Hospital Safety Score
\citep{nyt,calhospitalcompare}. At the same time, studies have continued to question the relationship between these metrics and patient outcomes~\citep[see, e.g.,][]{glance2021association,gonzalez2014hospital,ryan2009relationship,hwang2014hospital, jha2008does,smith2017dissecting}.



In this work, we directly study the incentive misalignment when \textit{average treated outcomes} are used as quality metrics. To mitigate this misalignment, we propose alternative metrics that have a foundation in causal inference. Specifically, given counterfactual estimates of patient outcomes, we outline effective usage of these estimates and discuss prevailing limitations when providers may engage in strategic behavior.

More generally, our analysis applies to any environment where metrics are learned from data over which an agent controls treatment selection. We refer to healthcare as a running example where these effects are well documented. However, other domains subject to this phenomenon include education, where student outcome metrics affect school rankings, funding, and accreditation \citep{koretz2017}; and online ranking platforms for commercial businesses like restaurants that may exercise some screening over their customers.

To study the welfare effects that arise from this dynamic interaction between quality metrics and hospitals, we employ a \textit{principal-agent model} where the \textit{principal} chooses a quality metric as a reward function, and the \textit{agent} responds by optimizing this reward function to the best of their private abilities and information. To analyze the welfare effects of metric choices, we apply a causal framework similar to that of policy learning, where the goal is to allocate treatments that maximize the total positive effects over a population relative to treating no one \citep{manski2008identification}.  
Our key ingredient is that the principal only has indirect control of the implemented policy---they may design a metric that shapes an agent's reward and hence behavior but cannot directly choose an agent's policy.  

\textbf{Contributions.} We show that \textit{average treated outcome} metrics incur unbounded regret by this definition of welfare, and show that regret can be reduced by \textit{(i)} accounting for counterfactual untreated outcomes and \textit{(ii)} considering total welfare instead of average welfare among treated patients. Applying these two simple insights yields an optimal functional form for a quality measure that achieves zero regret as long as the principal can learn the mean conditional untreated potential outcomes. We refer to this as rewarding the \textit{total treatment effect}. 
Connecting the proposed counterfactual metric to practice, we discuss two issues that arise when operationalizing the metric in real applications.
First, we study the complications that arise when the total treatment effect is used to \textit{rank} different agents that might serve different treatment populations.
Second, we consider practical issues of \textit{information asymmetry}, where the agent might observe more features about each patient than the principal.
\textit{Even an unbiased estimate of the counterfactual untreated outcome is not sufficient to maximize patient welfare when information asymmetries remain.} In addition to giving theoretical regret bounds, we also empirically show that it is not always better for a principal to condition on all known features, as this can \textit{amplify regret}. Our model yields new connections between information asymmetry in the principal-agent model and unobserved heterogeneity in causal inference.


\subsection{Related work}

Our work combines technical structure from policy learning and contract theory to analyze misalignments in accountability metrics, which have long been critiqued in the social sciences. 

\textbf{Policy learning.}
The evaluation of treatment policies by their causal effects is well established in policy evaluation and policy learning \citep{manski2008identification,hirano2009asymptotics,stoye2009minimax,athey2021policy}. We directly apply the same measures of utility and regret in this work. Building on these measures, we consider a setting where a principal (regulatory agency) can only indirectly affect the policy through measuring an agent (hospital). 
We also note that our formulation is not the only way in which strategic behavior may arise.  For example, \citet{sahoo2022policy} model patients’ strategic responses when the hospital learns a policy with limited treatment capacity. Combining this model with ours would produce an interesting pipeline analysis.

\textbf{Contract theory.}
The principal-agent model is well established in economics as a way to model incentives and equilibrium dynamics when a \textit{principal} sets a contract with rewards as a function of actions, and an \textit{agent} decides which action to take based on private information and costs \citep{laffont2009theory,gibbons2013handbook,milgrom1992economics}. Contract theory provides a structure to analyze moral hazard, where the structure of the contract and information asymmetry may lead to misalignment between the principal and agents' incentives \citep{arrow1963uncertainty,holmstrom1991multitask}. 
More recently, there is a growing recognition of the importance of the algorithmic and statistical aspects of contract theory~\citep[e.g.,][]{carroll2015robustness,tetenov2016economic,spiess2018optimal, dutting2019simple, dutting2020complexity, bates2022principalagent, alon2022bayesian}. Our work contributes to this line of thought, linking moral hazard from contract theory with causal inference, and showing how incentive and statistical considerations jointly guide the choice of accountability metrics. 

Specifically, we use this framework to analyze misalignments under a particular form of action and information asymmetry where the agent controls treatment selection, and might know more about each treatment unit than the principal. 
\citet{lazzarini2022counterfactual} applied contract theory to analyze \textit{why} regulatory agencies might lean towards outcome-based contracts (such as the average treated outcome) rather than counterfactual assessment. 
Our model differs in both goal and setup: while they model the agent's effort levels, our model considers welfare effects when the agent has general control over all treatment assignments. 

\textbf{Strategic classification.} The framework of \textit{performative prediction} describes the distribution shifts and distortion in validity that result from agent strategy in response to ML-driven decision making \citep{perdomo2020performative,hardt2016strategic}.
Related lines of work in strategic classification have considered whether classifiers incentivize agent improvement \citep{ahmadi_et_al:LIPIcs.FORC.2022.3,kleinberg2020classifiers,bechavod2020causal,haghtalab2020maximizing}, drawing connections to causality \citep{miller2020strategic,shavit2020learning}. We consider a specific strategic structure to improve metrics when agents manipulate treatment policies.

\textbf{Accountability, auditing, and measurement.}
Historically, the mismatch between accountability metrics and welfare has been well documented across domains, from monetary policy to education \citep{goodhart1984problems,campbell1979assessing,muller2019tyranny,koretz2017,mau2019metric}. Extensive work in the social sciences has critically examined accountability and auditing practices \citep{power1994audit, strathern1997improving, hoskin1996awful, rothstein2008holding}. 
Our work builds on these qualitative insights by modeling a particular misalignment that occurs when the measured party has selection power over treatment allocations, also known as ``creaming'' \citep{lacireno2002creaming}.
Measurement theory has also given both qualitative and statistical tools for understanding the validity of measurements \citep{bandalos2018measurement}, with recent extensions to fair ML \citep{jacobs2021measurement}. 
\citet{guerdan2023counterfactual} apply counterfactual modeling to estimate the measurement error when proxy outcomes are used to guide treatment decisions. Complementing this focus on measurement validity, we model the treatment incentives induced when measurements are used to reward agents, and study the resulting welfare effects.

\section{Principal-Agent Model}\label{sec:model}

To analyze the incentives and welfare effects that arise from quality metrics, we define a principal-agent model where we will refer to the organization that collects the metrics and pays or ranks the providers as the \textit{principal}, and the healthcare providers as \textit{agents}. 
First, the principal specifies a function for rewarding the agents based on their actions and the observed outcomes. In response, the agent allocates treatment decisions with knowledge of this reward function.
Our focus is on designing reward functions with high utility for the principal, which corresponds to social welfare. 

\textbf{Formal model.}
We suppose that a single agent has access to independently and identically distributed samples of characteristics $X_i \in \mathcal{X}$ for $i\in\{1,...,n\}$ treatment units (referred to generally as the \textit{treatment population}). The agent assigns a binary treatment according to treatment rule $\pi: \mathcal{X} \to [0,1]$, and we use $T^{\pi}_i \in \{0,1\}$ to denote a Bernoulli random variable indicating the realizations of the treatment rule: $\pi(X_i) = P(T^{\pi}_i = 1 | X_i)$. 
This framework allows for $T^{\pi}$ to be either stochastic or deterministic.

To model patient outcomes from treatment, we apply the potential outcomes framework \citep{neyman1923applications,rubin1974estimating}. Let $Y_i(t)$ be the potential outcome if the patient had received treatment $t \in \{0,1\}$. Let $Y_i \in \mathbb{R}$ denote the observed outcome under treatment assignments $Y_i = Y_i(T_i^{\pi})$ under the Stable Unit Treatment Value Assumption (SUTVA) \citep{rubin1980randomization}, which implies the consistency and non-interference assumptions.
Let $\tau(X) = E[Y_i(1) - Y_i(0) | X]$ denote the conditional average treatment effect given covariates $X$, and let $\mu_t(X) = E[Y_i(t) | X]$ denote the conditional mean of the potential outcome under treatment $t$. 


We suppose that the principal observes $\{X_i, T_i^{\pi}\}_{i=1}^n$ (also denoted $\bf{X}, \bf{T}^{\pi}$) for all units, and $Y_i$ for all units for which $T_i^{\pi} = 1$ (denoted $\bf{Y}$). The principal must then choose a reward function $w: \mathcal{X}^n \times \{0,1\}^n \times \mathbb{R}^n \to \mathbb{R}$ with which to reward the agent.

Turning to our behavioral assumption, we consider an agent that is risk neutral and maximizes their expected reward. That is, the agent chooses the treatment rule $\pi$ from their set of possible treatment rules $\Pi$ that maximizes $E[w(\bf{X}, \bf{T}^{\pi}, \bf{Y})]$. Let $\pi^{w} \in \arg\max_{\pi \in \Pi} E[w(\bf{X}, \bf{T}^{\pi}, \bf{Y})]$ denote this best response. In maximizing this expected reward, we assume that the agent knows $\mu_t(x)$ for all $x \in \mathcal{X}$ and $t \in \{0,1\}$. Note that there is no explicit model of the agent's cost here to keep the focus on incentives induced by accountability metrics alone, and all budget constraints are contained in the agent's feasible set of treatment rules $\Pi$. We further discuss these assumptions and possible extensions in the Appendix.

\textbf{Total welfare and regret.}
Following existing work in policy evaluation \citep{manski2008identification}, for a given treatment rule $\pi$, we define the total effect of treatment on welfare as $V(\pi) = E[Y_i(T_i^{\pi}) - Y_i(0)]$.
$V(\pi)$ is the utility of treatment rule $\pi$ relative to the alternative outcomes under no treatment \citep{manski2008identification,athey2021policy}. Thus, maximizing $V(\pi)$ also maximizes total welfare $E[Y]$ compared to treating no one. As also done in policy learning \citep{athey2021policy}, to evaluate different quality measures chosen by the principal, we define the regret for a given policy $\pi$ compared to the best feasible policy in $\Pi$ to be $R(\pi) = \max_{\tilde{\pi} \in \Pi} V(\tilde{\pi}) - V(\pi)$.
We compare different choices of quality measures $w$ by analyzing the effect on total welfare for the induced treatment rule $\pi$. In other words,
the principal's goal is to choose a reward function $w$ that leads the agent to best respond with a treatment rule with minimal regret, $R(\pi^w)$. 


\section{Comparisons of Quality Metrics}\label{sec:comparisons}

Using a principal-agent model, we formally compare different choices of quality metrics $w$ by analyzing the regret for the induced treatment rule $\pi$ given by the agent's best response. Starting with the \textit{status quo} of rewarding the average treated outcome, we show that this incurs unbounded regret. We reduce this to two main problems with the metric: \textit{(i)} lack of accounting for untreated outcomes, and \textit{(ii)} rewarding an average effect instead of a total effect. Addressing each of these in turn, we show that the regret for rewarding the \textit{average treatment effect on the treated (ATT)} is bounded but can still be high, and that rewarding the \textit{total treatment effect} finally achieves zero regret. 


{\bf Status quo: average treated outcome.} 
We begin by analyzing regret under the current common quality measure that rewards the average treated outcome, as done by the mortality measures in the New York and Philidelphia health departments in the 1990s analyzed by \citep{dranove2003more}, and in many CMS quality measures \citep{mips}. \citet{lazzarini2022counterfactual} also refers to these as ``outcome contracts.'' The reward function for the \textit{average treated outcome (ATO)} takes the following form: 

\textbf{Reward Function 1 (ATO):} \\
$w_{\text{ATO}}(\mathbf{X}, \mathbf{T}^{\pi}, \mathbf{Y}) =\begin{cases}
    \frac{\sum_{i=1}^n Y_i T_i^{\pi}}{\sum_{i=1}^n T_i^{\pi}} & {\scriptstyle \sum_{i=1}^n T_i^{\pi} > 0},\\
    0 & \text{ otherwise. } 
     \end{cases}$\\
The agent's unconstrained best response is \\$\pi^{w_{\text{ATO}}}(x) = \Ind(x \in \arg\max_{x} \mu_1(x) \text{ and } \mu_1(x) > 0)$.
\begin{proposition}[ATO Regret]\label{prop:ato_regret}
    If the conditional mean untreated potential outcomes $\mu_0(x)$ are unbounded, then the regret for the reward function $w_{\text{ATO}}$ may be arbitrarily large.
\end{proposition}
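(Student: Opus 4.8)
The plan is to exhibit an explicit family of instances on which the regret $R(\pi^{w_{\text{ATO}}})$ grows without bound as a parameter is sent to infinity, exploiting exactly the two defects the paper has identified: the ATO reward ignores $\mu_0$ and it rewards an average rather than a total. Concretely, I would take $\mathcal{X} = \{a, b\}$ with $X_i$ equal to $a$ or $b$ with probability $1/2$ each, and with the agent's feasible set $\Pi$ being all treatment rules (the unconstrained case). Set $\mu_1(a) = 1$ and $\mu_1(b) = \epsilon$ for some small $\epsilon \in (0,1)$, so that $\arg\max_x \mu_1(x) = \{a\}$ and $\mu_1(a) > 0$; by the stated form of the best response, $\pi^{w_{\text{ATO}}}$ treats only units with $X_i = a$. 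Now choose $\mu_0(a) = M$ for a large constant $M > 0$ and $\mu_0(b) = 0$. Then the conditional treatment effects are $\tau(a) = 1 - M$ (very negative) and $\tau(b) = \epsilon > 0$.

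The second step is to compute the two quantities in the regret. Since $Y_i(T_i^\pi) - Y_i(0) = T_i^\pi (Y_i(1) - Y_i(0))$, we have $V(\pi) = E[T_i^\pi \tau(X_i)] = \tfrac12 \pi(a)\tau(a) + \tfrac12 \pi(b)\tau(b)$. For the ATO best response, $\pi^{w_{\text{ATO}}}(a) = 1$ and $\pi^{w_{\text{ATO}}}(b) = 0$, giving $V(\pi^{w_{\text{ATO}}}) = \tfrac12(1 - M)$, which is large and negative for large $M$. The best feasible policy treats only the subpopulation with positive effect, i.e.\ $\tilde\pi(a) = 0$, $\tilde\pi(b) = 1$, so $\max_{\tilde\pi \in \Pi} V(\tilde\pi) = \tfrac12 \epsilon$. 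Hence $R(\pi^{w_{\text{ATO}}}) = \tfrac12\epsilon - \tfrac12(1-M) = \tfrac12(M - 1 + \epsilon)$, which tends to $+\infty$ as $M \to \infty$. Since $M$ here is the value $\mu_0(a)$, this is precisely a setting where the conditional mean untreated outcomes are unbounded (over the family of instances), and the regret is arbitrarily large, proving the proposition. I would also remark that one can keep $\tau$ itself bounded in this construction, underscoring that the blowup is driven by the metric's failure to net out $\mu_0$, not by any pathology in the treatment effects.

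I would need to double-check two small points for rigor rather than any deep obstacle. First, that the cited form of the unconstrained best response $\pi^{w_{\text{ATO}}}$ indeed applies here: with $\mu_1(a) = 1$ strictly exceeding $\mu_1(b) = \epsilon$ and positive, the agent maximizing $E[w_{\text{ATO}}]$ will put all treatment mass on $X_i = a$ (treating any $b$ unit strictly lowers the treated-outcome average), so the best response is as claimed; if one is worried about ties or the $\sum T_i^\pi = 0$ branch, choosing $\mu_1(a)$ strictly largest and positive removes them. Second, that SUTVA/consistency let me write $V(\pi) = E[T_i^\pi\tau(X_i)]$ via iterated expectations conditioning on $X_i$ and using independence of $T_i^\pi$ from $Y_i(1), Y_i(0)$ given $X_i$ — this is immediate from the model setup. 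The only genuinely substantive modeling choice is to let $\mu_0$ scale with the instance (which the proposition explicitly licenses by positing "$\mu_0(x)$ unbounded"), so there is no real hard step; the argument is a clean one-parameter counterexample.
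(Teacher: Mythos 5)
Your proof is correct and follows essentially the same route as the paper's: a two-point covariate construction in which the agent, chasing the maximal treated outcome $\mu_1$, treats the covariate value whose untreated mean outcome $\mu_0$ is sent to infinity, so the induced policy's value is arbitrarily negative while the optimal policy's value stays bounded (this is exactly the paper's second failure mode; the paper additionally exhibits a symmetric first mode where a large negative $\mu_0$ creates a huge forgone treatment effect). One small correction: your closing remark that $\tau$ could be kept bounded in this construction is false --- since $R(\pi) \leq 2\,E[|\tau(X)|]$, unbounded regret forces unbounded $\tau$, and indeed $\tau(a) = 1 - M \to -\infty$ in your own example.
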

Intuitively, there are at least two failure modes that can lead to this unbounded regret. First, this reward leads the agent to ignore higher treatment effects of the patients with a lower treated outcome, such as sicker patients with higher mortality probability but more benefit from surgery. This matches the findings from \citet{dranove2003more}. Second, $w_{\text{ATO}}$ rewards agents that treat the patients with a higher treated outcome, even though treatment actually harms those patients, such as healthier patients who might incur more risks or side effects from treatment.

More broadly, there are two main problems with the construction of the ATO metric that lead to this unbounded regret. First, the lack of accounting for counterfactual outcomes leads to the two failure modes above. Second, the measure of an average outcome instead of a total outcome means that the agent will only treat the \textit{single} patient with the covariate value $x$ that \textit{maximizes} $\mu_1$. We next analyze several reasonable modifications to reward functions that address each of these problems.

{\bf Accounting for counterfactuals.}
When the principal operates with full information of the agent's selection covariates $X_i$, then regret can be reduced if they also have access to an unbiased estimator of the mean conditional untreated potential outcome.
\begin{assumption}
The principal accesses an estimator $\hat{\mu}_0(x)$ which is unbiased: $E[\hat{\mu}_0(x)] = \mu_0(x) \;\; \forall x \in \mathcal{X}$.
\end{assumption}

In general, obtaining an unbiased estimator $\hat{\mu}_0(x)$ can be difficult, but circumstances under which causal inference can be reliably conducted are well understood \citep{hernancausal}.  
As a concrete example, suppose the principal has access to an auxiliary data source $\{X'_j, T'_j, Y'_j\}_{j=1}^m$, with outcomes for untreated patients with $T'_j = 0$, collected from clinical trials or observational data. In addition to standard assumptions for identification of $\mu'_0(x) = E[Y_j'(0) | X_j' = x]$, the principal may use this data if the distribution of the conditional untreated potential outcome is also the same, $\mu'_0(x) = \mu_0(x)$, and the support of $X_j'$ covers the support of $X_i$. The difficulty of obtaining such a dataset is lessened by the fact that the principal does not require identification of the \textit{treated} potential outcome $\mu_1'(x)$ or the conditional average treatment effect $\tau'(x)$, so the treatment need not be the same. Access to additional scientific knowledge (in the form of, e.g., a more intricate structural model or functional form assumptions) can also aid in the estimation of $\mu_0(x)$. Medical research continues to develop patient risk scores using combinations of such methods \citep{sullivan2004presentation,jones1999risk} and evaluate their validity \citep{kaafarani2011validity,janssens2019validity}.

Under the perhaps optimistic assumption of access to an unbiased estimator $\hat{\mu}_0(x)$, this work focuses on effective ways for the principal to apply this estimator. It turns out that even given this unbiased estimate, principal-agent incentive misalignment can still occur, and there are still pitfalls with its downstream usage. We begin by showing how to effectively incorporate this estimator into the reward function. In later sections, we discuss effective usage in a ranking context and regret bounds under information asymmetry.

Given $\hat{\mu}_0(x)$, the principal can modify $w_{\text{ATO}}$ by directly subtracting an estimate of the untreated potential outcomes, and thus reward the \textit{average treatment effect on the treated (ATT)}:

\textbf{Reward Function 2 (ATT):}\\ 
$w_{\text{ATT}}(\mathbf{X}, \mathbf{T}^{\pi}, \mathbf{Y}) =\begin{cases}
    \frac{\sum_{i=1}^n (Y_i - \hat{\mu}_0(X_i))T_i^{\pi}}{\sum_{i=1}^n T_i^{\pi}} & {\scriptstyle \sum_{i=1}^n T_i^{\pi} > 0},\\
    0 & \text{ otherwise. } 
     \end{cases}$\\
The agent's unconstrained best response is \\$\pi^{w_{\text{ATT}}}(x) = \Ind(x \in \arg\max_{x} \tau(x) \text{ and } \tau(x) > 0)$. \\ 
The resulting regret is bounded, but still not zero.
\begin{proposition}[ATT Regret]\label{prop:att_regret}
    If $\hat{\mu}_0(x)$ is unbiased and $\pi$ is unconstrained, then the regret for the reward function $w_{\text{ATT}}$ is upper bounded as \\$R(\pi^{w_{\text{ATT}}}) \leq \max_{\pi \in \Pi} V(\pi)$.
\end{proposition}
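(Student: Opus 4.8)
The plan is to reduce the proposition to a single inequality: $V(\pi^{w_{\text{ATT}}}) \ge 0$. Granting this, the bound is immediate, since by definition $R(\pi^{w_{\text{ATT}}}) = \max_{\tilde{\pi} \in \Pi} V(\tilde{\pi}) - V(\pi^{w_{\text{ATT}}}) \le \max_{\tilde{\pi} \in \Pi} V(\tilde{\pi}) = \max_{\pi \in \Pi} V(\pi)$. So the whole content is showing that the ATT best response never makes total welfare negative.

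To see this, I would first record the standard identity $V(\pi) = E[\pi(X_i)\,\tau(X_i)]$ for any treatment rule $\pi$. This follows by conditioning on $X_i$: because $T_i^\pi$ is a $\mathrm{Bernoulli}(\pi(X_i))$ draw independent of the potential outcomes given $X_i$, we get $E[Y_i(T_i^\pi) - Y_i(0) \mid X_i] = \pi(X_i)\mu_1(X_i) + (1-\pi(X_i))\mu_0(X_i) - \mu_0(X_i) = \pi(X_i)\tau(X_i)$, and taking the outer expectation over $X_i$ yields the claim. Then I would plug in the stated unconstrained best response $\pi^{w_{\text{ATT}}}(x) = \Ind\big(x \in \arg\max_{x}\tau(x) \text{ and } \tau(x) > 0\big)$: the integrand $\pi^{w_{\text{ATT}}}(X_i)\tau(X_i)$ is $0$ wherever $\pi^{w_{\text{ATT}}}$ vanishes and equals $\tau(X_i) > 0$ wherever $\pi^{w_{\text{ATT}}}(X_i) = 1$, hence is pointwise nonnegative, so $V(\pi^{w_{\text{ATT}}}) \ge 0$.

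There is essentially no obstacle here. The only thing to note is the degenerate case where $\arg\max_x \tau(x)$ has $X$-probability zero (or is empty, if $\tau$ is unbounded above), in which case $\pi^{w_{\text{ATT}}} \equiv 0$ and $V(\pi^{w_{\text{ATT}}}) = 0$, still consistent with the bound. Alternatively one could observe that any best response weakly dominates the ``treat no one'' rule, whose ATT reward is $0$ by the ``else $0$'' convention; but converting that expected-reward inequality into a statement about $V$ is less direct than using the explicit best-response form, so I would stick with the latter. The conceptual point the proposition captures is that subtracting an unbiased estimate of the untreated potential outcome forces the agent to treat only units with $\tau(x) > 0$, which alone rules out negative welfare and therefore caps the regret at $\max_{\pi \in \Pi} V(\pi)$.
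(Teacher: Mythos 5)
Your proof is correct and follows essentially the same route as the paper's: both rest on the identity $V(\pi)=E[\pi(X_i)\tau(X_i)]$ together with the explicit best response $\pi^{w_{\text{ATT}}}(x)=\Ind(x\in\arg\max_x\tau(x)\text{ and }\tau(x)>0)$, the paper writing the regret out as $E[\tau(X_i)\Ind(\tau(X_i)>0\,\cap\,X_i\notin\arg\max_x\tau(x))]$ while you equivalently observe that the complementary term $V(\pi^{w_{\text{ATT}}})$ is nonnegative. The two arguments are the same decomposition read from opposite ends.
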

Now that the reward function accounts for the counterfactual untreated outcome, Proposition \ref{prop:att_regret} shows that the regret cannot exceed the maximum utility. This is notably not true of the $w_{\text{ATO}}$, where the regret can be arbitrarily high due to the agent sometimes treating those with a negative treatment effect. Still, while accounting for untreated potential outcomes avoids treating those with a negative treatment effect, the ATT as a reward function still suffers from misalignment with total welfare due to the fact that it rewards the \textit{average} effect rather than the \textit{total} effect. This means that in the best response, the agent only treats patients with the single value $x$ with maximum treatment effect $\tau(x)$.

{\bf Rewarding total effects.}
To expand the agent's treatments to cover \textit{all} individuals who would benefit, we modify the above reward function by simply removing the denominator, thus rewarding a \textit{total} effect instead of an \textit{average} effect. 
This yields a reward function for the \textit{total treatment effect (TT)}.

\textbf{Reward Function 3 (TT):} \\$\quad w_{\text{TT}}(\mathbf{X}, \mathbf{T}^{\pi}, \mathbf{Y}) = \sum_{i=1}^n (Y_i - \hat{\mu}_0(X_i)) T_i^{\pi}$.

The agent's unconstrained best response is \\$\pi^{w_{\text{TT}}}(x) = \Ind(\tau(x) > 0)$, which yields zero regret.
\begin{proposition}[TT Regret]\label{prop:tt_regret}
    If $\hat{\mu}_0(x)$ is unbiased, then the regret is $R(\pi^{w_{\text{TT}}}) = 0$.
\end{proposition}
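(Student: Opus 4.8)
The plan is to show that the agent's expected reward under $w_{\text{TT}}$ equals, up to the positive multiplicative constant $n$, the policy value $V(\pi)$ itself; once the principal's objective and the agent's objective coincide (as functions of $\pi$ on $\Pi$), every best response is automatically regret-minimizing, so $R(\pi^{w_{\text{TT}}})=0$.

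First I would compute $E[w_{\text{TT}}(\mathbf{X},\mathbf{T}^\pi,\mathbf{Y})]$. By linearity of expectation and the i.i.d. assumption this is $n\,E[(Y_1-\hat\mu_0(X_1))T_1^\pi]$. Conditioning on $X_1$, I would use three facts: (i) because $\pi$ depends only on covariates, $T_1^\pi$ is, conditionally on $X_1$, independent of the potential outcomes $(Y_1(0),Y_1(1))$ and of $\hat\mu_0(X_1)$; (ii) the identity $T_1^\pi Y_1 = T_1^\pi Y_1(T_1^\pi) = T_1^\pi Y_1(1)$, valid because the product vanishes on $\{T_1^\pi=0\}$; and (iii) the unbiasedness assumption $E[\hat\mu_0(X_1)\mid X_1]=\mu_0(X_1)$. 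Together these give
\[
E\!\left[(Y_1-\hat\mu_0(X_1))T_1^\pi \mid X_1\right] = \pi(X_1)\mu_1(X_1) - \pi(X_1)\mu_0(X_1) = \pi(X_1)\tau(X_1),
\]
so $E[w_{\text{TT}}] = n\,E[\pi(X)\tau(X)]$.

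Next I would establish $V(\pi) = E[\pi(X)\tau(X)]$ by the analogous pointwise computation: writing $Y_i(T_i^\pi) - Y_i(0) = T_i^\pi\bigl(Y_i(1)-Y_i(0)\bigr)$ and taking conditional expectation given $X_i$ yields $\pi(X_i)\tau(X_i)$, hence $V(\pi)=E[\pi(X)\tau(X)]$. Comparing, $E[w_{\text{TT}}] = n\,V(\pi)$ for every $\pi\in\Pi$, so $\arg\max_{\pi\in\Pi}E[w_{\text{TT}}] = \arg\max_{\pi\in\Pi}V(\pi)$. Therefore any best response $\pi^{w_{\text{TT}}}$ satisfies $V(\pi^{w_{\text{TT}}}) = \max_{\tilde\pi\in\Pi}V(\tilde\pi)$, i.e.\ $R(\pi^{w_{\text{TT}}})=0$. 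Specializing to the unconstrained case, maximizing $E[\pi(X)\tau(X)]$ over all measurable $\pi$ is a pointwise problem whose solution is $\pi(x)=\Ind(\tau(x)>0)$, recovering the stated best response.

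The algebra is routine; the only step requiring care is the handling of $\hat\mu_0$. The hypothesis $E[\hat\mu_0(X)]=\mu_0(X)$ must be read as conditional unbiasedness given $X$ together with enough independence that $E[\hat\mu_0(X_1)T_1^\pi\mid X_1] = \pi(X_1)\,E[\hat\mu_0(X_1)\mid X_1]$. This is immediate when $\hat\mu_0$ is built from external or historical data independent of the treatment population, and it is where the positivity restriction on $\Pi$ (flagged just before Reward Function 2) is needed if instead $\hat\mu_0$ is estimated from the untreated units $\{i:T_i^\pi=0\}$. I would state this measurability/independence condition explicitly at the outset so that the conditional-expectation factorization above is justified; everything else follows by linearity and the potential-outcomes identities.
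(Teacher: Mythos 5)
Your proof is correct and follows essentially the same route as the paper: both establish the key identity $E[w_{\text{TT}}(\mathbf{X},\mathbf{T}^{\pi},\mathbf{Y})] = n\,E[\pi(X)\tau(X)] = n\,V(\pi)$ (the paper via conditioning on $T_i^{\pi}=1$ and Bayes' theorem, you via direct linearity and conditioning on $X$, which is the same computation) and then conclude that the agent's best response maximizes $V$ over $\Pi$. Your explicit flagging of the independence needed to factor $E[\hat{\mu}_0(X_1)T_1^{\pi}\mid X_1]$ is a point the paper's proof leaves implicit, but the argument is otherwise identical.
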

The regret is zero regardless of the feasible set $\Pi$. Thus, with two modifications to the status quo $w_{\text{ATO}}$, a quality measure $w_{\text{TT}}$ can be constructed that is aligned with total welfare, as long as the principal has access to an unbiased estimator $\hat{\mu}_0(x)$. 

\section{Ranking with Multiple Agents}\label{sec:ranking}
In Section \ref{sec:comparisons} we've shown that rewarding the total treatment effect leads the agent to maximize total welfare. While this theory applies cleanly in isolation, in real systems, quality measures are often further employed to \textit{rank} hospitals \citep{nyt,calhospitalcompare,smith2017dissecting}. It turns out that even $w_{\text{TT}}$ can exhibit problems as a ranking measure when different hospitals have different treatment population sizes and distributions.
To apply reward functions as ranking measures, we show that the total treatment effect can be modified to a more general form that allows for reweighting by covariates $X$ while still preserving incentive alignment. By reweighting the reward function relative to a reference covariate distribution, we show that the resulting quality measure leads to better hospitals receiving better rankings. 

Notationally, discussion of ranking requires extending our setting to account for multiple agents. Suppose each agent $k \in \{1,...,K\}$ observes its own sample of $n_k$ patients with covariates drawn i.i.d.\ from distribution $P_{X^{k}}$ with the same support $\mathcal{X}$. Each agent has different treatment effects, denoted by $\mu^{k}_t(x)$ and $\tau^{k}(x)$. For rankings to be meaningful, we assume that the untreated potential outcome is the same for all $k$: $\mu^{k}_0(x) = \mu_0(x)$ for all $k$. In short, one provider not treating a patient is equivalent to another provider not treating the same patient. 
Extending the principal's action space to the multi-agent ranking setting, the principal publishes score functions $\{w_k\}_{k=1}^K$, and each agent $k$ best responds individually with their own treatment policy $\pi_k$. The agents are then ranked from highest to lowest score function values. We expand this notation in the Appendix. 

\paragraph{Defining desirable ranking properties.}

For any regulatory agency or potential patient that would utilize these rankings, a clear desirable property would be that \textit{better hospitals should be ranked higher}. From the agents' perspectives, this property may also make the scores feel more ``fair.'' We formally define this property with two different degrees of strictness for the meaning of ``better.''

First, we define ``better'' as an agent having \textit{uniformly} higher treatment effects for all possible covariate values $x$, such that any patient would be better off being treated by this agent.
\begin{definition}[Uniform Rank Preservation]\label{def:uniform_ranking}
    A set of score functions $\{w_k\}_{k=1}^K$ preserves treatment effect ordering uniformly over $\mathcal{X}$ if for all $j,k \in \{1,...,K\}$, $\tau^{j}(x) \geq \tau^{k}(x)$ $\forall x \in \mathcal{X}$ $\implies$   \\
    $\underset{\pi_j \in \Pi_j}{\max} E[w_j(\mathbf{X}^{k},\mathbf{T}^{\pi_j}, \mathbf{Y}^{j})] \geq \underset{\pi_k \in \Pi_k}{\max} E[w_k(\mathbf{X}^{k},\mathbf{T}^{\pi_k}, \mathbf{Y}^{k})]$
\end{definition}
A more relaxed version of the uniform rank preservation requirement is one where an agent is ``better'' if it has higher treatment effects on \textit{average} over a reference covariate population $P_{X_0}$.
\begin{definition}[Relative Rank Preservation]\label{def:relative_ranking}
    A set of score functions $\{w_k\}_{k=1}^K$ preserves treatment effect ordering relative to a reference population $P_{X_0}$ with support $\mathcal{X}$ if for all $j,k \in \{1,...,K\}$, \\ $E[\tau^{j}(X_0)] \geq E[\tau^{k}(X_0)] \implies $ \\$\underset{\pi_j \in \Pi_j}{\max} E[w_j(\mathbf{X}^{k},\mathbf{T}^{\pi_j}, \mathbf{Y}^{j})] \geq \underset{\pi_k \in \Pi_k}{\max} E[w_k(\mathbf{X}^{k},\mathbf{T}^{\pi_k}, \mathbf{Y}^{k})]$
\end{definition}
This relative definition requires explicitly defining a reference population, which calls for careful consideration of policy goals and societal needs. Any set of scores will implicitly prioritize some populations, and calling attention to this as an explicit part of the ranking properties induced by quality measures can help policymakers more intentionally align their choices with policy goals.

\paragraph{Satisfying desirable ranking properties.}

We now formally show that $w_{\text{TT}}$ as written does not directly satisfy these ranking properties.


\begin{proposition}\label{prop:tt_ranking_violated}
    If $w_k$ is directly given by $w_{\text{TT}}$ for each agent $k$,
    then both ranking properties in Definitions \ref{def:uniform_ranking} and \ref{def:relative_ranking} will be violated.
\end{proposition}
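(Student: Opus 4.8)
The plan is to refute both ranking properties at once by an explicit instance with $K=2$ agents. The first step is to evaluate $w_{\text{TT}}$ at the agents' best responses in closed form. Take each feasible set $\Pi_k$ to be all treatment rules $\pi:\mathcal{X}\to[0,1]$, and take $\hat{\mu}_0$ to be an unbiased estimator computed from auxiliary data independent of the treatment populations, so $E[\hat{\mu}_0(X)]=\mu_0(X)$. Conditioning on $X_i$, using SUTVA ($Y_i=Y_i(T_i^{\pi_k})$), and using that $T_i^{\pi_k}$ is a $\pi_k(X_i)$-Bernoulli draw independent of the potential outcomes given $X_i$, one gets $E[(Y_i-\hat{\mu}_0(X_i))T_i^{\pi_k}\mid X_i]=\pi_k(X_i)\,\tau^{(k)}(X_i)$, hence (with each agent's score evaluated on its own sampled population of size $n_k$)
\[
\max_{\pi_k\in\Pi_k} E\!\left[w_{\text{TT}}(\mathbf{X}^{(k)},\mathbf{T}^{\pi_k},\mathbf{Y}^{(k)})\right] \;=\; n_k\, E_{X\sim P_{X^{(k)}}}\!\big[\max\{\tau^{(k)}(X),0\}\big],
\]
the maximizer being $\pi_k(x)=\Ind(\tau^{(k)}(x)>0)$ as in Section~\ref{sec:comparisons}. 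The point to exploit is that this value depends on the population size $n_k$ and on the covariate law $P_{X^{(k)}}$, not on the function $\tau^{(k)}$ alone.

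\textbf{The counterexample.} Set $\mu_0\equiv 0$ and take constant effects $\mu_1^{(1)}\equiv\tau^{(1)}\equiv 1$ and $\mu_1^{(2)}\equiv\tau^{(2)}\equiv \tfrac12$, identical covariate laws $P_{X^{(1)}}=P_{X^{(2)}}$, and unequal sizes $n_1=1$, $n_2=3$. Then $\tau^{(1)}(x)\ge\tau^{(2)}(x)$ for every $x\in\mathcal{X}$, so the premise of Definition~\ref{def:uniform_ranking} holds for $(j,k)=(1,2)$, and $E[\tau^{(1)}(X_0)]=1\ge\tfrac12=E[\tau^{(2)}(X_0)]$ for every reference law $P_{X_0}$, so the premise of Definition~\ref{def:relative_ranking} also holds. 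But by the first step the best-response scores are $n_1\cdot 1=1$ for agent~$1$ and $n_2\cdot\tfrac12=\tfrac32$ for agent~$2$, so agent~$1$ is ranked strictly below agent~$2$ and the required conclusion fails in both definitions. To highlight that the covariate distribution is a second, independent obstruction (the one the subsequent reweighting construction will remove), I would also record the variant with $n_1=n_2=n$ on $\mathcal{X}=\{a,b\}$: put $\tau^{(1)}(a)=\tau^{(2)}(a)=10$, $\tau^{(1)}(b)=1$, $\tau^{(2)}(b)=0$, with $P_{X^{(1)}}$ concentrated at $b$ and $P_{X^{(2)}}$ concentrated at $a$; then $\tau^{(1)}\ge\tau^{(2)}$ pointwise (hence also in $X_0$-average for any $P_{X_0}$), yet the scores are $n$ versus $10n$.

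\textbf{Main obstacle.} Essentially all of the work is in the closed-form computation: one must carry the unbiasedness of $\hat{\mu}_0$ and SUTVA through the tower property to reach the clean expression $n_k\,E_{X\sim P_{X^{(k)}}}[\max\{\tau^{(k)}(X),0\}]$ and then confirm the pointwise maximizer; taking each $\Pi_k$ unconstrained is what forecloses the spurious possibility that a restrictive feasible set could accidentally reinstate the ordering. Once that is in hand, the counterexample itself is immediate arithmetic, so I expect no further difficulty.
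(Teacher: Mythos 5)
Your proposal is correct and follows essentially the same route as the paper's proof: both reduce the best-response score to the closed form $n_k\,E_{X\sim P_{X^{(k)}}}[\tau^{(k)}(X)\Ind(\tau^{(k)}(X)>0)]$ and then exploit its dependence on the population size $n_k$ to reverse the ranking of a uniformly better agent, which simultaneously violates both definitions. Your concrete numbers ($n_1=1$, $n_2=3$) and the extra covariate-distribution variant are just explicit instantiations of the paper's generic choice of $n_j,n_k$ and its informal remark about differing $P_{X^{(k)}}$.
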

Intuitively, this breaks down because $w_{\text{TT}}$ is subject to two auxiliary effects on top of the treatment effects. First, agents with a larger treatment population $n_k$ (e.g., larger hospitals) will have higher rankings even with the same conditional average treatment effects. Second, agents with different distributions of covariates $P_{X^{k}}$ but the same conditional average treatment effects will also end up with different rankings if some covariate values are ``easier'' to treat than others. 

To mitigate these auxiliary effects, we show that there exists a general modular form of $w_k$ that preserves the zero regret property for individual agent best responses. This general form can then be tailored to satisfy desirable ranking properties.
In particular, consider the following weighted total treatment effect reward function:

\textbf{Reward Function 4 (Weighted TT):} \\
 $\quad w_{\text{TT}}^g(\mathbf{X}, \mathbf{T}^{\pi}, \mathbf{Y}) =  \sum_{i=1}^n (Y_i - \hat{\mu}_0(X_i))T_i^{\pi} g(X_i)$.

Any reward function in this family induces the desired agent best response.
\begin{theorem}[Incentive Alignment]\label{thm:reweighting}
    Suppose $\hat{\mu}_0(x)$ is unbiased, and $\pi$ is unconstrained. For any function $g: \mathcal{X} \to \mathbb{R}^+$, $w_{TT}^g$ yields an agent best response with regret $R(\pi^{w_{\text{TT}}^g}) = 0$. 
\end{theorem}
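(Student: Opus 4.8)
The plan is to reduce the claim to the zero-regret property of $w_{\text{TT}}$ already established in Proposition \ref{prop:tt_regret}, by showing that inserting a strictly positive weight $g$ leaves the agent's best response unchanged. The core step is to evaluate the expected reward $E\big[w_{\text{TT}}^g(\mathbf{X}, \mathbf{T}^{\pi}, \mathbf{Y})\big]$ as a pointwise integral against $P_X$. By linearity of expectation, $E[w_{\text{TT}}^g] = \sum_{i=1}^n E\big[(Y_i - \hat\mu_0(X_i))\,T_i^{\pi}\,g(X_i)\big]$; conditioning each summand on $X_i$ and using (a) consistency, so that $Y_i T_i^{\pi} = Y_i(1) T_i^{\pi}$, (b) the fact that $T_i^{\pi}$ is $\mathrm{Bernoulli}(\pi(X_i))$ drawn independently of $(Y_i(0), Y_i(1))$ and of $\hat\mu_0$ given $X_i$, and (c) the hypothesis $E[\hat\mu_0(X_i)\mid X_i] = \mu_0(X_i)$, each summand equals $E\big[g(X_i)\,\pi(X_i)\,(\mu_1(X_i) - \mu_0(X_i))\big] = E\big[g(X_i)\,\pi(X_i)\,\tau(X_i)\big]$. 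By the i.i.d. assumption this gives $E[w_{\text{TT}}^g] = n\, E[g(X)\,\pi(X)\,\tau(X)]$. This is exactly the computation in the proof of Proposition \ref{prop:tt_regret} with the extra factor $g(X)$ carried along, so I would cite that derivation rather than reproduce it.

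With $\Pi$ unconstrained, the agent maximizes $E[g(X)\,\pi(X)\,\tau(X)]$ over all measurable $\pi:\mathcal{X}\to[0,1]$, which is done pointwise in $x$. Since $g(x) > 0$, the sign of $g(x)\tau(x)$ equals the sign of $\tau(x)$; hence the pointwise optimum takes $\pi(x) = 1$ on $\{\tau(x) > 0\}$ and $\pi(x) = 0$ on $\{\tau(x) < 0\}$, with $\pi(x)$ arbitrary where $\tau(x) = 0$. Thus $\pi^{w_{\text{TT}}^g}(x) = \Ind(\tau(x) > 0)$ is a best response and every best response agrees with it $P_X$-almost everywhere off $\{\tau = 0\}$ — the same best-response set as for $w_{\text{TT}}$. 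Finally, since $V(\pi) = E[(Y_i(1) - Y_i(0))T_i^{\pi}] = E[\tau(X)\pi(X)]$ by the same conditioning argument, every such $\pi$ attains $V(\pi) = E[\max\{\tau(X),0\}] = \max_{\tilde\pi\in\Pi} V(\tilde\pi)$, i.e. $R(\pi^{w_{\text{TT}}^g}) = 0$; equivalently, one may invoke Proposition \ref{prop:tt_regret} directly since the induced best response coincides with $\pi^{w_{\text{TT}}}$.

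I do not expect a substantial obstacle here: the argument is the Proposition \ref{prop:tt_regret} computation plus the observation that multiplying the integrand by a strictly positive function preserves its sign, hence its pointwise maximizer. The two points needing a sentence of care — both inherited from the earlier results rather than new — are (i) the conditional independence of $\hat\mu_0$ from the treatment realizations given covariates, which is the reading under which the stated unbiasedness hypothesis suffices to pass the expectation through, and (ii) non-uniqueness of the best response on $\{\tau = 0\}$, where the reward is insensitive to $\pi$; there every selection yields the same (maximal) welfare, so zero regret holds for whichever best response the agent adopts. One may also note in passing the implicit integrability conditions on $g$ and $\tau$ ensuring $E[g(X)\pi(X)\tau(X)]$ is well defined.
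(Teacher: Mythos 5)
Your proposal is correct and follows essentially the same route as the paper: both reduce $E[w_{\text{TT}}^g]$ to $n\,E[g(X)\,\tau(X)\,\pi(X)]$ (the paper conditions on $T_i^{\pi}=1$ and then applies Bayes' theorem, whereas you condition on $X_i$ directly — the same computation in a different order) and then observe that positivity of $g$ leaves the pointwise maximizer, and hence the welfare-optimal best response from Lemma \ref{lem:optimal_pi}, unchanged. Your explicit handling of non-uniqueness on $\{\tau = 0\}$ is a small point of extra care the paper omits, but it does not change the argument.
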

Theorem \ref{thm:reweighting} shows that reweighting the reward function by any function of the covariates $X$ does not hurt incentive alignment. Thus, the principal may choose functions $g_k$ for each agent to achieve desirable ranking properties. Specifically, setting $g_k$ to reweight each agent's covariate distribution to the reference distribution $P_{X_0}$ satisfies both ranking properties in Definitions \ref{def:uniform_ranking} and \ref{def:relative_ranking}.


\begin{theorem}[Ranking Desiderata Satisfied]\label{thm:ranking_reweighted}
    Let $P_{X^{k}}$ be absolutely continuous with respect to $P_{X_0}$, and let $g_k = \frac{1}{n_k}\frac{dP_{X_0}}{dP_{X^{k}}}$ be the normalized Radon–Nikodym derivative of the reference distribution $P_{X_0}$ with respect to agent $k$'s covariate distribution $P_{X^{k}}$. Then setting $w_k$ to be $w_{\text{TT}}^{g_k}$ for agent $k$'s treatment population satisfies both ranking properties in Definitions \ref{def:uniform_ranking} and \ref{def:relative_ranking} as long as $\Pi_k$ is unconstrained and treatment effects are nonnegative, $\tau^{k}(x) \geq 0$, for all $k \in \{1,...,K\}$.
\end{theorem}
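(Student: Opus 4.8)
The plan is to compute the maximized expected reward for each agent under the reweighted scheme and show it equals (a nonnegative scaling of) the average treatment effect over the reference population, so that the ranking inequalities follow immediately. First I would invoke Theorem~\ref{thm:reweighting} to conclude that, since $g_k$ maps into $\mathbb{R}^+$ (positivity of the Radon--Nikodym derivative holds because $P_{X^{(k)}}$ is absolutely continuous with respect to $P_{X_0}$ and both share support $\mathcal{X}$), the best response of agent $k$ under $w_{\text{TT}}^{g_k}$ is the zero-regret policy $\pi^{w_{\text{TT}}^{g_k}}(x) = \Ind(\tau^{(k)}(x) > 0)$. Because treatment effects are assumed nonnegative, this policy treats (almost) everyone, $T_i^{\pi} = 1$ with probability $1$ wherever $\tau^{(k)} > 0$ and the contribution is zero elsewhere, so effectively the best response attains $\mathbb{E}[\tau^{(k)}(X)\,g(X)\,\cdot(\text{count})]$-type value.

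The core computation: using $E[\hat\mu_0(x)] = \mu_0(x)$, $\mu^{(k)}_0 = \mu_0$, linearity of expectation, the i.i.d. structure of the $n_k$ patients drawn from $P_{X^{(k)}}$, and the tower property conditioning on $X_i$, I would show
\[
\max_{\pi_k \in \Pi_k} E\!\left[w_{\text{TT}}^{g_k}(\mathbf{X}^{(k)},\mathbf{T}^{\pi_k}, \mathbf{Y}^{(k)})\right] = n_k \, E_{X \sim P_{X^{(k)}}}\!\left[\tau^{(k)}(X)\, g_k(X)\right].
\]
Substituting $g_k = \frac{1}{n_k}\frac{dP_{X_0}}{dP_{X^{(k)}}}$ and applying the change-of-measure identity $E_{X \sim P_{X^{(k)}}}\big[h(X)\frac{dP_{X_0}}{dP_{X^{(k)}}}(X)\big] = E_{X\sim P_{X_0}}[h(X)]$ with $h = \tau^{(k)}$, the $n_k$ cancels and the maximized reward becomes exactly $E[\tau^{(k)}(X_0)]$. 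Then Relative Rank Preservation (Definition~\ref{def:relative_ranking}) is immediate: $E[\tau^{(j)}(X_0)] \geq E[\tau^{(k)}(X_0)]$ is literally the comparison of the two maximized rewards. For Uniform Rank Preservation (Definition~\ref{def:uniform_ranking}), note that $\tau^{(j)}(x) \geq \tau^{(k)}(x)$ for all $x$ implies $E[\tau^{(j)}(X_0)] \geq E[\tau^{(k)}(X_0)]$ by monotonicity of expectation, so the same conclusion follows a fortiori.

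I would be careful about two technical points. First, the change-of-measure step requires that $\tau^{(k)}$ is integrable against $P_{X_0}$; I would note this is implicitly assumed (finite treatment effects / welfare, as in the earlier propositions) or add it as a mild regularity hypothesis. Second, I should double-check the ``best response treats everyone'' claim: with $\tau^{(k)}(x) \geq 0$ the zero-regret policy sets $\pi(x) = \Ind(\tau^{(k)}(x) > 0)$, and on the measure-zero-contribution set $\{\tau^{(k)} = 0\}$ the reward is unaffected regardless of treatment, so the maximized value is indeed $n_k E[\tau^{(k)}(X)g_k(X)]$; I would state this cleanly rather than belabor it.

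The main obstacle is not any single hard inequality but rather bookkeeping: carefully justifying that the expectation of the reward over the random $\mathbf{X}^{(k)}$, random Bernoulli treatments $\mathbf{T}^{\pi}$, and random outcomes $\mathbf{Y}^{(k)}$ collapses to the clean per-unit expectation $E[\tau^{(k)}(X)g_k(X)]$, using SUTVA/consistency so that $E[Y_i T_i^\pi \mid X_i] = \mu^{(k)}_1(X_i)\pi(X_i)$ together with $E[\hat\mu_0(X_i)T_i^\pi \mid X_i] = \mu_0(X_i)\pi(X_i)$ (here using independence of the estimator's noise, or its unbiasedness conditional on $X_i$, from $T_i^\pi$ — a point worth flagging). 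Once that reduction is in place, the change of measure and the two monotonicity arguments are routine.
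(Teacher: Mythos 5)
Your proposal is correct and follows essentially the same route as the paper: both reduce the maximized expected reward via the Theorem~\ref{thm:reweighting} computation to $n_k E[\tau^{(k)}(X)\pi(X)g_k(X)]$, apply the change of measure to land on $E[\tau^{(k)}(X_0)\Ind(\tau^{(k)}(X_0)>0)]$ (which equals $E[\tau^{(k)}(X_0)]$ under the nonnegativity hypothesis), and then read off both ranking properties from monotonicity of expectation. The only cosmetic difference is that the paper invokes nonnegativity only for the relative property while you use it for both, which is harmless given the theorem's hypotheses.
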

Theorem \ref{thm:ranking_reweighted} shows that a simple distributional reweighting can achieve the desirable ranking properties that preserve treatment effect ordering both uniformly over all $\mathcal{X}$ and relatively to some reference population $P_{X_0}$. In practice, if the exact Radon–Nikodym derivatives are not known, the importance sampling literature contains many techniques for estimating expectations with distributional reweighting \citep{mcbook}.
Overall, this simple reweighting modification of the total treatment effect score function addresses important policy considerations when quality measures are used for ranking.
\section{Information Asymmetry}\label{sec:info_asymmetry}

So far, the incentive alignment and ranking properties have relied on the assumption that both principal and agent operate with the same covariate information $X$. In practice, \citet{dranove2003more} remark that ``providers may be able to improve their ranking by selecting patients on the basis of characteristics that are unobservable to the analysts but predictive of good outcomes.'' 
Under such information asymmetry, we show that even the optimistic assumption of an unbiased estimator $\hat{\mu}_0(x)$ is not enough to guarantee zero regret. We both upper and lower bound regret in terms of the additional \textit{heterogeneity} observed by the agent. 

Suppose the agent observes additional covariates $U_i \in \mathcal{U}$, and selects a treatment rule $\pi: \mathcal{X}, \mathcal{U} \to [0,1]$. Suppose the principal still observes only $\{X_i, T_i^{\pi}, Y_i\}_{i=1}^n$, and chooses a reward function $w(\mathbf{X}, \mathbf{T}^{\pi}, \mathbf{Y})$ that does not depend on $U_i$. Let $\mu_t(X,U) = E[Y_i(t) | X,U]$ and $\tau(X,U) = E[Y_i(1) - Y_i(0) | X, U]$. The utility and regret are still defined as in Section \ref{sec:model}.


Applying the optimal reward function from the full information setting in Section \ref{sec:comparisons}, suppose the principal rewards the agent with the total treatment effect $w_{\text{TT}}$. As in Section \ref{sec:comparisons}, suppose the principal applies an unbiased estimator $\hat{\mu}_0(X)$ of the untreated potential outcome conditional on $X$, with $E[\hat{\mu}_0(x)] = \mu_0(x)$ $\forall x \in \mathcal{X}$. Note that the principal identifies $\mu_0(X)$, but \textit{not} $\mu_0(X, U)$. 
We show that the regret is bounded if the effect of the agent's private information $U_i$ on the untreated potential outcomes is bounded. 


\begin{assumption}[Bounded Heteogeneity]\label{assumption:unobserved-confounding-avg-historical} 
The effect of $U_i$ on the conditional untreated potential outcome is bounded as $E[ | \mu_0(X_i) - \mu_0(X_i,U_i)| ] \leq \gamma_{\text{marg}}$.
\end{assumption}

Note that $\mu_0(X_i) = E[\mu_0(X_i, U_i) | X_i]$. Thus, the heterogeneity bound is akin to bounding a statistical error between the conditional untreated potential outcome $\mu_0(X_i,U_i)$ known to the agent, and ``marginal'' $\mu_0(X_i)$ estimated by the principal. We can both upper and lower bound the regret in terms of this error:

\begin{theorem}[Regret With Information Asymmetry]\label{thm:historical_regret}
    Suppose $\hat{\mu}_0(x)$ is unbiased. Under Assumption \ref{assumption:unobserved-confounding-avg-historical}, the regret is upper bounded as $R(\pi^{w_{\text{TT}}}) \leq 2\gamma_{\text{marg}}$.
\end{theorem}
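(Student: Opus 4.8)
The plan is to compare the agent's best response $\pi^{w_{\text{TT}}}$ under information asymmetry against the optimal unconstrained policy, and control the difference by the marginal error $\gamma_{\text{marg}}$. First I would determine the agent's best response. Since the agent observes $(X,U)$ and maximizes $E[w_{\text{TT}}] = E\big[\sum_i (Y_i - \hat\mu_0(X_i))T_i^\pi\big]$, pushing the expectation inside gives a per-unit contribution of $\E[(Y_i - \hat\mu_0(X_i))\mid X_i,U_i]\,\pi(X_i,U_i) = (\mu_{T}(X_i,U_i) - \mu_0(X_i))$ weighted by $\pi$; more carefully, conditional on treating, the expected reward increment is $\mu_1(X,U) - \mu_0(X)$, so the unconstrained best response is $\pi^{w_{\text{TT}}}(x,u) = \Ind\big(\mu_1(x,u) - \mu_0(x) > 0\big)$. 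Note this is \emph{not} the welfare-optimal rule, which is $\pi^*(x,u) = \Ind(\tau(x,u) > 0) = \Ind(\mu_1(x,u) - \mu_0(x,u) > 0)$: the agent thresholds against the coarse $\mu_0(x)$ rather than the true $\mu_0(x,u)$.

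Next I would write the regret as an integral over the disagreement region. We have $V(\pi^*) = \E[\tau(X,U)\,\Ind(\tau(X,U)>0)]$ and $V(\pi^{w_{\text{TT}}}) = \E[\tau(X,U)\,\Ind(\mu_1(X,U) - \mu_0(X) > 0)]$, so
\[
R(\pi^{w_{\text{TT}}}) = V(\pi^*) - V(\pi^{w_{\text{TT}}}) = \E\Big[\tau(X,U)\big(\Ind(\tau(X,U)>0) - \Ind(\tau(X,U) > \delta(X,U))\big)\Big],
\]
where $\delta(X,U) := \mu_0(X,U) - \mu_0(X)$ is the (signed) marginalization error, since $\mu_1(X,U) - \mu_0(X) = \tau(X,U) + \delta(X,U) > 0 \iff \tau(X,U) > -\delta(X,U)$; I'd set things up with the correct sign. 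The indicators disagree only when $\tau(X,U)$ lies between $0$ and $-\delta(X,U)$, and on that event $|\tau(X,U)| \le |\delta(X,U)|$. Therefore the integrand is bounded pointwise by $|\delta(X,U)|$ on the disagreement set and zero elsewhere, giving $R(\pi^{w_{\text{TT}}}) \le \E[|\delta(X,U)|] = \E[|\mu_0(X) - \mu_0(X,U)|] \le \gamma_{\text{marg}}$.

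This already yields a bound of $\gamma_{\text{marg}}$; the factor of $2$ in the statement presumably comes from not assuming $\hat\mu_0$ equals $\mu_0$ pointwise but only in expectation, so a second $\gamma_{\text{marg}}$-type slack enters when relating the agent's response under the \emph{estimated} $\hat\mu_0(X)$ to the response under the true $\mu_0(X)$ — I would handle this by a triangle-inequality step splitting the threshold error into (estimation of $\mu_0(X)$) plus (marginalization gap $\delta$), each contributing at most $\gamma_{\text{marg}}$ in expectation after invoking unbiasedness and iterated expectations. The main obstacle I anticipate is the bookkeeping on signs and on which randomness the expectations are taken over (the agent optimizes an \emph{expected} reward, so $\hat\mu_0$'s randomness is averaged out, but one must be careful that the best-response indicator is then deterministic in $(x,u)$), together with verifying that the "disagreement region" argument still gives a clean pointwise domination once the estimation slack is folded in. Everything else is the same policy-learning regret decomposition used implicitly in Propositions~\ref{prop:att_regret} and~\ref{prop:tt_regret}.
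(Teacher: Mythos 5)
Your argument is correct for unconstrained $\Pi$, but it takes a genuinely different route from the paper's, and it is worth comparing the two. The paper does not compute the best response at all: it sets $\hat{V}(\pi) := \tfrac{1}{n}E[w_{\text{TT}}(\mathbf{X},\mathbf{T}^{\pi},\mathbf{Y})]$, observes that $\pi^{w_{\text{TT}}}$ maximizes $\hat{V}$ over $\Pi$ while $\pi^{*}$ maximizes $V$, and invokes the standard two-sided surrogate-value decomposition $R(\pi^{w_{\text{TT}}}) \leq 2\max_{\pi\in\Pi}\lvert \hat{V}(\pi)-V(\pi)\rvert$; the tower property (using unbiasedness of $\hat{\mu}_0$) and Jensen then give $2\,E[\lvert \mu_0(X)-\mu_0(X,U)\rvert] \leq 2\gamma_{\text{marg}}$. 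That argument is agnostic to the feasible set, so it covers constrained $\Pi$, which your pointwise approach cannot: identifying $\pi^{w_{\text{TT}}}(x,u)=\Ind(\mu_1(x,u)-\mu_0(x)>0)$ requires $\Pi$ to contain all threshold rules. In exchange, your disagreement-region analysis is sharper where it applies: on the event where the two indicators differ, $\lvert\tau(X,U)\rvert$ is sandwiched by $\lvert\delta(X,U)\rvert=\lvert\mu_0(X,U)-\mu_0(X)\rvert$, so you get $R(\pi^{w_{\text{TT}}})\leq\gamma_{\text{marg}}$ with constant $1$, which exactly matches the lower bound in Proposition~\ref{prop:historical_regret_tight} and shows the paper's factor of $2$ is not needed in the unconstrained case. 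One correction: your speculation that the factor of $2$ accounts for the estimation slack of $\hat{\mu}_0$ is off the mark. As you yourself note, the agent optimizes an expected reward and $E[\hat{\mu}_0(X_i)\mid X_i,U_i]=\mu_0(X_i)$ (the estimator comes from auxiliary data independent of the treatment sample), so that randomness integrates out exactly and contributes no slack; the $2$ in the paper comes purely from bounding $\lvert V-\hat{V}\rvert$ at both $\pi^{*}$ and $\pi^{w_{\text{TT}}}$ in the uniform-deviation decomposition. No triangle-inequality step splitting estimation error from marginalization error is needed.
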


This upper bound is tight up to a linear constant.
\begin{proposition}\label{prop:historical_regret_tight}
    $\forall \epsilon > 0$, there exist distributions of $X_i, U_i, Y_i(0), Y_i(1)$ 
    wherein $R(\pi^{w_{\text{TT}}}) \geq \gamma_{\text{marg}} - \epsilon$. 
\end{proposition}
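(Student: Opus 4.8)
The plan is to exhibit, for each $\epsilon > 0$, an explicit family of distributions realizing regret arbitrarily close to $\gamma_{\text{marg}}$, thereby showing the factor-of-2 bound in Theorem \ref{thm:historical_regret} cannot be improved beyond a linear constant. The construction should make the agent's best response under $w_{\text{TT}}$ maximally misaligned with the true optimal policy precisely because the agent conditions on $U$ while the reward function (through $\hat\mu_0(X)$) conditions only on $X$. Concretely, I would take $X$ to be degenerate (a single point, or irrelevant), so that $\mu_0(X) = E[\mu_0(X,U)\mid X]$ is just the unconditional mean $E[\mu_0(U)]$, and let $U$ be binary, say $U \in \{0,1\}$ each with probability $1/2$. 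I would set $\mu_0(X,U)$ to take values $+a$ and $-a$ on the two values of $U$, so that $\mu_0(X) = 0$ and $E[|\mu_0(X) - \mu_0(X,U)|] = a =: \gamma_{\text{marg}}$, saturating Assumption \ref{assumption:unobserved-confounding-avg-historical}.

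Next I would choose the treatment effect $\tau(X,U)$ so the agent is driven to the wrong decision. Since the agent maximizes $E[w_{\text{TT}}] = \sum_i E[(Y_i - \hat\mu_0(X_i))T_i^\pi]$ and $E[\hat\mu_0(X_i)] = \mu_0(X_i) = 0$, the agent treats unit $i$ iff $E[Y_i(1)\mid X_i, U_i] - 0 > 0$, i.e. iff $\mu_1(X,U) > 0$, whereas the welfare-optimal rule treats iff $\tau(X,U) = \mu_1(X,U) - \mu_0(X,U) > 0$. So I want a value of $U$ where these disagree: pick the branch $U = 0$ with $\mu_0(X,0) = +a$, and set $\mu_1(X,0)$ to be a small positive number $\delta$ (or even slightly negative, $\delta \le 0$ with $|\delta|$ small). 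Then the agent, seeing $\mu_1(X,0) = \delta > 0$, treats this unit, but the true effect is $\tau(X,0) = \delta - a < 0$, so each such treated unit contributes $\tau(X,0) = \delta - a \approx -a$ to welfare. On the branch $U = 1$ with $\mu_0(X,1) = -a$, I set $\mu_1$ so the agent's action is correct and contributes nothing to regret (e.g. $\mu_1(X,1) < 0$ as well, so neither the agent nor the optimal rule treats, or arrange them to agree). Taking $n = 1$ for simplicity, the optimal policy has $V(\pi^*) = \max(0,\tau) $ summed appropriately and the agent's policy has strictly smaller $V$; computing $R(\pi^{w_{\text{TT}}}) = V(\pi^*) - V(\pi^{w_{\text{TT}}})$ gives, on the $U=0$ event (probability $1/2$), a loss of magnitude $|\tau(X,0)| = a - \delta$ relative to not treating, so $R(\pi^{w_{\text{TT}}}) = \tfrac{1}{2}(a - \delta)$ — which is only $\tfrac12\gamma_{\text{marg}}$, not $\gamma_{\text{marg}}$.

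To actually reach $\gamma_{\text{marg}} - \epsilon$ rather than $\tfrac12\gamma_{\text{marg}}$, I would sharpen the construction so that the marginal-vs-conditional discrepancy is concentrated asymmetrically: make $P(U=0)$ close to $1$ (say $1 - \eta$) and $P(U=1) = \eta$ small, with $\mu_0(X,0) = c$ small and $\mu_0(X,1) = c - b$ for large $b$, chosen so $\mu_0(X) = (1-\eta)c + \eta(c-b) = 0$, giving $c = \eta b$; then $E[|\mu_0(X) - \mu_0(X,U)|] = (1-\eta)\cdot\eta b + \eta\cdot(1-\eta)b = 2\eta(1-\eta)b =: \gamma_{\text{marg}}$. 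The regret-producing event should be the high-probability branch $U=0$: there $\mu_0(X,0) = \eta b$ is small-positive, and I set $\mu_1(X,0) = \eta b + \delta$ with $\delta > 0$ tiny, so the agent treats (since $\mu_1 > 0$) but $\tau(X,0) = \delta$... that gives positive true effect, wrong direction. Instead I invert: put the small-positive $\mu_1$ and large-positive $\mu_0$ together so $\tau < 0$ on the dominant branch — set $\mu_0(X,0)$ large positive and balance with $\mu_0(X,1)$ large negative on the rare branch. Then on the common branch the agent sees $\mu_1(X,0) = \text{small} > 0$ and treats, incurring per-unit welfare loss $\approx |\mu_0(X,0)|$; with $P(U=0)\approx 1$ this yields $R(\pi^{w_{\text{TT}}}) \approx |\mu_0(X,0)|$, and I would tune the parameters $(\eta, b, \delta)$ so that simultaneously $\gamma_{\text{marg}} = 2\eta(1-\eta)b \to |\mu_0(X,0)|$ and the gap is below $\epsilon$. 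The main obstacle is exactly this bookkeeping: getting the constant in front of $\gamma_{\text{marg}}$ all the way up to $1$ (not $1/2$) requires the discrepancy mass, the sign-flip in $\tau$, and the agent's treatment trigger $\mu_1 > 0$ to all line up on the same high-probability event, which constrains the signs of $\mu_0$ and $\mu_1$ on each branch; I would verify the agent's best response explicitly on the finite construction and then check $V(\pi^*) - V(\pi^{w_{\text{TT}}}) \ge \gamma_{\text{marg}} - \epsilon$ by direct substitution, letting the free tuning parameter $\delta \downarrow 0$ absorb the $\epsilon$.
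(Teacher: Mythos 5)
Your setup is right---degenerate $X$, binary $U$, $\mu_0(X,U)=\pm a$ so that $\mu_0(X)=0$ and $\gamma_{\text{marg}}=a$, and the correct observation that the agent under $w_{\text{TT}}$ treats iff $\mu_1(X,U)>0$ while the optimal rule treats iff $\tau(X,U)>0$. You also correctly compute that your first construction only yields $R=\tfrac12(a-\delta)\approx\tfrac12\gamma_{\text{marg}}$. But the repair you then sketch does not work, and cannot work as designed. If the agent's error is a false positive on a single branch of a binary $U$, the regret is capped at $\gamma_{\text{marg}}/2$ no matter how you skew $P(U)$: writing $p_0,p_1$ for the branch probabilities, the centering $\mu_0(X)=0$ forces $p_0|\mu_0(X,0)|=p_1|\mu_0(X,1)|$, so $\gamma_{\text{marg}}=2p_0|\mu_0(X,0)|$, while the welfare loss from wrongly treating branch $0$ is $p_0|\tau(X,0)|<p_0|\mu_0(X,0)|=\gamma_{\text{marg}}/2$ (the inequality because the agent only treats when $\mu_1(X,0)>0$). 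Your asymmetric $(\eta,b,\delta)$ construction, if you push the algebra through, lands at exactly $\eta(1-\eta)b=\tfrac12\gamma_{\text{marg}}$ again. The diagnosis at the end---that everything must "line up on the same high-probability event"---is therefore the wrong lesson.

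The missing idea, which is what the paper's construction uses, is that the agent must err \emph{in both directions simultaneously}: a false positive on one branch and a false negative on the other. Take $P(U=1)=\tfrac12$, $\mu_0(x,u)=\alpha$ for $u=0$ and $-\alpha$ for $u=1$ (so $\gamma_{\text{marg}}=\alpha$), and set $\mu_1(x,0)=\beta$ with $0<\beta<\alpha$ and $\mu_1(x,1)=0$. The agent treats only $u=0$ (since $\mu_1>0$ there), incurring a loss of $\tfrac12(\alpha-\beta)$ from treating units with $\tau=\beta-\alpha<0$, \emph{and} forgoes a gain of $\tfrac12\alpha$ by not treating $u=1$ units with $\tau=\alpha>0$. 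These two contributions add to $\alpha-\tfrac\beta2=\gamma_{\text{marg}}-\tfrac\beta2$, and taking $\beta=\epsilon$ finishes the proof. Each one-sided error is worth at most $\gamma_{\text{marg}}/2$; only their sum reaches $\gamma_{\text{marg}}$. Your proposal as written establishes only the weaker bound $R\geq\tfrac12\gamma_{\text{marg}}-\epsilon$.
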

Thus, this notion of \textit{heterogeneity} is key to determining regret under information asymmetry. As a realistic example of cases when Assumption \ref{assumption:unobserved-confounding-avg-historical} might be satisfied, studies of cardiovascular disease risk have shown that “the magnitude of risk related to smoking is far larger than any ostensible benefit related to moderate drinking” \citep{mukamal2006effects}. Thus, if $U$ were some attribute for which the relative effect on top of $X$ was small, then $\gamma_{\text{marg}}$ would be small. On the other hand, \citet{rodgers2019cardiovascular} report that 
sex hormones and diabetes have compounding effects on cardiovascular disease risk. If $X$ and $U$ have strong compounding effects on $Y(0)$, then $\gamma_{\text{marg}}$ could be large. 

\paragraph{Information asymmetry and confounding.} Information asymmetry relates closely to the possibility of confounding bias in the estimator $\hat{\mu}_0(x)$. First, the agent's knowledge of $U$ could mean that the data source from which the principal estimated $\hat{\mu}_0(x)$ was also confounded by $U$. In this case, the literature on sensitivity analysis and policy learning with unobserved confounding proposes a range of robust estimates for $\mu_0(x)$~\citep[see, e.g.,][]{yadlowsky2022bounds,kallus2018confounding}. In our setting, robust estimation of $\mu_0(x)$ is not enough, since the agent's treatment rule can depend on $U$. Still, the minimax techniques from these works 
may be a useful avenue for designing future robust reward functions $w$.
Second, information asymmetry can exacerbate confounding if the agent were able to directly affect the principal's estimator $\hat{\mu}_0(x)$, which may happen if, e.g., the principal were to estimate $\hat{\mu}_0(x)$ from the agent's untreated units with $T^{\pi}_i = 0$. We discuss this case in detail in the Appendix, and show that a stronger assumption yields a similar bound to Theorem \ref{thm:agent_regret}.  
Most importantly, our key result is that even without confounding, information asymmetry \textit{still} causes problems via the agent's ability to discriminate on $U$.

\section{Experiments}\label{sec:experiments}

We turn to several clinical datasets to 
evaluate the welfare impacts of different quality metrics under different conditions of information asymmetry. We 
show empirically that the regret incurred by $w_{\text{ATO}}$ can be high. We also show that under information asymmetry, regret can be \textit{amplified} if the principal estimates $\hat{\mu}_0(x)$ conditioned on some subsets of features. With careful feature selection for $x$, regret may be reduced.

\textbf{Horse Colic dataset.}
The Horse Colic dataset from the UCI repository \citep{Dua:2019} contains $n = 300$ horse colic cases. Horses were either treated with surgery ($T = 1$) or not ($T = 0$), with a treatment rate of $0.6$. The $20$ covariates include each horse's age and presenting symptoms such as abdominal distension, pulse, blood test results, etc. (see Appendix for a full list). Our outcome of interest $Y$ is whether the horse lived ($Y = 1$) or died ($Y = -1$). 
To approximate the mean conditional potential outcomes, we apply a logistic model with interaction terms between the treatment and covariates: $P(Y(t) = 1 | X = x) = \sigma(\beta_0 + \beta_1 x + \beta_2 t + \beta_3 xt)$.
We estimate the parameters $\beta$ on the dataset using logistic regression, and take these as given to produce $\mu_0(x)$ and $\mu_1(x)$. The fitted $\mu_0(x)$ and $\mu_1(x)$ show $62$ horses benefiting from surgery and $146$ being better off without surgery. On the horses that would benefit, the average benefit was $0.147$, which is fairly significant. The clinical validity of these estimated potential outcomes cannot be verified from this data alone, and we instead take these estimates as synthetic potential outcomes.

\textbf{International Stroke Trial dataset.}
We also consider data from the \citet{international1997international}, which was a randomized trial studying the effects of drug treatments in acute stroke. \citet{kallus2018confounding} studied this dataset in a different policy learning setting, and we apply a similar setup by comparing treating with high doses of heparin and aspirin $(T = 1)$ with aspirin alone ($T = 0$). This leaves $n = 7264$ patients and a treatment rate of $0.33$. The $20$ covariates include each patient's age, sex, and clinical symptoms such as prior stroke types and complications. Like \citet{kallus2018confounding}, we consider a scalarized outcome score $Y \in [-4, 3]$ that accounts for patient outcomes including death, recovery, and side effects at 14 days and 6 months after treatment (details in the Appendix). 
We approximate the mean conditional potential outcomes using a linear model with interaction terms between treatment and covariates:
$E[Y(t) | X = x] = \beta_0 + \beta_1 x + \beta_2 t + \beta_3 xt$.
We fit an OLS estimate of $\beta$, and use the resulting $\mu_0(x)$ and $\mu_1(x)$ functions as synthetic mean conditional potential outcomes. The fitted $\mu_0(x)$ and $\mu_1(x)$ showed $1360$ patients benefiting from heparin and $5904$ being better off without heparin. On the patients that would benefit, the average benefit of treatment was $0.025$. This tracks with the study's findings that the benefit of heparin was non-significant and inconclusive. 

\begin{table*}[!ht]
  \caption{Utility and regret comparisons for different reward functions. For each reward function $w$, we report utility $V(\pi^w)$, regret $R(\pi^w)$, and the realized treatment rate $P(T^{\pi^w} = 1)$.}
  \label{tab:regrets_full_info}
  \centering
  \begin{tabular}{lllllll}
    \toprule
    & \multicolumn{3}{c}{Horse Colic dataset}   &  \multicolumn{3}{c}{Stroke Trial dataset}     \\
    \cmidrule(r){2-7}
    Reward function & Utility  & Regret  & Treat rate & Utility & Regret & Treat rate \\
    \midrule
    $w_{\text{ATO}}$ & $0.00000$ & $0.1477$  & $0.1922$ & $0.00004$ & $0.0251$ & $0.0001$ \\
    $w_{\text{ATT}}$ & $0.00784$ & $0.1399$ & $0.0039$ & $0.00013$ & $0.0250$ & $0.0001$ \\
    $w_{\text{TT}}$ & $0.14774$ & $0.000$ & $0.2431$ & $0.02518$ & $0.000$ &  $0.1872$\\
    \midrule
    $w_{\text{TT}}$ (no info) & $0.10008$ & $0.0476$ & $0.6235$ & $-0.04888$ & $0.0741$ &  $0.4829$\\
    $w_{\text{TT}}$ (demographic info) & $0.10008$ & $0.0476$ & $0.6235$ & $-0.06391$ & $0.0891$ & $0.5041$ \\
    \bottomrule
  \end{tabular}
\end{table*}

\begin{figure}[!ht]\label{fig:ist_gammas_regrets_cumul}
  \centering
  \setlength\extrarowheight{-20pt}
  \begin{tabular}{cc}
    $\gamma_{\text{marg}}$ per feature \\
\begin{minipage}[t]{.45\textwidth}
\vspace{0pt}
\raggedleft
\includegraphics[trim=0 168 5 5,clip,width=0.99\textwidth]{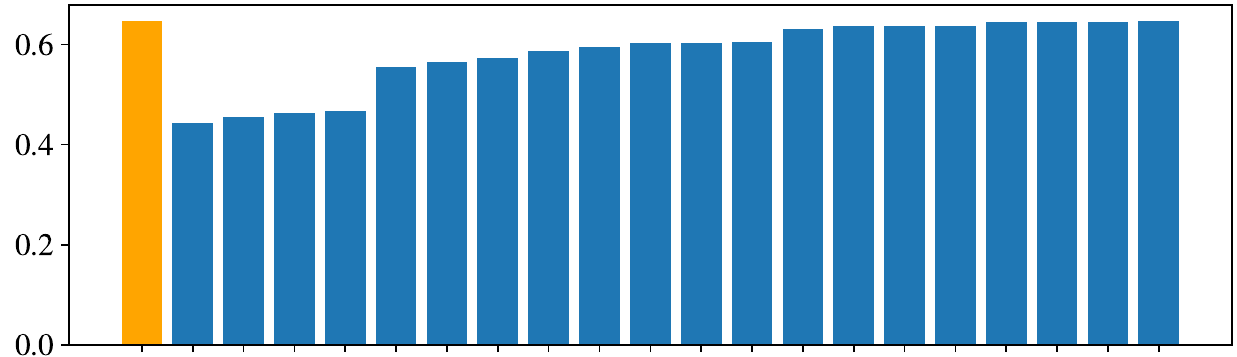}
\vspace{-10pt}
\end{minipage} \\
 Regret for increasing feature sets\\
\begin{minipage}[t]{.45\textwidth}
\vspace{0pt}
\raggedleft
\includegraphics[trim=0 8 5 5,clip,width=\textwidth]{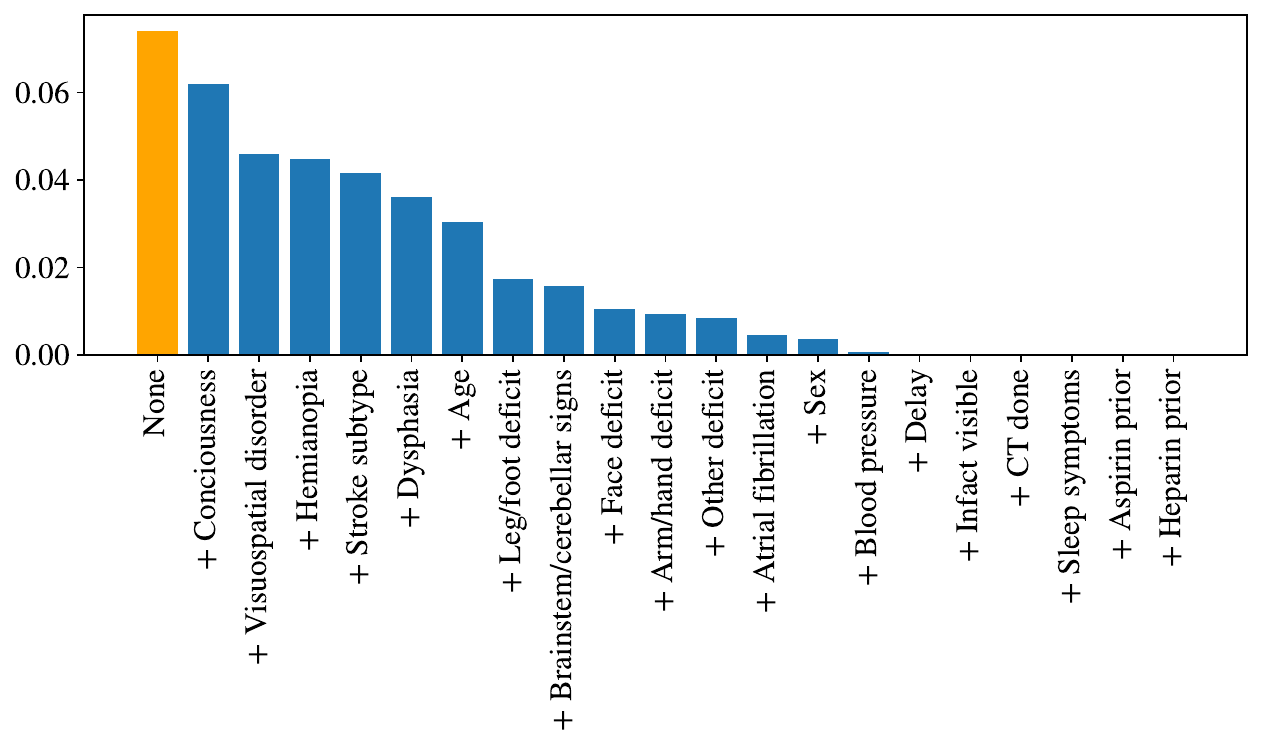}
\vspace{0pt}
\end{minipage}\vspace{-10pt}
\\
  \end{tabular}
  
  \caption{Regret under fine-grained information asymmetry on the Stroke Trial dataset (Horse Colic in the Appendix). The \textit{top} plot shows $\gamma_{\text{marg}}$ values if the principal only knows each individual feature. The \textit{bottom} plot shows regret as the principal accumulates features from the left (most important).}
\end{figure}

\textbf{Results and discussion.}
We first compare the reward functions from Section \ref{sec:comparisons} by calculating the utility and regret empirically over the dataset using the synthetic potential outcomes. Table \ref{tab:regrets_full_info} shows that for both datasets, the utility for $w_{\text{TT}}$ is positive and higher for the Horse Colic dataset than for the Stroke Trial dataset, which tracks with the clinical finding that the heparin treatment 
did not have significant effect. The utility for $w_{\text{ATO}}$ is close to zero for both datasets. 

Next, we consider the effect of information asymmetry on regret. The last two rows of Table \ref{tab:regrets_full_info} show the regret when the principal applies $w_{\text{TT}}$, but either observes no covariates (``no info'') and identifies $E[Y(0)]$, or only observes age and/or sex (``demographic info'') and identifies $E[Y(0) | \text{demographics}]$. While conditioning on age has no effect on the Horse Colic dataset, on the Stroke Trial dataset, conditioning on demographics actually \textit{hurts} utility compared to using ``no info.'' This suggests that it is not always better for the principal to condition on all known information, and thus policymakers should exercise caution in designing stratified quality metrics.   
For example, CMS currently measures age-specific kidney transplant rates for organ procurement organizations (OPOs) \citep{cms_opo}, and our findings question the value of such incomplete stratification under OPO treatment selection.
Theoretically, this finding structurally mirrors the phenomenon of \textit{bias amplification} when estimating causal effects with unobserved confounding, where conditioning on more observed features can actually increase bias \citep{pearl2012class}. Here we observe \textit{regret amplification}, and we encourage replication of similar analyses by regulatory agencies with internal data sources.


We also study a more continuous spectrum of information asymmetry by showing regret as the principal accumulates increasingly large subsets of the available features. Relating this to Section \ref{sec:info_asymmetry}, we first sort the features in ascending order of feature importance, as measured by estimating $\gamma_{\text{marg}}$ when the principal knows only the individual feature. Then, Figure \ref{fig:ist_gammas_regrets_cumul} shows regret when the principal knows increasingly large feature subsets, building up starting from the most ``important'' feature. Regret is reduced significantly after accounting for less than half of the features. 
In practice, the principal would not know true values of $\gamma_{\text{marg}}$. However, approximations of $\gamma_{\text{marg}}$ may serve as a reasonable heuristic for feature selection when a regulator has a large set of known covariates, but needs to prune them for interpretability or cost.


\section{Conclusions}
We have studied the harm to social welfare that occurs when accountability metrics are not aligned with utility. Even under optimistic assumptions about the availability of an unbiased counterfactual estimate, the potential for regret still exists under information asymmetry with treatment effect heterogeneity.
Given the compounded difficulty of estimating causal effects on top of our consideration of treatment incentives and ranking, we recommend that designers exercise caution, humility, and vigilance in their construction of metrics.
The task is difficult, but we have established the contours of one potentially fruitful approach.

\newpage
\clearpage

\Urlmuskip=0mu plus 1mu\relax
\bibliographystyle{plainnat}
\bibliography{references}

\newpage
\clearpage
\appendix
\onecolumn
\aistatstitle{Supplementary Materials for Operationalizing Counterfactual Metrics: Incentives, Ranking, and Information Asymmetry}
\section{Estimating the Untreated Potential Outcome}\label{sec:estimating_mu_0}

Our main paper considers incentive problems under the assumption that the principal has access to measures of patient risk through an unbiased estimator of the untreated potential outcome, $\hat{\mu}_0(x)$ with $E[\hat{\mu}_0(x)] = \mu_0(x)$.

Obtaining such an unbiased estimator can be difficult. Still, the conditions under which causal estimation can be done are well understood. In this section, we give some examples of sufficient conditions for identification of $\mu_0(x)$. This is not an exhaustive list, but rather a starting point for analysis of existing data sources.

\subsection{Examples of Data Sources}

We list two examples of data sources that may contribute to building the estimator $\hat{\mu}_0$, along with sufficient conditions for identification. These may not always be realistic, and are also not the only possible conditions for identification. In reality, practitioners may also be able to leverage knowledge of a more detailed structure causal model, or functional form assumptions (see \citet{hernancausal} for a more thorough coverage of methodologies and assumptions).

\begin{enumerate}
    \item \textbf{Auxiliary untreated data.} Suppose the principal has access to an auxiliary dataset $\{X_j', T_j', Y_j'\}_{j=1}^m$. Then the principal may produce an unbiased estimator $\hat{\mu}_0(x)$ for $\mu_0(x)$ from this auxiliary dataset if:
    \begin{enumerate}
        \item $\mu_0'(x) = E[Y_j'(0) | X'_j = x]$ is identifiable from this dataset. A set of sufficient conditions for this would be if ignorability was satisfied, $\{Y_j'(0), Y_j'(1)\} \indep T_j' | X_j'$, and $P(T_j' = 0 | X_j = x) > 0 \; \forall x \in \mathcal{X}$, and SUTVA was satisfied, $Y_j' = Y_j'(T_j')$ (encompassing both consistency and non-interference).
        \item The relationship between the untreated potential outcome $Y_j'(0)$ and the covariates $X_j'$ is the same as in the treatment population between $Y_i(0)$ and $X_i$. That is, $E[Y_j'(0) | X'_j = x] = E[Y_i(0) | X_i = x]$.
        \item The support of $X'_j$ covers the support of $X_i$. Formally, the distribution of $X_i$ is absolutely continuous with respect to the distribution of $X'_j$.
    \end{enumerate}
    
    \item \textbf{Untreated patients in the treatment population.} In the main paper, we have assumed that the principal only observed outcomes $Y_i$ for treated units with $T_i^{\pi} = 1$. If the principal also observed a fraction of outcomes for untreated units, with $T_i^{\pi}=0$, then these may also be incorporated into the estimate for $\hat{\mu}_0(x)$. Sufficient conditions for identifying $\mu_0(x)$ from a dataset of the agent's untreated units are:
    \begin{enumerate}
        \item No information asymmetry: the agent's treatment policy $\pi$ only depends on $X_i$. Under this assumption, we have no confounding: $\{Y_i(0), Y_i(1)\} \indep T_i^{\pi} | X_i$.
        \item Positivity: $P(T_i^{\pi} = 0 | X_i = x) > 0$ for all $x \in \mathcal{X}$, and $Y_i$ is observed for a nonzero proportion of untreated units for all $x$. 
    \end{enumerate}

    The positivity assumption is particularly tricky. In the principal-agent game, there is no guarantee that the agent's best response policy $\pi$ would satisfy positivity. Positivity may be enforced by restricting the treatment rule class $\Pi$ to only include treatment rules where $\pi(x) < 1$, but this may not always be possible or ethical from a policy standpoint. In practice, it may be possible to create a composite dataset that combines data from both sources. That way, data from untreated patients in the treatment population could supplement auxiliary untreated data to produce a better estimator $\hat{\mu}_0(x)$ than using auxiliary untreated data alone.
\end{enumerate}

\subsection{Necessity of Estimating the Untreated Potential Outcome}

Here we address the necessity of including an estimate of the untreated potential outcome in the reward function $w$. We show that if $w$ is completely unable to differentiate between different distributions over the untreated potential outcome, perhaps through incorporating some estimate of some function of $Y_i(0)$ or through other constraints (e.g. co-monotonicity of $\mu_1(x)$ and $\mu_0(x)$), then regret is necessarily unbounded.

Formally, let $\mathcal{D}$ denote a distribution over $X_i, Y_i(1), Y_i(0)$. Let $\mathcal{D}'$ denote an alternate distribution such that $E_{\mathcal{D}'}[Y_i(0) | X_i = x] = \mu_0'(x)$, and the joint distribution $X_i, Y_i(1)$ under $\mathcal{D}'$ remains unchanged. Let $Y_i$ continue to denote the observed outcome under treatment assignments $Y_i = Y_i(T_i^{\pi})$, and note that the distribution of $Y_i$ changes under $\mathcal{D}'$ as well. We show that if the reward function $w$ cannot differentiate between worlds $\mathcal{D}$ and $\mathcal{D}'$,  then regret will necessarily be unbounded.

\begin{assumption}[Non-degenerate $w$]\label{assn:non-degen-w}
    Assume that $w$ does not always induce a degenerate best response: that is, there exists a distribution $\mathcal{D}$ such that $\arg\max_{\pi \in \Pi} E_{\mathcal{D}}[w(\mathbf{X}, \mathbf{T^{\pi}}, \mathbf{Y})]$ contains $\pi^w$ with $E_{\mathcal{D}}[\pi^w(X_i)] > 0$ (in other words, there exists a non-measure-zero set $\bar{\mathcal{X}}$ with $\pi^w(\bar{x}) > 0 \; \forall \bar{x} \in \bar{\mathcal{X}}$). 
\end{assumption}

\begin{theorem}
     Suppose $w$ satisfies Assumption \ref{assn:non-degen-w}. Let $\mathcal{S}$ be the set of all pairs of distributions $\mathcal{D}, \mathcal{D}'$ such that the distributions of $X_i, Y_i(1)$ remain unchanged between $\mathcal{D}$ and $\mathcal{D}'$. If for all pairs of distributions $\mathcal{D}, \mathcal{D}' \in \mathcal{S}$, $E_{\mathcal{D}}[w(\mathbf{X}, \mathbf{T^{\pi}}, \mathbf{Y})] = E_{\mathcal{D}'}[w(\mathbf{X}, \mathbf{T^{\pi}}, \mathbf{Y})]$ for all $\pi$, then regret is unbounded.
\end{theorem}
\begin{proof}
    Let $\pi' \in \arg\max_{\pi \in \Pi} E_{\mathcal{D}'}[w(\mathbf{X}, \mathbf{T^{\pi}}, \mathbf{Y})]$, with $\pi'(\bar{x}) = \delta > 0$ for all $\bar{x} \in \bar{\mathcal{X}}$, with $\bar{\mathcal{X}}$ being a set of measure $\rho > 0$ (which exists by Assumption \ref{assn:non-degen-w}). Let $\mu_1(\bar{x}) = E[Y_i(1) | X = \bar{x}] = \alpha$ for all $\bar{x} \in \bar{\mathcal{X}}$. Let $\mathcal{D}$ be distribution such that $\mu_0(\bar{x}) = E_{\mathcal{D}}[Y_i(0) | X = \bar{x}]  = \beta$ for all $\bar{x} \in \bar{\mathcal{X}}$. Since $E_{\mathcal{D}}[w(\mathbf{X}, \mathbf{T^{\pi}}, \mathbf{Y})] = E_{\mathcal{D}'}[w(\mathbf{X}, \mathbf{T^{\pi}}, \mathbf{Y})]$ for all $\pi$, we have $\pi' \in \arg\max_{\pi \in \Pi} E_{\mathcal{D}}[w(\mathbf{X}, \mathbf{T^{\pi}}, \mathbf{Y})]$ as a candidate best response to $w$ under distribution $\mathcal{D}$.
    
    Denote the welfare maximizing policy for distribution $\mathcal{D}$ as \[\pi^{*} \in \arg\max_{\pi \in \Pi} E_{\mathcal{D}}[\pi(X_i)(\mu_1(X_i) - \mu_0(X_i))].\]
    Choose $\beta > \alpha$. Then the set of welfare maximizing policies contains a $\pi^{*}$ with $\pi^{*}(\bar{x}) = 0$ for all $\bar{x} \in \bar{\mathcal{X}}$. 
    
    The regret is then
    \[R(\pi') = E_{\mathcal{D}}[(\pi^{*}(X_i) - \pi'(X_i)) (\mu_1(X_i) - \mu_0(X_i))] \geq -\delta (\alpha - \beta) \rho. \]

    Fixing $\delta>0, \rho>0, \alpha<\infty$ and choosing $\beta$ to be arbitrarily high results in arbitrarily high regret. 
\end{proof}



\section{Proofs from Section \ref{sec:comparisons} on Reward Function Comparisons}

This section provides proofs and formal statements for the best responses and regrets corresponding to each reward function in Section \ref{sec:comparisons}. 

\subsection{Notation}
Let $\int f(Z) dP_Z$ denote integration of the function $f(Z)$ with respect to the probability measure for the random variable $Z$. Let the bold variable $\mathbf{Z}$ denote the vector of i.i.d.\ random variables $\{Z_i\}_{i=1}^n$. Let $\Ind(\cdot)$ denote an indicator function with 
\[\Ind(x \in S) = 
\begin{cases}
    1 &\text{if } x \in S\\
    0 & \text{otherwise}.
\end{cases}\]

Let $\pi^{w}$ denote the agent's best response for the reward function $w$: \[\pi^{w} = \arg\max_{\pi \in \Pi} E[w(\bf{X}, \bf{T^{\pi}}, \bf{Y})].\]

\subsection{Additional Reward Function}

A simple extension from $w_{\text{ATO}}$ would be to measure the \textit{total treated outcome (TO)}: 

\textbf{Reward Function 5 (TO):}
\begin{equation}
    \label{eq:w_total_treated_outcome}
    w_{\text{TO}}(\mathbf{X}, \mathbf{T}^{\pi}, \mathbf{Y}) = \sum_{i=1}^n Y_i T_i^{\pi}.
\end{equation}

The unconstrained best response is $\pi^{w_{\text{TO}}}(x) = \Ind(\mu_1(x) > 0)$. Like $w_{\text{ATO}}$, this incurs unbounded regret; however, this rule treats more people due to treating all individuals for whom the treated outcome is positive. Thus, there is no longer the ``creaming'' \cite{muller2019tyranny} issue of maximizing the \textit{average} effect at the expense of the \textit{total} effect. 

\subsection{Formal Statements and Proofs for Agent Best Responses}
We give formal statements and proofs for the agent best responses to the different reward functions stated in Section \ref{sec:comparisons}. 

\begin{proposition}[ATO Best Response]\label{prop:ato_best_response}
    Suppose $\Pi$ is the set of all functions $\pi: \mathcal{X} \to [0,1]$, and suppose $X$ is a discrete random variable supported on $\mathcal{X}$. Then the agent's best response to $w_{\text{ATO}}$ is:
    \[\pi^{w_{\text{ATO}}}(x) = \begin{cases}
        1 &\text{if } x \in \arg\max_{x} \mu_1(x) \text{ and } \mu_1(x) > 0\\
        0 & \text{otherwise}.
    \end{cases}\]
\end{proposition}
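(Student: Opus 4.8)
The plan is to remove the outcome randomness first and then optimize over which covariate values the rule $\pi$ should treat. Because the agent sets $T_i^{\pi}$ as a function of $X_i$ only, strong ignorability $\{Y_i(0),Y_i(1)\}\indep T_i^{\pi}\mid X_i$ holds (as noted in Appendix~\ref{sec:estimating_mu_0}), and by i.i.d.\ sampling $(Y_i(0),Y_i(1))$ is independent of the rest of the sample given $X_i$; together with consistency $Y_i=Y_i(T_i^{\pi})$ this gives $E[Y_i\mid\mathbf{X},\mathbf{T}^{\pi}]=\mu_{T_i^{\pi}}(X_i)$. Hence on the event $\{\sum_i T_i^{\pi}>0\}$,
\[
E[w_{\text{ATO}}(\mathbf{X},\mathbf{T}^{\pi},\mathbf{Y})\mid\mathbf{X},\mathbf{T}^{\pi}]=\frac{\sum_i\mu_1(X_i)T_i^{\pi}}{\sum_i T_i^{\pi}}=:\bar\mu_1^{\pi},
\]
and it is $0$ otherwise, so the agent's objective reduces to $E[w_{\text{ATO}}(\mathbf{X},\mathbf{T}^{\pi},\mathbf{Y})]=E[\bar\mu_1^{\pi}\,\Ind(\sum_i T_i^{\pi}>0)]$.

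I would then maximize this. Write $M:=\sup_{x\in\mathcal{X}}\mu_1(x)$, assumed attained (e.g.\ $\mathcal{X}$ finite), $\mathcal{X}^{\star}:=\arg\max_x\mu_1(x)$, and $p^{\star}:=P(X\in\mathcal{X}^{\star})$, which is strictly positive because $X$ is discrete with support $\mathcal{X}$. The key observation is that $\bar\mu_1^{\pi}$ is a convex combination of the treated units' values $\{\mu_1(X_i):T_i^{\pi}=1\}$, so $\bar\mu_1^{\pi}\le M$ with equality iff every treated unit lies in $\mathcal{X}^{\star}$; hence $w_{\text{ATO}}\le\max(M,0)$ pointwise and $E[w_{\text{ATO}}(\mathbf{X},\mathbf{T}^{\pi},\mathbf{Y})]\le\max(M,0)$. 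If $M\le 0$ the bound is attained by $\pi\equiv 0$, which is exactly the claimed $\pi^{w_{\text{ATO}}}$ since the clause ``$\mu_1(x)>0$'' then fails everywhere. If $M>0$, the claimed rule $\pi^{\star}(x)=\Ind(x\in\mathcal{X}^{\star})$ treats precisely the units with $X_i\in\mathcal{X}^{\star}$, so $\bar\mu_1^{\pi^{\star}}=M$ whenever at least one such unit occurs and $E[w_{\text{ATO}}(\mathbf{X},\mathbf{T}^{\pi^{\star}},\mathbf{Y})]=M(1-(1-p^{\star})^n)$, which attains the bound $M$ in the large-sample limit; any rule that treats a unit outside $\mathcal{X}^{\star}$ with positive probability instead drags $\bar\mu_1^{\pi}$ strictly below $M$, and among the remaining weakly-optimal rules $\pi^{\star}$ is the one undominated for every finite $n$. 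This identifies $\pi^{w_{\text{ATO}}}=\pi^{\star}$ as the (canonical) best response.

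The step I expect to be the main obstacle is the ``average'' normalization itself. For fixed $n$ the expected reward of a rule is \emph{not} literally the population weighted average $E[\mu_1(X)\pi(X)]/E[\pi(X)]$: the denominator $\sum_i T_i^{\pi}$ is random and the reward is set to $0$ when nobody is treated, so the clean maximizer emerges only in the limit $n\to\infty$, where $w_{\text{ATO}}(\mathbf{X},\mathbf{T}^{\pi},\mathbf{Y})\to E[\mu_1(X)\pi(X)]/E[\pi(X)]$ a.s.\ whenever $E[\pi(X)]>0$, and it is this limiting object that isolates $\mathcal{X}^{\star}$. I would therefore carry out the comparison at the population / large-sample level, where the stated form is exactly optimal, and separately dispatch the routine edge cases: ties in the argmax (any rule treating all of $\mathcal{X}^{\star}$ and nothing with strictly smaller $\mu_1$ is equally optimal in the limit, with $\pi^{\star}$ the canonical representative), non-attainment of the supremum (excluded by the discreteness/finiteness hypothesis, or handled by an $\varepsilon$-approximation), and the $\sum_i T_i^{\pi}=0$ convention.
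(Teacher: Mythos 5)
Your proof is correct and follows the same route as the paper: use strong ignorability to replace $Y_i$ by $\mu_1(X_i)$ among the treated, observe that the objective is a weighted average of $\mu_1$ over the treated covariate distribution, and conclude that it is maximized by concentrating treatment on $\arg\max_x \mu_1(x)$ when $\max_x \mu_1(x) > 0$ and treating no one otherwise. You are in fact more careful than the paper on the one delicate step: the paper silently identifies $E[w_{\text{ATO}}(\mathbf{X},\mathbf{T}^{\pi},\mathbf{Y})]$ with the population quantity $E[\mu_1(X)\mid T^{\pi}=1]$ via ``the tower property,'' whereas you correctly flag that this identification --- and hence the exact optimality of the stated rule --- holds only at the population / large-$n$ level, since for finite $n$ a rule that hedges by also treating second-best covariate values can strictly improve on the stated $\pi^{w_{\text{ATO}}}$ whenever the argmax set may be empty in the realized sample.
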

\begin{proof}
By the tower property,
\[E[w_{\text{ATO}}(\mathbf{X}, \mathbf{T}^{\pi}, \mathbf{Y})] =  E[E[Y_i | T_i^{\pi} = 1, X_i] | T_i^{\pi} = 1] \\
        = \int E[Y_i | T_i^{\pi} = 1, X_i] dP_{X_i | T_i^{\pi} = 1}.\]
Since $T_i^{\pi}$ only depends on $X_i$ by construction, 
we have the \textit{strong ignorability} property that $\{Y_i(0), Y_i(1)\} \indep T_i^{\pi} | X_i$. Therefore,
\[\int E[Y_i | T_i^{\pi} = 1, X_i] dP_{X_i | T_i^{\pi} = 1} = \int \mu_1(X_i) dP_{X_i | T_i^{\pi} = 1}.\]
For discrete $X_i$, this is equal to   
\[\sum_{x \in \mathcal{X}} \mu_1(x) P(X_i = x | T_i^{\pi} = 1) = \sum_{x \in \mathcal{X}} \mu_1(x) \frac{\pi(x)P(X_i = x)}{\sum_{z \in \mathcal{X}}\pi(z) P(X_i = z)}.\]
The $\pi$ function that maximizes this is exactly that given in Proposition \ref{prop:ato_best_response}.
\end{proof}

As a tool to prove further results, we give the following lemma for the optimal treatment rule.

\begin{lemma}[Optimal Treatment Rule]\label{lem:optimal_pi}
    If the agent's treatment rule $\pi$ depends only on $X_i$, then the treatment rule $\pi^{*}$ that maximizes $V(\pi)$ is 
    \[\pi^{*}(x) = \begin{cases}
    1 & \text{if } \tau(x) > 0 \\
    0 & \text{otherwise}.
    \end{cases}\]
\end{lemma}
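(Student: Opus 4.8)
The plan is to prove Lemma~\ref{lem:optimal_pi} by expanding $V(\pi)$ using the potential outcomes framework and the restriction that $\pi$ depends only on $X_i$, then observing that the maximization decouples pointwise over $x$. First I would write $V(\pi) = E[Y_i(T_i^\pi) - Y_i(0)]$ and condition on $X_i$ via the tower property, obtaining $V(\pi) = E\big[E[Y_i(T_i^\pi) - Y_i(0) \mid X_i]\big]$. Next, since $T_i^\pi$ is a Bernoulli draw with $P(T_i^\pi = 1 \mid X_i) = \pi(X_i)$ and depends only on $X_i$ (so the strong ignorability property $\{Y_i(0), Y_i(1)\} \indep T_i^\pi \mid X_i$ holds, as noted in the excerpt), the inner conditional expectation becomes $\pi(X_i)\,\mu_1(X_i) + (1-\pi(X_i))\,\mu_0(X_i) - \mu_0(X_i) = \pi(X_i)\big(\mu_1(X_i) - \mu_0(X_i)\big) = \pi(X_i)\,\tau(X_i)$.

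Thus $V(\pi) = E[\pi(X_i)\,\tau(X_i)]$, which I would write as $\int \pi(x)\,\tau(x)\,dP_{X_i}$. Since the integrand is, for each fixed $x$, a product of $\tau(x)$ (a constant not depending on $\pi$) with $\pi(x) \in [0,1]$, the integral is maximized by maximizing $\pi(x)\tau(x)$ separately at each $x$: choose $\pi(x) = 1$ when $\tau(x) > 0$, $\pi(x) = 0$ when $\tau(x) < 0$, and either value (conventionally $0$) when $\tau(x) = 0$. This is exactly the rule $\pi^*$ in the statement, and it lies in the admissible class since it is a valid map $\mathcal{X} \to [0,1]$ depending only on $x$. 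I would also note that any other maximizer differs from $\pi^*$ only on the set $\{\tau(x) = 0\}$, so $\pi^*$ attains $\sup_\pi V(\pi) = E[\tau(X_i)_+]$.

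There is no real obstacle here — the lemma is essentially the standard policy-learning fact that the oracle policy treats exactly the units with positive conditional average treatment effect. The only point requiring a small amount of care is justifying the pointwise (measurable-selection) argument for the ``else $0$'' tie-breaking so that $\pi^*$ is a well-defined measurable function and the interchange of maximization and integration is legitimate; this is immediate because $\tau$ is measurable and $\pi^* = \Ind(\tau(\cdot) > 0)$ is therefore measurable, and $\pi(x)\tau(x) \le \max\{0,\tau(x)\}$ holds for every admissible $\pi$ with equality at $\pi^*$, so no genuine optimization over function space is needed. I would keep the write-up to the three displays above (expansion of $V$, reduction to $\int \pi\tau\,dP_X$, pointwise optimum) and state the conclusion.
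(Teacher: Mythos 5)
Your proposal is correct and follows essentially the same route as the paper: both reduce $V(\pi)$ to $E[\pi(X_i)\,\tau(X_i)]$ via conditioning and the ignorability of $T_i^{\pi}$ given $X_i$, and then maximize pointwise to obtain $\pi^*(x) = \Ind(\tau(x) > 0)$. The only cosmetic difference is that you condition on $X_i$ first and exploit the Bernoulli structure of $T_i^{\pi}$, whereas the paper first decomposes over the treatment indicator and then applies Bayes' theorem; the resulting identity and conclusion are identical.
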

\begin{proof}
    \begin{align*}
        V(\pi) &= E[Y_i(T_i^{\pi}) - Y_i(0)] \\
        &= E[Y_i(T_i^{\pi}) - Y_i(0)| T_i^{\pi} = 1]P(T_i^{\pi} = 1) + E[Y_i(T_i^{\pi}) - Y_i(0)| T_i^{\pi} = 0]P(T_i^{\pi} = 0) \\
        &= E[Y_i(1) - Y_i(0)| T_i^{\pi} = 1]P(T_i^{\pi} = 1).
    \end{align*}
    By the tower property, 
        \[E[Y_i(1) - Y_i(0)| T_i^{\pi} = 1] = E[E[Y_i(1) - Y_i(0) | T_i^{\pi} = 1, X_i] | T_i^{\pi} = 1].\]
    Since we have ignorability in $X_i$, 
    \[E[Y_i(1) - Y_i(0) | T_i^{\pi} = 1, X_i] = E[Y_i(1) - Y_i(0) | X_i] = \tau(X_i).\]
    Thus, 
    \[V(\pi) = E[\tau(X_i)| T_i^{\pi} = 1]P(T_i^{\pi} = 1) = \int \tau(X_i) P(T_i^{\pi} = 1) dP_{X_i | T_i^{\pi} = 1}.\]
    By Bayes' theorem, 
    \[\int \tau(X_i) P(T_i^{\pi} = 1) dP_{X_i | T_i^{\pi} = 1} = \int \tau(X_i) P(T_i^{\pi} = 1 | X_i) dP_{X_i} = \int \tau(X_i) \pi(X_i) dP_{X_i}. \]
    The $\pi$ function that maximizes this is exactly as given in Lemma \ref{lem:optimal_pi}.
\end{proof}

\begin{proposition}[ATT Best Response]\label{prop:att_best_response}
    Suppose $\Pi$ is the set of all functions $\pi: \mathcal{X} \to [0,1]$, and suppose $X$ is a discrete random variable supported on $\mathcal{X}$. Then the agent's best response to $w_{\text{ATT}}$ is:
    \[\pi^{w_{\text{ATT}}}(x) = \begin{cases}
    1 & \text{if } x \in \arg\max_{x} \tau(x) \text{ and } \tau(x) > 0 \\
    0 & \text{otherwise}.
    \end{cases}\]
\end{proposition}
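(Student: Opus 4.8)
The plan is to imitate the proof of Proposition~\ref{prop:ato_best_response} almost line for line, the only new ingredient being the counterfactual correction term. First I would compute the expected reward: by the same tower-property and exchangeability argument used there, $E[w_{\text{ATT}}(\mathbf{X},\mathbf{T}^\pi,\mathbf{Y})]$ reduces to $E\big[\,E[\,Y_i-\hat\mu_0(X_i)\mid T_i^\pi=1,X_i\,]\mid T_i^\pi=1\,\big]$, where the $T_i^\pi=1$-conditioning implicitly restricts attention to rules with $\int\pi\,dP_X>0$ (otherwise the reward is $0$, and treating no one is at least as good).

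Next I would simplify the integrand. Since $T_i^\pi$ depends only on $X_i$, strong ignorability $\{Y_i(0),Y_i(1)\}\indep T_i^\pi\mid X_i$ together with SUTVA gives $E[Y_i\mid T_i^\pi=1,X_i]=\mu_1(X_i)$; and since $\hat\mu_0$ is built from data external to the agent's sample and satisfies $E[\hat\mu_0(X_i)\mid X_i]=\mu_0(X_i)$, it is conditionally mean-unbiased given $(X_i,T_i^\pi)$, so $E[\hat\mu_0(X_i)\mid T_i^\pi=1,X_i]=\mu_0(X_i)$. Hence the integrand equals $\mu_1(X_i)-\mu_0(X_i)=\tau(X_i)$, and for discrete $X$,
\[
E[w_{\text{ATT}}(\mathbf{X},\mathbf{T}^\pi,\mathbf{Y})] = \sum_{x\in\mathcal{X}}\tau(x)\,P(X_i=x\mid T_i^\pi=1) = \sum_{x\in\mathcal{X}}\tau(x)\,\frac{\pi(x)P(X_i=x)}{\sum_{z\in\mathcal{X}}\pi(z)P(X_i=z)}.
\]

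Finally I would optimize over $\pi$. The right-hand side is a convex combination of the values $\{\tau(x):\pi(x)>0\}$, hence is at most $\max_x\tau(x)$, with equality iff all the weight sits on $\arg\max_x\tau(x)$. If $\max_x\tau(x)>0$, setting $\pi(x)=1$ on $\arg\max_x\tau(x)$ and $\pi\equiv 0$ elsewhere attains this strictly positive value and is therefore optimal; if $\max_x\tau(x)\le 0$, any rule treating a positive mass of units gives a nonpositive reward while $\pi\equiv 0$ gives exactly $0$, so not treating is optimal. In either case the stated $\pi^{w_{\text{ATT}}}$ is a best response.

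The step I expect to be the main obstacle is the first one: for finite $n$ the expectation of the ratio $\sum_i(Y_i-\hat\mu_0(X_i))T_i^\pi\big/\sum_i T_i^\pi$ is not literally the population conditional mean, so one must either pass to the large-$n$ limit or reproduce the exact reduction used in Proposition~\ref{prop:ato_best_response}. Once that reduction is granted, the remainder is the same convex-combination argument as in the ATO/ATT regret results, the only genuinely new point being that unbiasedness of $\hat\mu_0$ turns the correction term into $-\mu_0(X_i)$, so that $\tau$ replaces $\mu_1$ throughout.
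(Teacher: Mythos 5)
Your proposal is correct and follows essentially the same route as the paper's proof: the tower property plus strong ignorability reduce the integrand to $\tau(X_i)$, unbiasedness of $\hat{\mu}_0$ handles the correction term, and the discrete case becomes a convex combination maximized by concentrating treatment on $\arg\max_x \tau(x)$ when $\max_x\tau(x)>0$. The finite-$n$ ratio subtlety you flag is indeed present in the paper's argument as well, which likewise passes directly to the population conditional mean without further comment.
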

\begin{proof}
    \begin{align*}
        E[w_{\text{ATT}}(\mathbf{X}, \mathbf{T}^{\pi}, \mathbf{Y})] &=  E[Y_i - \hat{\mu}_0(X_i)| T_i^{\pi} = 1] \\
        &= E[Y_i | T_i^{\pi} = 1] - E[E[\hat{\mu}_0(X_i) | T_i^{\pi} = 1, X_i] | T_i^{\pi} = 1].
    \end{align*}
    Since $E[\hat{\mu}_0(x)] = \mu_0(x)$ for all $x \in \mathcal{X}$, we have $E[\hat{\mu}_0(X_i) | T_i^{\pi} = 1, X_i] = \mu_0(X_i)$. Thus, 
    \[E[w_{\text{ATT}}(\mathbf{X}, \mathbf{T}^{\pi}, \mathbf{Y})] = E[Y_i | T_i^{\pi} = 1] - E[\mu_0(X_i) | T_i^{\pi} = 1]. \]
    Since we have ignorability in $X_i$,
    \[E[Y_i | T_i^{\pi} = 1] = E[E[Y_i | T_i^{\pi} = 1, X_i] | T_i^{\pi} = 1] = E[E[Y_i(1) | X_i]| T_i^{\pi} = 1] = E[\mu_1(X_i) | T_i^{\pi} = 1].\]

    Combining these, \[E[w_{\text{ATT}}(\mathbf{X}, \mathbf{T}^{\pi}, \mathbf{Y})] = E[ \mu_1(X_i) - \mu_0(X_i) | T_i^{\pi} = 1] = E[\tau(X_i) | T_i^{\pi} = 1].\]

    For discrete $X_i$, this is equal to 
    \[\sum_{x \in \mathcal{X}} \tau(x) P(X_i = x | T_i^{\pi} = 1) = \sum_{x \in \mathcal{X}} \tau(x) \frac{\pi(x)P(X_i = x)}{\sum_{z \in \mathcal{X}}\pi(z) P(X_i = z)}.\]
    
    The $\pi$ function that maximizes this is exactly that given in Proposition \ref{prop:att_best_response}.
\end{proof}

\begin{proposition}[TO Best Response]\label{prop:to_best_response}
    Suppose $\Pi$ is the set of all functions $\pi: \mathcal{X} \to [0,1]$. Then the agent's best response to $w_{\text{TO}}$ is:
    \[\pi^{w_{\text{TO}}}(x) = \begin{cases}
    1 & \text{if } \mu_1(x) > 0 \\
    0 & \text{otherwise}.
    \end{cases}\]
\end{proposition}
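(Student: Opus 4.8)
The plan is to mirror the template used in the proofs of Propositions~\ref{prop:ato_best_response} and~\ref{prop:att_best_response}: express $E[w_{\text{TO}}(\mathbf{X},\mathbf{T}^{\pi},\mathbf{Y})]$ as a linear functional of $\pi$ against the measure $P_X$, and then maximize pointwise. First I would use linearity of expectation together with the i.i.d.\ structure to write
\[
E[w_{\text{TO}}(\mathbf{X},\mathbf{T}^{\pi},\mathbf{Y})] = \sum_{i=1}^n E[Y_i T_i^{\pi}] = n\, E[Y_1 T_1^{\pi}].
\]
Next, applying the tower property and conditioning on $X_1$, and using that $T_1^{\pi}$ is a function of $X_1$ alone, $E[Y_1 T_1^{\pi} \mid X_1] = \pi(X_1)\, E[Y_1 \mid X_1, T_1^{\pi}=1]$; the strong ignorability property $\{Y_i(0),Y_i(1)\}\indep T_i^{\pi}\mid X_i$ then gives $E[Y_1\mid X_1,T_1^{\pi}=1] = E[Y_1(1)\mid X_1] = \mu_1(X_1)$. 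Hence
\[
E[w_{\text{TO}}(\mathbf{X},\mathbf{T}^{\pi},\mathbf{Y})] = n\int \mu_1(x)\,\pi(x)\, dP_X.
\]

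Since $\Pi$ is the set of all $\pi:\mathcal{X}\to[0,1]$, this integral is maximized by choosing, for each $x$, the value $\pi(x)\in[0,1]$ that maximizes $\mu_1(x)\pi(x)$; that is, $\pi(x)=1$ when $\mu_1(x)>0$ and $\pi(x)=0$ when $\mu_1(x)<0$. On the set $\{x:\mu_1(x)=0\}$ every value of $\pi(x)$ is optimal, so the rule displayed in the proposition---which sets $\pi(x)=0$ there---is one valid best response, and I would flag this (non-)uniqueness explicitly so that the ``$=$'' in the statement is understood as selecting a canonical maximizer.

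The only real subtlety, rather than a genuine obstacle, is justifying that pointwise optimization of $\int \mu_1\,\pi\,dP_X$ subject to the constraint $\pi(x)\in[0,1]$ is legitimate: this holds because the constraint set decomposes coordinatewise and the objective is linear in $\pi$, so a $P_X$-a.e.\ pointwise maximizer is a maximizer of the integral. I would state this in one line. Note that, unlike $w_{\text{ATT}}$, no estimator $\hat\mu_0$ enters, so no positivity restriction on $\Pi$ is needed; and unlike $w_{\text{ATO}}$ there is no random denominator, so the expectation is finite under the standing integrability of $Y$, and no discreteness assumption on $X$ is required---which is why the statement of this proposition omits it.
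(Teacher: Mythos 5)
Your proposal is correct and follows essentially the same route as the paper's proof: both reduce $E[w_{\text{TO}}]$ to $n\int \mu_1(x)\pi(x)\,dP_X$ (the paper conditions on $\{T_i^{\pi}=1\}$ first and then applies Bayes' theorem, whereas you condition on $X_1$ first, but these are the same computation) and then maximize pointwise. Your added remarks on non-uniqueness over $\{x:\mu_1(x)=0\}$ and on why no discreteness assumption is needed are correct refinements that the paper leaves implicit.
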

\begin{proof}
    \[E[w_{\text{TO}}(\mathbf{X}, \mathbf{T}^{\pi}, \mathbf{Y})] = n E[Y_i | T_i^{\pi} = 1] P(T_i^{\pi} = 1).\]
    Since we have ignorability in $X_i$, we have $E[Y_i | T_i^{\pi} = 1] = E[\mu_1(X_i) | T_i^{\pi} = 1]$ (shown in more detail in the above proofs). Thus, 
    \[E[w_{\text{TO}}(\mathbf{X}, \mathbf{T}^{\pi}, \mathbf{Y})] = n E[\mu_1(X_i) | T_i^{\pi} = 1] P(T_i^{\pi} = 1) = n \int \mu_1(X_i) P(T_i^{\pi} = 1) dP_{X_i | T_i^{\pi} = 1}.\]
    By Bayes' theorem, 
    \[n\int \mu_1(X_i) P(T_i^{\pi} = 1) dP_{X_i | T_i^{\pi} = 1} = n\int \mu_1(X_i) P(T_i^{\pi} = 1 | X_i) dP_{X_i} = n\int \mu_1(X_i) \pi(X_i) dP_{X_i}. \]
    The $\pi$ function that maximizes this is exactly that given in Proposition \ref{prop:to_best_response}.
\end{proof}

\begin{proposition}[TT Best Response]\label{prop:tt_best_response}
    Suppose $\Pi$ is the set of all functions $\pi: \mathcal{X} \to [0,1]$. Then the agent's best response to $w_{\text{TT}}$ is:
    \[\pi^{w_{\text{TT}}}(x) = \begin{cases}
    1 & \text{if } \tau(x) > 0 \\
    0 & \text{otherwise}.
    \end{cases}\]
\end{proposition}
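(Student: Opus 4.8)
The plan is to follow the same template used for the TO best response (Proposition~\ref{prop:to_best_response}), but carrying the $\hat\mu_0$ correction through exactly as in the ATT best response proof (Proposition~\ref{prop:att_best_response}). First I would expand the expected reward using linearity and the i.i.d.\ assumption: since each summand has the same distribution, $E[w_{\text{TT}}(\mathbf{X},\mathbf{T}^{\pi},\mathbf{Y})] = n\, E[(Y_i - \hat\mu_0(X_i))T_i^{\pi}] = n\, E[Y_i - \hat\mu_0(X_i)\mid T_i^{\pi}=1]\,P(T_i^{\pi}=1)$, where the last equality just drops the $T_i^{\pi}=0$ term.

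Next I would simplify the conditional expectation. By the tower property and the hypothesis $E[\hat\mu_0(x)]=\mu_0(x)$ (together with the fact that $\hat\mu_0(X_i)$ depends only on $X_i$ and auxiliary randomness independent of $T_i^{\pi}$ given $X_i$), we get $E[\hat\mu_0(X_i)\mid T_i^{\pi}=1,X_i]=\mu_0(X_i)$, hence $E[\hat\mu_0(X_i)\mid T_i^{\pi}=1]=E[\mu_0(X_i)\mid T_i^{\pi}=1]$. By strong ignorability $\{Y_i(0),Y_i(1)\}\indep T_i^{\pi}\mid X_i$ (established in Appendix~\ref{sec:estimating_mu_0}, since $\pi$ depends only on $X_i$) and consistency, $E[Y_i\mid T_i^{\pi}=1,X_i]=\mu_1(X_i)$, so $E[Y_i\mid T_i^{\pi}=1]=E[\mu_1(X_i)\mid T_i^{\pi}=1]$. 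Combining, $E[Y_i-\hat\mu_0(X_i)\mid T_i^{\pi}=1]=E[\tau(X_i)\mid T_i^{\pi}=1]$.

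Then I would convert back to an unconditional integral by Bayes' theorem, exactly as in the TO proof: $E[\tau(X_i)\mid T_i^{\pi}=1]\,P(T_i^{\pi}=1)=\int \tau(X_i)\,P(T_i^{\pi}=1\mid X_i)\,dP_{X_i}=\int \tau(x)\,\pi(x)\,dP_{X}$. Therefore $E[w_{\text{TT}}(\mathbf{X},\mathbf{T}^{\pi},\mathbf{Y})]=n\int \tau(x)\,\pi(x)\,dP_{X}$. Since $\Pi$ contains all functions $\mathcal{X}\to[0,1]$ and the integrand is linear in $\pi(x)\in[0,1]$ for each fixed $x$, the integral is maximized by choosing $\pi(x)$ as large as possible wherever $\tau(x)>0$ and as small as possible wherever $\tau(x)<0$; this is precisely the rule $\pi^{w_{\text{TT}}}(x)=\Ind(\tau(x)>0)$ (on the set $\tau(x)=0$ the choice is immaterial, and the stated convention picks $0$).

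\textbf{Main obstacle.} There is essentially no hard step; the only point needing a little care is the justification that one may optimize the integral pointwise — i.e.\ that the pointwise maximizer is an admissible (measurable) element of $\Pi$ and that no other $\pi$ does strictly better — which follows immediately because $\tau$ is measurable and the objective splits as an integral of a pointwise-linear function of $\pi(x)$. I would also note explicitly that, unlike the ATO/ATT cases, no discreteness assumption on $X$ is needed here because the absence of a normalizing denominator means the objective is already a plain integral against $P_X$.
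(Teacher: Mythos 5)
Your proposal is correct and follows essentially the same route as the paper: reduce $E[w_{\text{TT}}]$ to $n\,E[\tau(X_i)\mid T_i^{\pi}=1]\,P(T_i^{\pi}=1)$ via unbiasedness of $\hat\mu_0$ and ignorability, then recognize this as $n\int \tau(x)\pi(x)\,dP_X$ and maximize pointwise. The only cosmetic difference is that the paper delegates the final Bayes-theorem conversion and pointwise maximization to its Lemma~\ref{lem:optimal_pi} (noting $E[w_{\text{TT}}]=nV(\pi)$), whereas you inline that argument; your closing remarks about measurability and the absence of a discreteness assumption are accurate and consistent with the paper.
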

\begin{proof}
    \[E[w_{\text{TT}}(\mathbf{X}, \mathbf{T}^{\pi}, \mathbf{Y})] = n E[Y_i - \hat{\mu}_0(X_i) | T_i^{\pi} = 1] P(T_i^{\pi} = 1).\]
    Since we have ignorability in $X_i$, we have $E[Y_i | T_i^{\pi} = 1] = E[\mu_1(X_i) | T_i^{\pi} = 1]$.  (shown in more detail in the above proofs). 
    Since $E[\hat{\mu}_0(x)] = \mu_0(x)$ for all $x \in \mathcal{X}$, we have $E[\hat{\mu}_0(X_i) | T_i^{\pi} = 1] = E[\mu_0(X_i) | T_i^{\pi} = 1]$. Thus, 
    \[E[w_{\text{TT}}(\mathbf{X}, \mathbf{T}^{\pi}, \mathbf{Y})] = n E[\tau(X_i) | T_i^{\pi} = 1] P(T_i^{\pi} = 1) = n V(\pi).\]
    Applying Lemma \ref{lem:optimal_pi}, the $\pi$ function that maximizes this is exactly that given in Proposition \ref{prop:tt_best_response}.
\end{proof}

\subsection{Regret proofs}
We give proofs for the regret bounds stated for each reward function in Section \ref{sec:comparisons}.

\begin{repproposition}{prop:ato_regret}[ATO Regret]
    If the conditional mean untreated potential outcomes $\mu_0(x)$ are unbounded, then the regret for the reward function $w_{\text{ATO}}$ may be arbitrarily large.
\end{repproposition}
\begin{proof}
    We construct an family of distributions for which the regret is unbounded. Let $\mathcal{X} = \{0,1\}$ with $P(X_i = 1) = p$, and let $\mu_1(0) = 0$, $\mu_1(1) = 1$. Let $\mu_0(0) = \alpha$, and $\mu_0(1) = \beta$. Suppose $\Pi$ is the set of all functions $\pi: \mathcal{X} \to [0,1]$.

    For $w_{\text{ATO}}$, the agent's best response is $\pi^{w_{\text{ATO}}}(0) = 0$ and $\pi^{w_{\text{ATO}}}(1) = 1$.
    
    We illustrate two failure modes for the reward function $w_{\text{ATO}}$. First, suppose $\alpha < 0$, and $\beta = 0$. Then the regret is given by
    \[R(\pi^w) = \max_{\pi \in \Pi} V(\pi) - V(\pi^w) = -\alpha(1-p) + p - p = -\alpha(1-p). \]
    This is unbounded for unbounded $\alpha$. Intuitively, this illustrates an example where the agent ignores the higher treatment effect of the patients with a lower treated outcome, such as sicker patients with higher mortality probability but more benefit from surgery. Since the agent's best response does not account for the patient's untreated potential outcome, the agent thus ignores the fact that sicker patients would otherwise have very poor outcomes without treatment. This matches the findings from \citet{dranove2003more}.

    A second failure mode would be if $\alpha = -1$, and $\beta > 1$. Then the regret is given by
    \[R(\pi^w) = (1-p) - p(\beta - 1) = 1 - p\beta. \]
    This is also unbounded for unbounded $\beta$. This illustrates an example where the agent treats the patients with a higher treated outcome, even though treatment actually harms those patients, such as healthier patients who might incur more risks or side effects from treatment.
\end{proof}

\begin{repproposition}{prop:att_regret}[ATT Regret]
    Suppose $\Pi$ is the set of all functions $\pi: \mathcal{X} \to [0,1]$. Then the regret for the reward function $w_{\text{ATT}}$ is upper bounded by
    \[R(\pi^{w_{\text{ATT}}}) \leq \max_{\pi \in \Pi} V(\pi). \]
\end{repproposition}
\begin{proof}
    Let $\pi^{*} = \argmax_{\pi \in \Pi} V(\pi).$ Then
    \[\pi^{*}(x) = \begin{cases}
        1 & \text{if } \tau(x) > 0 \\
        0 & \text{otherwise}.
    \end{cases}\]
    \begin{align*}
        R(\pi^{w_{\text{ATT}}}) &= V(\pi^{*}) - V(\pi^{w_{\text{ATT}}}) \\
        &=E[\pi^{*}(X_i) \tau(X_i)] - E[\pi^{w_{\text{ATT}}}(X_i) \tau(X_i)] \\
        &=E[\tau(X_i) \Ind(\tau(X_i) > 0)] - E[\tau(X_i) \Ind(\tau(X_i) > 0 \cap X_i \in \arg\max_{x \in \mathcal{X}} \tau(x))] \\
        &= E[\tau(X_i)\Ind(\tau(X_i) > 0 \cap X_i \notin \arg\max_{x \in \mathcal{X}} \tau(x))]
    \end{align*}
\end{proof}

\begin{repproposition}{prop:tt_regret}[TT Regret]
    If $E[\hat{\mu}_0(X)] = \mu_0(X)$, then the regret from applying the reward function $w_{\text{TT}}$ is $R(\pi^{w_{\text{TT}}}) = 0$.
\end{repproposition}
\begin{proof}
    As shown in Proposition \ref{prop:tt_best_response}, \[E[w_{\text{TT}}(\mathbf{X}, \mathbf{T}^{\pi}, \mathbf{Y})] = n E[\tau(X_i) | T_i^{\pi} = 1] P(T_i^{\pi} = 1) = n V(\pi).\] Therefore, the agent's best response $\pi^{w_{\text{TT}}}$ also maximizes $V(\pi)$ over the same feasible set, $\Pi$.
\end{proof}

\section{Proofs from Section \ref{sec:ranking} on Ranking}

We give expanded notation and proofs for the results in Section \ref{sec:ranking} on modifications of $w_{\text{TT}}$ to satisfy ranking desiderata.

\subsection{Detailed notation for ranking with multiple agents}
We expand the notation for ranking with multiple agents, as applied in Section \ref{sec:ranking}. Suppose there are $K$ agents, where each agent $k$ observes its own sample of $n_k$ patients with covariates $\mathbf{X}^{(k)} = \{X_i^{(k)}\}_{i=1}^{n_k}$ drawn i.i.d.\ from distribution $P_{X^{(k)}}$ with support $\mathcal{X}$. Let $Y_i^{(k)}(t)$ denote the potential outcomes when agent $k$ treats the patient with treatment $t$. Let $\mu^{(k)}_t(x) = E[Y_i^{(k)}(t) | X^{(k)}_i = x]$, and $\tau^{(k)}(x) = E[Y_i^{(k)}(1) - Y_i^{(k)}(0) | X^{(k)}_i = x]$. 

For rankings to be meaningful, we assume that if the same patient with covariate value $x \in \mathcal{X}$ were to be treated by either agent $j$ or agent $k$, their potential outcomes would follow each agent's respective conditional potential outcome distributions for covariate value $x$. Furthermore, the conditional potential untreated outcome has the same distribution for all $k$: for each $x \in \mathcal{X}$, the distributions $P_{Y^{(k)}(0) | X^{(k)} = x}$ are identical for all $k \in \{1,...,K\}$. Let $\mu_0(x)$ denote the shared mean conditional untreated potential outcome, with $\mu^{(k)}_0(x) = \mu_0(x)$ for all $k$. In short, one provider not treating a patient is equivalent to another provider not treating the same patient.
 
Suppose agent $k$ chooses treatment policy $\pi_{k}$, thus producing realized treatments denoted $\mathbf{T}^{\pi_k} = \{T_i^{\pi_k}\}_{i=1}^{n_k}$ and outcomes $Y_i^{(k)} = Y_i^{(k)}(T_i^{\pi_k})$, $\mathbf{Y}^{(k)} = \{Y_i^{(k)}\}_{i=1}^{n_k}$.

Suppose the reward function $w$ is used to rank these $K$ agents in the following way: the principal publishes score functions $w_k: \mathcal{X}^{n_k} \times \{0,1\}^{n_k} \times \mathbb{R}^{n_k} \to \mathbb{R}$, and each agent $k$ gets score $w_k(\mathbf{X}^{(k)},\mathbf{T}^{\pi_k}, \mathbf{Y}^{(k)})$ after choosing their treatment policy $\pi_k$. The agents are then ranked from highest to lowest score function values.

We assume that each agent seeks to maximize their individual ranking, and make the simplifying assumption that the agents act independently: that is, each agent does not consider the potential actions of other agents when choosing their actions. This may be realistic in a setting with a large number of hospitals serving more-or-less independent populations, though more complex competitive multi-agent models may make for interesting future extensions.

\subsection{Proofs}

We give proofs for the proposition and theorems from Section \ref{sec:ranking}.

\begin{repproposition}{prop:tt_ranking_violated}
    If \[w_k(\mathbf{X}^{(k)}, \mathbf{T}^{\pi_k}, \mathbf{Y}^{(k)}) = \sum_{i=1}^{n_k} (Y_i^{(k)} - \hat{\mu}_0(X_i^{(k)})) T_i^{\pi_k},\]
    then both ranking properties in Definitions \ref{def:uniform_ranking} and \ref{def:relative_ranking} will be violated.
\end{repproposition}
\begin{proof}
    Suppose for agents $j$ and $k$, $\Pi_j$ and $\Pi_k$ are both unconstrained. For $w_k$ as defined above,
    \[\max_{\pi_j \in \Pi_j}E[w_j(\mathbf{X}^{(j)},\mathbf{T}^{\pi_j}, \mathbf{Y}^{(k)})] = n_j E[\tau^{(j)}(X^{(j)}) \Ind(\tau^{(j)}(X^{(j)}) > 0)], \] 
    \[\max_{\pi_k \in \Pi_k}E[w_k(\mathbf{X}^{(k)},\mathbf{T}^{\pi_k}, \mathbf{Y}^{(k)})] = n_k E[\tau^{(k)}(X^{(k)}) \Ind(\tau^{(k)}(X^{(k)}) > 0)]. \]
    
    For Definition \ref{def:uniform_ranking}, suppose $\tau^{(j)}(x) \geq \tau^{(k)}(x)$ for all $x \in \mathcal{X}$, and $X^{(j)}$ and $X^{(k)}$ are identically distributed. Then,
    \[E[\tau^{(j)}(X^{(j)}) \Ind(\tau^{(j)}(X^{(j)}) > 0)] \geq E[\tau^{(k)}(X^{(k)}) \Ind(\tau^{(k)}(X^{(k)}) > 0)].\]
     Let $n_j, n_k$ be a pair such that \[n_j < n_k \frac{E[\tau^{(k)}(X^{(k)}) \Ind(\tau^{(k)}(X^{(k)}) > 0)]}{E[\tau^{(j)}(X^{(j)}) \Ind(\tau^{(j)}(X^{(j)}) > 0)]}.\] 
    This immediately results in Definition \ref{def:uniform_ranking} being violated.

    Definition \ref{def:relative_ranking} is also violated, since $\tau^{(j)}(x) \geq \tau^{(k)}(x)$ for all $x \in \mathcal{X}$ implies that $E[\tau^{(j)}(X_0)] \geq E[\tau^{(k)}(X_0)]$  for any reference population $P_{X_0}$. However, for the above $n_j, n_k$, we've shown that \[\max_{\pi_j \in \Pi_j}E[w_j(\mathbf{X}^{(j)},\mathbf{T}^{\pi_j}, \mathbf{Y}^{(k)})] < \max_{\pi_k \in \Pi_k}E[w_k(\mathbf{X}^{(k)},\mathbf{T}^{\pi_k}, \mathbf{Y}^{(k)})].\]
\end{proof}

\begin{reptheorem}{thm:reweighting}[Incentive Alignment]
    Suppose $E[\hat{\mu}_0(x)] = \mu_0(x)$, and suppose $\Pi$ is the set of all functions $\pi: \mathcal{X} \to [0,1]$. For any function $g: \mathcal{X} \to \mathbb{R}^+$, $w_{\text{TT}}^g$ yields an agent best response with zero regret. 
\end{reptheorem}

\begin{proof}
By the tower property, 
\begin{align*}
    E[w_{\text{TT}}^g(\mathbf{Y}, \mathbf{T}^{\pi}, \mathbf{X})] &= nP(T^{\pi}_i = 1)E[E[(Y_i - \hat{\mu}_0(X_i))g(X_i)| T_i^{\pi} = 1, X_i]| T_i^{\pi} = 1]\\
    &= nP(T^{\pi}_i = 1)E[g(X_i) E[Y_i - \hat{\mu}_0(X_i)| T_i^{\pi} = 1, X_i]| T_i^{\pi} = 1].
\end{align*}


Since $E[\hat{\mu}_0(x)] = \mu_0(x)$ and we have ignorability in $X_i$, 
\[E[Y_i - \hat{\mu}_0(X_i)| T_i^{\pi} = 1, X_i] = \tau(X_i). \]
This is shown in more detail in the proof for Proposition \ref{prop:tt_best_response}. 

Combining this with the above, we have
\[E[w_{\text{TT}}^g(\mathbf{Y}, \mathbf{T}^{\pi}, \mathbf{X})] = nP(T^{\pi}_i = 1)E[g(X_i) \tau(X_i)| T_i^{\pi} = 1]. \]

Applying Bayes' theorem as in Lemma \ref{lem:optimal_pi},
\begin{align*}
    P(T^{\pi}_i = 1)E[g(X_i) \tau(X_i)| T_i^{\pi} = 1] &= \int g(X_i) \tau(X_i) P(T^{\pi}_i = 1) dP_{X_i | T^{\pi}_i = 1} \\
    &= \int g(X_i) \tau(X_i) P(T^{\pi}_i = 1 | X_i) dP_{X_i}\\
    &= \int g(X_i) \tau(X_i) \pi(X_i) dP_{X_i}.
\end{align*}
Therefore, 
\[E[w_{\text{TT}}^g(\mathbf{Y}, \mathbf{T}^{\pi}, \mathbf{X})] = nE[g(X_i) \tau(X_i) \pi(X_i)].\]

Since $g(X_i) > 0$, the treatment rule $\pi$ that maximizes this is the same as $\pi^{*}$ from Lemma \ref{lem:optimal_pi}. Therefore, the regret is zero.
\end{proof}

\begin{reptheorem}{thm:ranking_reweighted}[Ranking Desiderata Satisfied]
    Let $P_{X^{(k)}}$ be absolutely continuous with respect to $P_{X_0}$, and let $g_k = \frac{1}{n_k}\frac{dP_{X_0}}{dP_{X^{(k)}}}$ be the normalized Radon–Nikodym derivative of the reference distribution $P_{X_0}$ with respect to agent $k$'s covariate distribution $P_{X^{(k)}}$. Then 
    \begin{equation}\label{eq:ranking_reweighted}
        w_k(\mathbf{X}^{(k)}, \mathbf{T}^{\pi_k}, \mathbf{Y}^{(k)}) = \sum_{i=1}^{n_k} (Y_i^{(k)} - \hat{\mu}_0(X_i^{(k)})) T_i^{\pi_k} g_k(X^{(k)})
    \end{equation}
    satisfies both ranking properties in Definitions \ref{def:uniform_ranking} and \ref{def:relative_ranking} as long as $\Pi_k$ is unconstrained and treatment effects are nonnegative, $\tau^{(k)}(x) \geq 0$, for all $k \in \{1,...,K\}$.
\end{reptheorem}
\begin{proof}
    As shown in the proof of Theorem \ref{thm:reweighting},
    \[E[w_k(\mathbf{X}^{(k)}, \mathbf{T}^{\pi_k}, \mathbf{Y}^{(k)})] = n_k E[\tau^{(k)}(X_i^{(k)})\pi_k(X_i^{(k)}) g_k(X_i^{(k)})].\]
    
    With $g_k = \frac{1}{n_k}\frac{dP_{X_0}}{dP_{X^{(k)}}}$, we have
    \[n_k E[\tau^{(k)}(X_i^{(k)})\pi_k(X_i^{(k)}) g_k(X_i^{(k)})] = E[\tau^{(k)}(X_0)\pi_k(X_0)].\]

    Let $\Pi_j$, $\Pi_k$ both be unconstrained. Then 
    \[\max_{\pi_j \in \Pi_j}E[w_j(\mathbf{X}^{(j)},\mathbf{T}^{\pi_j}, \mathbf{Y}^{(k)})] = E[\tau^{(j)}(X_0) \Ind(\tau^{(j)}(X_0) > 0)], \] 
    \[\max_{\pi_k \in \Pi_k}E[w_k(\mathbf{X}^{(k)},\mathbf{T}^{\pi_k}, \mathbf{Y}^{(k)})] = E[\tau^{(k)}(X_0) \Ind(\tau^{(k)}(X_0) > 0)]. \]
    Definition \ref{def:uniform_ranking} is immediately satisfied, since $\tau^{(j)}(x) \geq \tau^{(k)}(x)$ for all $x \in \mathcal{X}$ implies $E[\tau^{(k)}(X_0) \Ind(\tau^{(k)}(X_0) > 0)] \geq E[\tau^{(k)}(X_0) \Ind(\tau^{(k)}(X_0) > 0)]$.
    
    If $\tau^{(k)}(x) \geq 0$ for all $x \in \mathcal{X}$ and all $k \in \{1,...,K\}$, then 
    Definition \ref{def:relative_ranking} is satisfied, since 
    \[E[\tau^{(j)}(X_0)] \geq E[\tau^{(k)}(X_0)] \implies E[\tau^{(j)}(X_0) \Ind(\tau^{(j)}(X_0) > 0)] \geq E[\tau^{(k)}(X_0) \Ind(\tau^{(k)}(X_0) > 0)].\]
\end{proof}

\section{Proofs and Additional Results for Section \ref{sec:info_asymmetry} on Information Asymmetry}
In this section, we give proofs and additional regret bound results for Section \ref{sec:info_asymmetry} on information asymmetry. First, we prove the regret bounds in Section \ref{sec:info_asymmetry} when $\hat{\mu}_0(x)$ is estimated from auxiliary data where the mean untreated potential outcome conditioned on $X$, $\mu_0(x)$, is identifiable.

Second, we prove similar results when $\hat{\mu}_0(x)$ is estimated from the untreated units in the treatment population, where $T^{\pi} = 0$. Under information asymmetry, the mean untreated potential outcome conditioned on $X$, $\mu_0(X)$, is no longer identifiable. Therefore, $\hat{\mu}_0(x)$ will be subject to confounding bias. Still, if we apply a similar but stronger assumption than Assumption \ref{assumption:unobserved-confounding-avg-historical}, we can get similar regret bounds to Theorem \ref{thm:historical_regret}. 

\subsection{Detailed notation for information asymmetry}

We model information asymmetry in our principal agent game as follows: suppose the agent observes additional covariates per patient $U_i \in \mathcal{U}$, and selects a treatment rule $\pi: \mathcal{X}, \mathcal{U} \to [0,1]$ from a feasible set of treatment rules $\Pi$, with $\pi(X, U) = P(T_i^{\pi} = 1 | X, U)$. Suppose the principal still observes only $\{X_i, T_i^{\pi}, Y_i\}_{i=1}^n$, and chooses a reward function $w: \mathcal{X}^n \times \{0,1\}^n \times \mathbb{R}^n \to \mathbb{R}$ with which to reward the agent. Notably, the principal's reward function $w$ cannot depend on $U$. Let $\mu_t(X,U) = E[Y_i(t) | X,U]$ and $\tau(X,U) = E[Y_i(1) - Y_i(0) | X, U]$.

The utility is still defined as in Section \ref{sec:model}, and with the additional $U$ variable can be rewritten as \[V(\pi) = E[\pi(X_i,U_i)\tau(X_i,U_i)].\]
The regret is also still defined as in Section \ref{sec:model}.

\subsection{Proofs for regret bounds with unbiased counterfactual estimate}

Suppose the principal estimates the mean conditional untreated potential outcome from auxiliary data $\{X_j', T_j', Y_j'\}_{j=1}^m$ drawn i.i.d.\ from auxiliary dataset $\mathcal{Q}$, denoted $\hat{\mu}^{\mathcal{Q}}_0(x)$. Suppose we have a ``best case scenario'' where the relationship between $X_j'$ and $Y_j'(0)$ is the same in the auxiliary data as the relationship between $X_i$ and $Y_i(0)$ in the treatment population, and the mean untreated potential outcome conditional on $X_j'$ is identifiable from $\mathcal{Q}$, such that \[E[\hat{\mu}^{\mathcal{Q}}_0(x)] = E[Y_j'(0) | X_j' = x] = E[Y_i(0) | X_i = x].\]

Outside of this ideal setting, any distribution shift or problems with identifiability in the $\mathcal{Q}$ dataset would increase the regret. Proposition \ref{prop:regret_to_bias} below is an intermediate result that does not actually rely on identifiability of $\mu_0(x)$, and may provide a good starting point for future analyses of distribution shift or non-identifiability. To simplify the proofs below, we first give Lemma \ref{lem:info_asymmetry_w}.

\begin{lemma}\label{lem:info_asymmetry_w} Under information asymmetry,
    \[E[w_{\text{TT}}(\mathbf{X}, \mathbf{T}^{\pi}, \mathbf{Y})] = n(E[\pi(X_i,U_i) \mu_1(X_i, U_i)] - E[\pi(X_i, U_i) \hat{\mu}_0(X_i)]).\]
\end{lemma}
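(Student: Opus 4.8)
The plan is to expand the expectation of $w_{\text{TT}}(\mathbf{X},\mathbf{T}^\pi,\mathbf{Y}) = \sum_{i=1}^n (Y_i - \hat\mu_0(X_i)) T_i^\pi$ by linearity and the i.i.d.\ structure, then condition on $(X_i,U_i)$ to handle the treatment indicator and the observed outcome. First I would write $E[w_{\text{TT}}] = n\, E[(Y_i - \hat\mu_0(X_i)) T_i^\pi]$ using that the summands are identically distributed. Next, applying the tower property by conditioning on $(X_i,U_i)$, and using that $\pi(X_i,U_i) = P(T_i^\pi = 1 \mid X_i,U_i)$ together with the fact that $T_i^\pi$ depends only on $(X_i,U_i)$ by construction, I get
\[
E[(Y_i - \hat\mu_0(X_i)) T_i^\pi] = E\big[\pi(X_i,U_i)\, E[Y_i - \hat\mu_0(X_i) \mid X_i, U_i, T_i^\pi = 1]\big].
\]

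The next step is to simplify the inner conditional expectation. Since $T_i^\pi$ is a (possibly randomized) function of $(X_i,U_i)$ alone, we have strong ignorability $\{Y_i(0),Y_i(1)\} \indep T_i^\pi \mid X_i,U_i$, so $E[Y_i \mid X_i,U_i,T_i^\pi=1] = E[Y_i(1)\mid X_i,U_i] = \mu_1(X_i,U_i)$. For the second piece, $\hat\mu_0(X_i)$ is a function of $X_i$ only (it does not depend on $U_i$ or on the realized treatment), so $E[\hat\mu_0(X_i)\mid X_i,U_i,T_i^\pi=1] = \hat\mu_0(X_i)$; note that here we do \emph{not} invoke unbiasedness of $\hat\mu_0$, since the statement of the lemma keeps $\hat\mu_0$ unsimplified. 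Substituting these back and splitting the expectation gives
\[
E[w_{\text{TT}}(\mathbf{X},\mathbf{T}^\pi,\mathbf{Y})] = n\big(E[\pi(X_i,U_i)\mu_1(X_i,U_i)] - E[\pi(X_i,U_i)\hat\mu_0(X_i)]\big),
\]
which is exactly the claimed identity.

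This lemma is essentially bookkeeping, so I do not anticipate a genuine obstacle; the one point that needs care is being explicit that the ignorability argument still applies when the treatment rule depends on the \emph{larger} information set $(X_i,U_i)$ rather than just $X_i$ — the key is that $T_i^\pi$ is still a function of (and only of) the variables being conditioned on, so conditioning on $T_i^\pi = 1$ does not change the conditional law of the potential outcomes given $(X_i,U_i)$. A second small point is keeping straight that $\hat\mu_0$ is measurable with respect to $X_i$ alone (plus possibly exogenous auxiliary-data randomness independent of the treatment population), which is what lets it pass through the conditional expectation unchanged.
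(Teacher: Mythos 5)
Your proposal is correct and follows essentially the same route as the paper: both decompose $E[w_{\text{TT}}]$ into the $Y_i$ term and the $\hat\mu_0(X_i)$ term, invoke ignorability of the potential outcomes given $(X_i,U_i)$ to replace $E[Y_i\mid X_i,U_i,T_i^\pi=1]$ by $\mu_1(X_i,U_i)$, and convert the conditioning on $T_i^\pi=1$ into a weighting by $\pi(X_i,U_i)$ (the paper phrases this via Bayes' theorem on $P(T_i^\pi=1)\,dP_{X_i,U_i\mid T_i^\pi=1}$, you phrase it by conditioning on $(X_i,U_i)$ directly — the same computation in a different order). Your explicit remarks that unbiasedness of $\hat\mu_0$ is not needed here and that $\hat\mu_0(X_i)$ passes through the conditional expectation because it is $X_i$-measurable are accurate and consistent with how the paper uses the lemma.
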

\begin{proof}
    We have previously shown that 
    \begin{align*}
        E[w_{\text{TT}}(\mathbf{X}, \mathbf{T}^{\pi}, \mathbf{Y})] &= nP(T_i^{\pi} = 1)E[Y_i - \hat{\mu}_0(X_i) | T_i^{\pi} = 1] \\
        &= n(P(T_i^{\pi} = 1)E[Y_i| T_i^{\pi} = 1] - P(T_i^{\pi} = 1)E[\hat{\mu}_0(X_i) | T_i^{\pi} = 1]).
    \end{align*}
    Considering the first term, since we have ignorability in $X_i$ and $U_i$,
    \begin{align*}
        P(T_i^{\pi} = 1)E[Y_i| T_i^{\pi} = 1] &= P(T_i^{\pi} = 1)E[E[Y_i | X_i, U_i, T_i^{\pi} = 1]| T_i^{\pi} = 1] \\
        &= P(T_i^{\pi} = 1)E[\mu_1(X_i, U_i)| T_i^{\pi} = 1] \\
        &= P(T_i^{\pi} = 1) \int \mu_1(X_i, U_i) dP_{X_i, U_i | T_i^{\pi} = 1} \\
        &= \int \mu_1(X_i, U_i) P(T_i^{\pi} = 1 | X_i, U_i)dP_{X_i, U_i} \\
        &= E[\pi(X_i, U_i)\mu_1(X_i, U_i) ].
    \end{align*}
    For the second term, we again apply Bayes' theorem:
    \begin{align*}
        P(T_i^{\pi} = 1)E[\hat{\mu}_0(X_i) | T_i^{\pi} = 1] &= P(T_i^{\pi} = 1) \int \hat{\mu}_0(X_i) dP_{X_i, U_i | T_i^{\pi} = 1} \\
        &= \int \hat{\mu}_0(X_i) P(T_i^{\pi} = 1 | X_i, U_i)dP_{X_i, U_i} \\
        &= E[\pi(X_i, U_i)\hat{\mu}_0(X_i)].
    \end{align*}
\end{proof}


\begin{proposition}\label{prop:regret_to_bias}
Suppose the principal applies the reward function $w_{\text{TT}}$ with an estimate $\hat{\mu}^{\mathcal{Q}}_0(X)$. Then the regret is bounded by the average bias in the conditional untreated potential outcome estimate.
\begin{equation}
    R(\pi^w) \leq 2 E[|\hat{\mu}^{\mathcal{Q}}_0(X_i) - \mu_0(X_i, U_i)| ].
\end{equation}
\end{proposition}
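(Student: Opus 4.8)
The plan is to read off the agent's objective from Lemma~\ref{lem:info_asymmetry_w} and then run a standard optimization-error (``oracle inequality'') decomposition of the regret. Introduce the shorthand $b(X,U) := \hat\mu^{\mathcal{Q}}_0(X) - \mu_0(X,U)$, which is exactly the quantity appearing in absolute value on the right-hand side of the claim, and note that $\mu_1(X,U) - \hat\mu^{\mathcal{Q}}_0(X) = \tau(X,U) - b(X,U)$. By Lemma~\ref{lem:info_asymmetry_w} (with $\hat\mu_0 = \hat\mu^{\mathcal{Q}}_0$), $\tfrac{1}{n} E[w_{\text{TT}}(\mathbf{X},\mathbf{T}^{\pi},\mathbf{Y})] = E[\pi(X_i,U_i)(\tau(X_i,U_i) - b(X_i,U_i))] =: \hat{J}(\pi)$, so the agent's best response $\pi^w$ is a maximizer of $\hat{J}$ over $\Pi$. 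Meanwhile, from the definition of utility under information asymmetry, $V(\pi) = E[\pi(X_i,U_i)\tau(X_i,U_i)] =: J(\pi)$, and $\pi^* \in \arg\max_{\tilde\pi\in\Pi} V(\tilde\pi)$ is a maximizer of $J$ over $\Pi$.

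Next I would write the regret as a telescoping sum that inserts $\hat{J}$ at both policies:
\[
R(\pi^w) = J(\pi^*) - J(\pi^w)
= \underbrace{\big(J(\pi^*) - \hat{J}(\pi^*)\big)}_{(\mathrm{I})}
+ \underbrace{\big(\hat{J}(\pi^*) - \hat{J}(\pi^w)\big)}_{(\mathrm{II})}
+ \underbrace{\big(\hat{J}(\pi^w) - J(\pi^w)\big)}_{(\mathrm{III})}.
\]
Here $(\mathrm{II}) \le 0$ because $\pi^w$ maximizes $\hat{J}$ over $\Pi$ and $\pi^* \in \Pi$. For the remaining two terms, observe that for any feasible $\pi$ we have $J(\pi) - \hat{J}(\pi) = E[\pi(X_i,U_i)\,b(X_i,U_i)]$, and since $\pi(X_i,U_i)\in[0,1]$ the triangle inequality gives $|J(\pi) - \hat{J}(\pi)| \le E[|b(X_i,U_i)|] = E[|\hat\mu^{\mathcal{Q}}_0(X_i) - \mu_0(X_i,U_i)|]$. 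Applying this to $(\mathrm{I})$ and $(\mathrm{III})$ and discarding the nonpositive $(\mathrm{II})$ yields $R(\pi^w) \le 2\,E[|\hat\mu^{\mathcal{Q}}_0(X_i) - \mu_0(X_i,U_i)|]$.

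I do not expect a genuine obstacle: the whole argument is the one-line oracle inequality above. The only points requiring a little care are the step $|E[\pi b]| \le E[|b|]$ (which uses precisely that $\pi$ is a probability), the tacit existence of the maximizers $\pi^*$ and $\pi^w$ in $\Pi$ (assumed throughout), and, if $\hat\mu^{\mathcal{Q}}_0$ is random because it is estimated on the auxiliary sample, taking an outer expectation over that sample and applying the bound conditionally. It is worth flagging what the proof does \emph{not} use: neither positivity nor identifiability of $\mu_0(X)$ from $\mathcal{Q}$ enters, and $\Pi$ need not be unconstrained — which is exactly why this proposition is the right intermediate step, specializing to Theorem~\ref{thm:historical_regret} when $b = \mu_0(X) - \mu_0(X,U)$ under Assumption~\ref{assumption:unobserved-confounding-avg-historical}, and extending naturally to settings with distribution shift in $\mathcal{Q}$.
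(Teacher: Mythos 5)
Your proof is correct and is essentially identical to the paper's: the paper also defines $\hat{V}(\pi)=\tfrac{1}{n}E[w_{\text{TT}}]$ via Lemma~\ref{lem:info_asymmetry_w}, inserts $\hat{V}(\pi^{w})-\hat{V}(\pi^{*})\le 0$ into the regret, and bounds the two remaining differences by $E[|\hat{\mu}^{\mathcal{Q}}_0(X_i)-\mu_0(X_i,U_i)|]$ using $\pi\in[0,1]$ and Jensen's inequality. Your telescoping decomposition and the paper's one-line chain of inequalities are the same argument written in slightly different notation.
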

\begin{proof}
Let $\hat{V}(\pi) = \frac{1}{n}E[{w_{\text{TT}}}(\mathbf{X}, \mathbf{T}^{\pi}, \mathbf{Y})]$. Then $\pi^{w_{\text{TT}}}$ maximizes $\hat{V}(\pi)$ as well. 
\begin{align*}
    V(\pi^{*}) - V(\pi^{w_{\text{TT}}}) &\leq V(\pi^{*}) - V(\pi^{w_{\text{TT}}}) + \hat{V}(\pi^{w_{\text{TT}}}) - \hat{V}(\pi^{*}) \\
    &\leq |V(\pi^{*}) - \hat{V}(\pi^{*})| + |\hat{V}(\pi^{w_{\text{TT}}}) - V(\pi^{w_{\text{TT}}})| \\
    &\leq 2 \max_{\pi \in \Pi}|\hat{V}(\pi) - V(\pi)| \\
    &= 2 \max_{\pi \in \Pi}|E[\pi(X_i,U_i)(\hat{\mu}^{\mathcal{Q}}_0(X_i) - \mu_0(X_i, U_i))]|,
\end{align*}
where the last line follows from Lemma \ref{lem:info_asymmetry_w}.
By Jensen's inequality,
\begin{align*}
    \max_{\pi \in \Pi}|E[\pi(X,U)(\hat{\mu}_0^{\mathcal{Q}}(X_i) - \mu_0(X_i, U_i))]|  &\leq \max_{\pi \in \Pi}E[\pi(X_i,U_i)|\hat{\mu}_0^{\mathcal{Q}}(X_i) - \mu_0(X_i, U_i)| ] \\
    &\leq E[|\hat{\mu}_0^{\mathcal{Q}}(X_i) - \mu_0(X_i, U_i)| ].
\end{align*}
\end{proof}

\begin{reptheorem}{thm:historical_regret}[Regret With Information Asymmetry]
    If the principal applies the reward function from $w_{\text{TT}}$ with an unbiased estimate $\hat{\mu}_0(X)$ where $E[\hat{\mu}_0(x)] = \mu_0(x)$, then under Assumption \ref{assumption:unobserved-confounding-avg-historical}, the regret is upper bounded as
    \[R(\pi^{w_{\text{TT}}}) \leq 2\gamma_{\text{marg}}.\]
\end{reptheorem}
\begin{proof}
    This follows similarly to the proof of Proposition \ref{prop:regret_to_bias}:
    \begin{align*}
    V(\pi^{*}) - V(\pi^{w_{\text{TT}}}) &\leq V(\pi^{*}) - V(\pi^{w_{\text{TT}}}) + \hat{V}(\pi^{w_{\text{TT}}}) - \hat{V}(\pi^{*}) \\
    &\leq |V(\pi^{*}) - \hat{V}(\pi^{*})| + |\hat{V}(\pi^{w_{\text{TT}}}) - V(\pi^{w_{\text{TT}}})| \\
    &\leq 2 \max_{\pi \in \Pi}|\hat{V}(\pi) - V(\pi)| \\
    &= 2 \max_{\pi \in \Pi}|E[\pi(X_i,U_i)(\hat{\mu}_0(X_i) - \mu_0(X_i, U_i))]|
\end{align*}
By the tower property, 
\begin{align*}
    E[\pi(X,U)\hat{\mu}_0(X)] &= E[E[\hat{\mu}_0(X_i)\pi(X_i,U_i)| X_i, U_i]] \\
    &= E[\pi(X_i,U_i)E[\hat{\mu}_0(X_i)| X_i, U_i]] \\
    &= E[\pi(X_i,U_i)\mu_0(X_i)].
\end{align*}
Therefore, 
\[E[\pi(X_i,U_i)(\hat{\mu}_0(X_i) - \mu_0(X_i, U_i))] = E[\pi(X_i,U_i)(\mu_0(X_i) - \mu_0(X_i, U_i))].\]
By Jensen's inequality,
\begin{align*}
    \max_{\pi \in \Pi}|E[\pi(X_i,U_i)(\mu_0(X_i) - \mu_0(X_i, U_i))]|  &\leq \max_{\pi \in \Pi}E[\pi(X_i,U_i)|\mu_0(X_i) - \mu_0(X_i, U_i)| ] \\
    &\leq E[|\mu_0(X_i) - \mu_0(X_i, U_i)| ].
\end{align*}
    The regret bound then follows directly from Assumption \ref{assumption:unobserved-confounding-avg-historical} which says that \[E[ | \mu_0(X_i) - \mu_0(X_i,U_i)| ] \leq \gamma_{\text{marg}}.\]
\end{proof}

\begin{repproposition}{prop:historical_regret_tight}
    For all $\epsilon > 0$, there exists a distribution of $X_i, U_i, Y_i(0), Y_i(1)$ 
    wherein $R(\pi^{w_{\text{TT}}}) \geq \gamma_{\text{marg}} -\epsilon$.
\end{repproposition}
\begin{proof}
    We construct a family of distributions of $X_i, U_i, Y_i(0), Y_i(1)$ that achieves this regret bound. Let $U \in \{0,1\}$, with $P(U = 1) = \frac{1}{2}$. Suppose $X$ is entirely uncorrelated with $Y_i(0), Y_i(1)$, such that $E[Y(t) | X] = E[Y(t)] = \mu_t$.
    For $\alpha > 0, \beta > 0$, let 
    \[\mu_1(x, u) = \begin{cases}
        0 & \text{if } u = 1 \\
        \beta & \text{if } u = 0
    \end{cases}, \quad \mu_0(x, u) = \begin{cases}
        -\alpha & \text{if } u = 1 \\
        \alpha & \text{if } u = 0
    \end{cases},\]
    which also means that $\mu_0(x) = \mu_0 = 0$ for all $x$.
    Suppose $\Pi$ includes all functions $\pi: \mathcal{X}, \mathcal{U} \to [0,1]$. Then by Lemma \ref{lem:info_asymmetry_w}, and assuming $\hat{\mu}_0(x)$ is unbiased, we have 
    \[E[w_{\text{TT}}(\mathbf{X}, \mathbf{T}^{\pi}, \mathbf{Y})] = n(E[\pi(X_i,U_i) (\mu_1(X_i, U_i)- \mu_0(X_i))].\]
    
    The resulting policy $\pi^{w_{\text{TT}}}$ that maximizes this is 
    \[\pi^{w_{\text{TT}}}(x,u) = \begin{cases}
        0 & \text{if } u = 1 \\
        1 & \text{if } u = 0
    \end{cases}.\]

    Suppose $\beta < \alpha$. Then the optimal policy is 
    \[\pi^{*}(x,u) = \begin{cases}
        1 & \text{if } u = 1 \\
        0 & \text{if } u = 0
    \end{cases}.\]
    with utility $V(\pi^{*}) = \frac{\alpha}{2}$. The regret $R(\pi^{w_{\text{TT}}})$ is then
    \[R(\pi^{w_{\text{TT}}}) = \frac{\alpha}{2} - \frac{\beta - \alpha}{2} = \alpha - \frac{\beta}{2}.\]
    Note that $\gamma_{\text{marg}} = \alpha$. For any $\epsilon$, choosing $\beta = \epsilon$ gets $R(\pi^{w_{\text{TT}}}) = \gamma_{\text{marg}} - \frac{\epsilon}{2}$, satisfying the bound in Proposition \ref{prop:historical_regret_tight}.
\end{proof}

\subsection{Additional regret bounds when estimating the counterfactual using agent data}

Suppose the principal estimates $\hat{\mu}_0(x)$ from the untreated data from the agent's treatment population, i.e. those individuals for whom $T_i^{\pi} = 0$. As discussed in Sections \ref{sec:info_asymmetry} and \ref{sec:estimating_mu_0}, under full information symmetry, the mean conditional untreated potential outcome is identifiable as long as the agent's treatment rule class $\Pi$ maintains positivity, $\pi(x) > 0$ for all $x \in \mathcal{X}$. Under information asymmetry, positivity is no longer sufficient for identifiability, as the agent's additional information makes $T_i^{\pi}$ depend on $U_i$, and thus ignorability in $X_i$ is no longer satisfied. 

Still, we can analyze what happens when the principal constructs a counterfactual estimator from the agent's untreated outcomes. This estimator depends on the agent's treatment rule $\pi$, and unlike in Section \ref{sec:estimating_mu_0}, information asymmetry means that the agent's treatment rule $\pi$ affects this estimator as well. Let $\hat{\mu}_0^{\pi}(X) = E[Y_i | T^{\pi} = 0, X]$ denote the principal's estimate of the mean untreated potential outcome from the agent's data.

As with the auxiliary data, if we assume a bound on how much $U_i$ affects the untreated potential outcome given $X_i$, we can still bound the regret if the principal were to apply the reward function $w_{\text{TT}}$ using $\hat{\mu}_0^{\pi}(X)$. However, the required assumption is a bit stronger.

\begin{assumption}\label{assumption:unobserved-confounding-avg} The maximum effect of the unobserved attributes $U$ on the conditional untreated potential outcome is bounded on average. Define the maximum difference in the untreated potential outcome for a given $x, u$ to be $$\Delta(x, u) = \max_{\tilde{u} \in \mathcal{U}} | \mu_0(x, u) - \mu_0(x, \tilde{u}) |.$$

The average difference is bounded as:
$$E[\Delta(X,U)] \leq \gamma_{\text{max}}. $$
\end{assumption}

Assumption \ref{assumption:unobserved-confounding-avg} is stronger than Assumption \ref{assumption:unobserved-confounding-avg-historical} in the sense that Assumption \ref{assumption:unobserved-confounding-avg-historical} is not sufficient to bound the regret when the principal estimates the mean conditional untreated potential outcome from the agent's data using $\hat{\mu}_0^{\pi}(X)$. Furthermore, the bound in Assumption \ref{assumption:unobserved-confounding-avg} implies that Assumption \ref{assumption:unobserved-confounding-avg-historical} is satisfied with $\gamma_{\text{marg}} \leq \gamma_{\text{max}}$. Intuitively, Assumption \ref{assumption:unobserved-confounding-avg} is not sufficient to bound the regret when the principal uses $\hat{\mu}_0^{\pi}(X)$ because the agent can choose $\pi$ to allocate treatment to single values of $\tilde{u} \in \mathcal{U}$, such that $\hat{\mu}_0^{\pi}(x)$ ends up matching a single value $\mu_0(x, \tilde{u})$.

More generally, Assumption \ref{assumption:unobserved-confounding-avg} is closer to bounds from senstivity analysis on expected outcome functions under unobserved confounding \cite{kennedy2022semiparametric}. While many existing sensitivity analyses bound the effect of unobserved confounding on treatment in prior data \cite{yadlowsky2022bounds}, in this case the agent's simultaneous treatment selection with data collection makes it less reasonable to bound the dependence of the treatment on $U$.

\begin{theorem}\label{thm:agent_regret}
    Suppose for all $\pi \in \Pi$, $\pi(x,u) > 0$ for all $x \in \mathcal{X}$, $u \in \mathcal{U}$. 
    If the principal applies the reward function $w_{\text{TT}}$ using an estimate $\hat{\mu}^{\pi}_0(x)$, then under Assumption \ref{assumption:unobserved-confounding-avg}, the regret is upper bounded as 
    \[R(\pi^w) \leq 2\gamma_{\text{max}}.\]
\end{theorem}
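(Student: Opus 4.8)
The plan is to replay the oracle-inequality argument behind Proposition~\ref{prop:regret_to_bias} and Theorem~\ref{thm:historical_regret}, the genuinely new ingredient being control of the $\pi$-dependent plug-in estimator $\hat\mu_0^{\pi}$. Write $\hat V(\pi)=\tfrac{1}{n}E[w_{\text{TT}}(\mathbf X,\mathbf T^{\pi},\mathbf Y)]$, where the $\hat\mu_0$ appearing inside $w_{\text{TT}}$ is the agent-data estimate $\hat\mu_0^{\pi}(x)=E[Y_i\mid T^{\pi}=0,X_i=x]$ (assumed well-defined, i.e.\ every covariate value carries untreated mass under $\pi$, which is the role of the restriction on $\Pi$; cf.\ Appendix~\ref{sec:estimating_mu_0}). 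Since the risk-neutral agent maximizes expected reward, its best response $\pi^{w}$ maximizes $\hat V$ (by the definition of $\hat V$). Letting $\pi^{*}\in\argmax_{\pi\in\Pi}V(\pi)$ and using $\hat V(\pi^{*})\le\hat V(\pi^{w})$, the standard decomposition gives
\[R(\pi^{w})=V(\pi^{*})-V(\pi^{w})\le\bigl(V(\pi^{*})-\hat V(\pi^{*})\bigr)+\bigl(\hat V(\pi^{w})-V(\pi^{w})\bigr)\le 2\max_{\pi\in\Pi}\bigl|\hat V(\pi)-V(\pi)\bigr|.\]

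Next I would make this gap explicit. Lemma~\ref{lem:info_asymmetry_w} gives $\hat V(\pi)=E[\pi(X_i,U_i)\mu_1(X_i,U_i)]-E[\pi(X_i,U_i)\hat\mu_0^{\pi}(X_i)]$, while in the information-asymmetry setting $V(\pi)=E[\pi(X_i,U_i)\tau(X_i,U_i)]$; the $\mu_1$ terms cancel, leaving
\[\hat V(\pi)-V(\pi)=E\bigl[\pi(X_i,U_i)\bigl(\mu_0(X_i,U_i)-\hat\mu_0^{\pi}(X_i)\bigr)\bigr].\]
Jensen's inequality together with $0\le\pi\le 1$ then bounds $|\hat V(\pi)-V(\pi)|$, uniformly over $\pi\in\Pi$, by $E\bigl[\,\bigl|\mu_0(X_i,U_i)-\hat\mu_0^{\pi}(X_i)\bigr|\,\bigr]$.

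The crux is a pointwise bound on $|\mu_0(x,u)-\hat\mu_0^{\pi}(x)|$. Since $T^{\pi}$ depends only on $(X,U)$ we have ignorability in $(X,U)$, hence $\hat\mu_0^{\pi}(x)=E[Y_i\mid T^{\pi}=0,X_i=x]=E[\mu_0(x,U_i)\mid T^{\pi}=0,X_i=x]$: that is, $\hat\mu_0^{\pi}(x)$ is an average of the values $\{\mu_0(x,\tilde u)\}_{\tilde u\in\mathcal U}$ against the conditional law of $U_i$ on the event $\{T^{\pi}=0,X_i=x\}$. Therefore
\[\bigl|\mu_0(x,u)-\hat\mu_0^{\pi}(x)\bigr|=\bigl|E[\mu_0(x,u)-\mu_0(x,\tilde U)\mid T^{\pi}=0,X_i=x]\bigr|\le\max_{\tilde u\in\mathcal U}\bigl|\mu_0(x,u)-\mu_0(x,\tilde u)\bigr|=\Delta(x,u).\]
Taking expectations and invoking Assumption~\ref{assumption:unobserved-confounding-avg} yields $E[|\mu_0(X_i,U_i)-\hat\mu_0^{\pi}(X_i)|]\le E[\Delta(X_i,U_i)]\le\gamma_{\text{max}}$, and combining with the first display gives $R(\pi^{w})\le 2\gamma_{\text{max}}$.

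I expect that pointwise bound to be the main obstacle, and it is also what forces Assumption~\ref{assumption:unobserved-confounding-avg} rather than the weaker Assumption~\ref{assumption:unobserved-confounding-avg-historical}: because $\pi$ can depend on $U$, conditioning on $\{T^{\pi}=0\}$ may select a badly skewed subpopulation of $U$-values, so $\hat\mu_0^{\pi}(x)$ can land anywhere in the convex hull of $\{\mu_0(x,\tilde u)\}_{\tilde u}$ --- in general far from the marginal $\mu_0(x)$ --- and only the worst-case spread $\Delta(x,u)$ controls the error. The remaining work is bookkeeping: verifying that $\hat\mu_0^{\pi}$ is well-defined for every $\pi\in\Pi$ (the purpose of the hypothesis on $\Pi$) and that the optimizers $\pi^{*},\pi^{w}$ exist, exactly as in Theorem~\ref{thm:historical_regret}.
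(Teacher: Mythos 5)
Your proposal is correct and follows essentially the same route as the paper: the same oracle-inequality decomposition bounding regret by $2\max_{\pi}|\hat V(\pi)-V(\pi)|$, the same reduction via Lemma~\ref{lem:info_asymmetry_w} and Jensen to $E[|\hat\mu_0^{\pi}(X)-\mu_0(X,U)|]$, and the same key observation that $\hat\mu_0^{\pi}(x)$ is a convex average of $\{\mu_0(x,\tilde u)\}_{\tilde u}$ (the content of the paper's Lemma~\ref{lem:mu_0_subset}), so the pointwise error is controlled by $\Delta(x,u)$. Your direct pointwise bound merely inlines that lemma; no substantive difference.
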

\begin{proof}
    This also follows similarly to the proof of Proposition \ref{prop:regret_to_bias}:
    \begin{align*}
    V(\pi^{*}) - V(\pi^{w_{\text{TT}}}) &\leq V(\pi^{*}) - V(\pi^{w_{\text{TT}}}) + \hat{V}(\pi^{w_{\text{TT}}}) - \hat{V}(\pi^{*}) \\
    &\leq |V(\pi^{*}) - \hat{V}(\pi^{*})| + |\hat{V}(\pi^{w_{\text{TT}}}) - V(\pi^{w_{\text{TT}}})| \\
    &\leq 2 \max_{\pi \in \Pi}|\hat{V}(\pi) - V(\pi)| \\
    &= 2 \max_{\pi \in \Pi}|E[\pi(X_i,U_i)(\hat{\mu}^{\pi}_0(X_i) - \mu_0(X_i, U_i))]| 
\end{align*}

By Jensen's inequality,
\begin{align*}
    \max_{\pi \in \Pi}|E[\pi(X,U)(\hat{\mu}_0^{\pi}(X) - \mu_0(X, U))]|  &\leq \max_{\pi \in \Pi}E[\pi(X,U)|\hat{\mu}_0^{\pi}(X) - \mu_0(X, U)| ] \\
    &\leq \max_{\pi \in \Pi}E[|\hat{\mu}_0^{\pi}(X) - \mu_0(X, U)| ]
\end{align*}

Define $$\Delta^{\Pi}(x, u) = \max_{\pi \in \Pi} |\hat{\mu}_0^{\pi}(x) - \mu_0(x, u)|.$$

Since for all $\pi \in \Pi$, $\pi(x,u) > 0$ for all $x \in \mathcal{X}$, $u \in \mathcal{U}$, we can apply Lemma \ref{lem:mu_0_subset} below, which says that $\hat{\mu}_0^{\pi}(x) \in [\min_u{\mu_0(x, u)}, \max_u{\mu_0(x, u)}]$.
Therefore, for all $x, u$, $$\Delta^{\Pi}(x, u) \leq \Delta(x, u).$$

Putting this together,
\[
    \max_{\pi \in \Pi}E[|\hat{\mu}_0^{\pi}(X) - \mu_0(X, U)| ] \leq E[\max_{\pi \in \Pi}|\hat{\mu}_0^{\pi}(X) - \mu_0(X, U)| ] \leq E[\Delta^{\Pi}(X, U)] \leq E[\Delta(X, U)].\]
Therefore, under assumption \ref{assumption:unobserved-confounding-avg}, $R(\pi^w) \leq 2\gamma_{\text{max}}$.
\end{proof} 

\begin{lemma}\label{lem:mu_0_subset} Suppose that for all $\pi \in \Pi$, $P_{U | X = x, T^{\pi} = 0}$ is a well defined probability distribution. Let $\hat{\mu}_0^{\pi}(x) = E[Y | X = x, T^{\pi} = 0]$. Then for all $\pi$, $\hat{\mu}_0^{\pi}(x) \in [\min_u{\mu_0(x, u)}, \max_u{\mu_0(x, u)}]$.
\end{lemma}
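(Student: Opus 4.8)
The plan is to express $\hat{\mu}_0^{\pi}(x)$ as an average of the values $\mu_0(x,u)$, $u \in \mathcal{U}$, taken against the conditional law $P_{U \mid X = x, T^{\pi} = 0}$; once this representation is in hand, the claimed sandwiching between $\min_u \mu_0(x,u)$ and $\max_u \mu_0(x,u)$ is immediate, since the expectation of a function always lies between its minimum and maximum values.

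First I would apply the tower property to insert $U$ into the conditioning event:
\[\hat{\mu}_0^{\pi}(x) = E[Y \mid X = x, T^{\pi} = 0] = E\big[\, E[Y \mid X = x, U, T^{\pi} = 0] \,\big|\, X = x, T^{\pi} = 0 \,\big],\]
where the outer expectation is an integral against $P_{U \mid X = x, T^{\pi} = 0}$, which the hypothesis of the lemma guarantees is a bona fide probability distribution. Next I would simplify the inner conditional expectation: on the event $\{T^{\pi} = 0\}$, consistency (from SUTVA) gives $Y = Y(T^{\pi}) = Y(0)$, so $E[Y \mid X = x, U = u, T^{\pi} = 0] = E[Y(0) \mid X = x, U = u, T^{\pi} = 0]$. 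Because the agent's treatment rule depends on each unit only through $(X, U)$ by construction, we have the strong ignorability property $\{Y(0), Y(1)\} \indep T^{\pi} \mid X, U$ — the $(X,U)$-analogue of the ignorability used repeatedly in Section~\ref{sec:comparisons} — so the extra conditioning on $T^{\pi} = 0$ is inert: $E[Y(0) \mid X = x, U = u, T^{\pi} = 0] = E[Y(0) \mid X = x, U = u] = \mu_0(x,u)$.

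Substituting this back gives $\hat{\mu}_0^{\pi}(x) = E[\mu_0(x, U) \mid X = x, T^{\pi} = 0]$, an expectation of the function $u \mapsto \mu_0(x,u)$ over a probability measure on $\mathcal{U}$, and therefore $\hat{\mu}_0^{\pi}(x) \in [\min_u \mu_0(x,u), \max_u \mu_0(x,u)]$, as claimed. The only point needing care is the legitimacy of the tower-property step and the exchange in the order of conditioning — which is exactly what the assumption that $P_{U \mid X = x, T^{\pi} = 0}$ be well defined (in particular that $P(T^{\pi}=0 \mid X = x) > 0$) is there to secure; the remaining manipulations are the same ignorability computation already carried out in the proof of Proposition~\ref{prop:tt_best_response}, so I do not expect a substantive obstacle.
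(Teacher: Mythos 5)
Your proof is correct and follows essentially the same route as the paper's: both reduce $\hat{\mu}_0^{\pi}(x)$ to $\int \mu_0(x,u)\, dP_{U \mid X = x, T^{\pi} = 0}$ and then observe that an average against a probability measure on $\mathcal{U}$ must lie between $\min_u \mu_0(x,u)$ and $\max_u \mu_0(x,u)$. The only difference is that you spell out the tower-property/consistency/ignorability justification for the decomposition, which the paper states without comment.
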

\begin{proof}
We decompose $\hat{\mu}_0^{\pi}(x)$ as 
$$\hat{\mu}_0^{\pi}(x) = E[Y | X = x, T^{\pi} = 0] = \int \mu_0(x,u) dP_{U | X = x, T^{\pi} = 0}$$

For any $\pi$, $\int \mu_0(x,u) dP_{U | X = x, T^{\pi} = 0} = \int \mu_0(x,u) dW(u)$ for some distribution $W$; therefore, 
\begin{align*}
    &\left\{\int \mu_0(x,u) dP_{U | X = x, T^{\pi} = 0}: \pi \in \Pi \right\} \\
    &\quad\quad\quad \subseteq \left\{\int \mu_0(x,u) dW(u) : W \text{ is a distribution over } U, \int dW(u) = 1 \right\}.
\end{align*}
Both the maximizer and the minimizer of the smaller set are contained in the larger set. Specifically,
\begin{align*}
    \max_{\pi}\hat{\mu}_0^{\pi}(x) &= \max_{\pi} \int \mu_0(x,u) dP_{U | X = x, T^{\pi} = 0} \leq \max_{W: \int dW(u) = 1} \int \mu_0(x,u) dW(u) \leq \max_u \mu_0(x, u) \\
    \min_{\pi}\hat{\mu}_0^{\pi}(x) &= \min_{\pi} \int \mu_0(x,u) dP_{U | X = x, T^{\pi} = 0} \geq \min_{W: \int dW(u) = 1} \int \mu_0(x,u) dW(u) \geq \min_u \mu_0(x, u)
\end{align*}
Therefore, for all $\pi$, $\hat{\mu}_0^{\pi}(x) \in [\min_u{\mu_0(x, u)}, \max_u{\mu_0(x, u)}]$.
\end{proof}



Overall, we have discussed two plausible data collection settings for the principal to estimate $\mu_0(x)$ to implement the reward function $w_{\text{TT}}$. In the first setting using auxiliary data where $\mu_0(x)$ is identifiable, the regret is bounded by the gap between the conditional effects of $U$ on $Y_i(0)$ and the marginal effect of only $X$ on $Y_i(0)$. In the second setting using untreated units from the agent's treatment population, $\mu_0(x)$ is not identifiable, and the regret can be bounded under a stronger assumption on the sensitivity of $\mu_0(X,U)$ to the agent's private information $U_i$. 

\section{Additional Experiment Details and Results}

We give additional experiment details and results here. 

\subsection{Implementation}

All models were trained using the linear regression and logistic regression packages from scikit-learn\cite{scikit-learn}.
All categorical features were one-hot encoded.

\subsection{Additional dataset details}

We provide additional setup and training details for each dataset.

\subsubsection{Horse Colic dataset}

For the Horse Colic dataset from UCI \cite{Dua:2019}, we let $X$ consist of all features observed prior to surgery, which includes 13 categorical features and 7 numerical features. All features used are listed in Figure \ref{fig:hc_gammas_regrets_cumul}. We removed all examples in which the horse was euthanized, and used the remaining ``outcome'' variable as $Y$, where we set $Y_i = 1$ if the horse lived, and $Y_i = -1$ if the horse died. We only used the main \texttt{horse-colic.data} dataset, and did not use the ``test'' dataset included in the UCI directory. 

To simulate $\mu_t(x)$, we assume that the outcome distribution takes the parametric form,
\[P(Y(t) = 1 | X = x) = \sigma(\beta_0 + \beta_1^\top x + \beta_2 t + \beta_3^\top xt)\]
where $\sigma(x) = \frac{1}{1 + e^{-x}}$ is the standard logistic function.

We trained a logistic regression model of $Y$ on $X$, $T$, and the interaction term $XT$ to estimate parameters $\hat{\beta}$, and used the resulting estimate to compute 
$\mu_t(x) = \sigma(\hat{\beta}_0 + \hat{\beta}_1^\top x + \hat{\beta}_2 t + \hat{\beta}_3^\top xt)$. For the fitted model, the AUC was $0.9924$, and the accuracy was $0.6824$. When computing agent best responses and regret, we take this function $\mu_t(x)$ to be given as synthetic mean conditional potential outcomes. Note that the clinical validity of $\mu_t(x)$ as actual potential outcomes cannot be verified from the dataset alone. There may be error in both our unconfoundedness assumption with respect to X, and in the logistic parametric specification of the outcome model.

In Table \ref{tab:regrets_full_info}, the ``demographic information'' for the Horse Colic dataset consists of only the \textit{age} feature.

\subsubsection{International Stroke Trial dataset}

For the International Stroke Trial dataset \cite{international1997international}, we let $X$ include all clinical data prior to treatment, which includes all ``Randomisation data'' except for dates and times. This includes $17$ categorical features and $3$ numerical features. All features used are listed in Figure \ref{fig:ist_gammas_regrets_cumul}. For the outcome variable $Y$, we apply the negative of the scalarized composite outcome score from \citet{kallus2018confounding}. Specifically,
\begin{align*}
    Y = &-2\Ind(\text{death}) - \Ind(\text{recurrent stroke})-0.5\Ind(\text{pulmonary embolism or intracranial bleeding}) \\
&-0.5\Ind(\text{other side effects}) + 2\Ind(\text{full recovery at 6 months}) + \Ind(\text{discharge within 14 days}).
\end{align*}
This results in $Y \in [-4, 3]$.

To simulate $\mu_t(x)$, we assume the conditional mean of the potential outcome distribution has the linear parametric form,
\[E[Y(t) | X = x] = \beta_0 + \beta_1^{\top} x + \beta_2 t + \beta_3^{\top} xt.\]

We trained a linear regression model of $Y$ on $X$, $T$, and the interaction term $XT$ to estimate parameters $\hat{\beta}$, and used the resulting estimate to compute $\mu_t(x) = \hat{\beta}_0 + \hat{\beta}_1^\top x + \hat{\beta}_2 t + \hat{\beta}_3^\top xt$. The fitted RMSE was $1.34$ and the $R^{2}$ was $0.26$. As with the Horse Colic dataset, when computing agent best responses and regret, we take this function $\mu_t(x)$ to be given as synthetic mean conditional potential outcomes. In this dataset the unconfoundedness assumption should hold as this data came from a randomized control trial. However, there may still be error in the linear parametric specification of the outcome model.

In Table \ref{tab:regrets_full_info}, the ``demographic information'' for the Stroke Trial dataset consists of both the \textit{age} feature and the \textit{sex} feature.

\subsection{Effect of better estimates of $\mu_t(x)$}

Better estimates of $\mu_t(x)$ may come from doubly robust estimators. However, the contribution of this work is not to improve estimation methods for $\mu_t(x)$, nor do the proposed reward policies rely on the quality or variance of $\mu_t(x)$ estimators. The quality of $\mu_t(x)$ only affects the how well the regrets reported in experiments might approximate real regrets. This is because in experiments, we estimate $\mu_t(x)$ to simulate potential outcomes for two purposes: \textit{(i)} simulating the agent's perception of potential outcomes and thus yielding the agent's best responses; and \textit{(ii)} calculating an approximate regret. Better estimates of $\mu_t(x)$ would improve the alignment with both of these simulations with reality. 

Our theory only assumes that the principal has an estimator $\hat{\mu}_0(x)$ which is unbiased with respect to the agent's preceived $\mu_0(x)$ function which the agent uses to calculate their best response. For the experiments, this unbiasedness assumption is true, as we assume both principal and agent have access to the same synthetic $\mu_t(x)$ values. We do not directly simulate the principal's calculation of $\hat{\mu}_0(x)$ from a subset of the data, since this would not affect the actions of the expectation maximizing agent.

The main limitation of our experiments is that in the absence of true counterfactual outcomes, they rely on parametric estimates of $\mu_t$ which we can't guarantee are well specified. An ideal dataset for evaluating true welfare impacts would have a structure that identifies $P_{X_i}$ and $\mu_t(x)$, where $\mu_t(x)$ ideally matches the agent's \textit{perceived} mean conditional potential outcomes. While we can't guarantee that our estimated imputed values for $\mu_t(x)$ match real providers, our experiments provide a structure by which such experiments could be run in the future if regulatory agencies have internal access to more ideal data, perhaps even through surveying providers themselves. 


\subsection{Calculating $\gamma_{\text{marg}}$}

When simulating information asymmetry, we compute empirical estimates of $\gamma_{\text{marg}}$ over the data. For each individual feature in Figures \ref{fig:ist_gammas_regrets_cumul} and \ref{fig:hc_gammas_regrets_cumul}, let $X$ represent the individual feature, and let $U$ represent the set of all other features. We use the full regression result as $\mu_0(x, u)$. To calculate $\mu_0(x)$ where $x$ represents the individual feature, we take the empirical conditional mean over the dataset:
\[\mu_0(x) = \frac{\sum_{i=1}^n \mu_0(X_i, U_i) \Ind(X_i = x)}{\Ind(X_i = x)}.\]

Then, $\gamma_{\text{marg}}$ is calculated empirically over the dataset as
\[\gamma_{\text{marg}} = \frac{1}{n}\sum_{i=1}^n |\mu_0(X_i) - \mu_0(X_i, U_i)|. \]

We take these empirical estimates as given in the absence of a closed form for the joint distribution of all features. More sophisticated Bayesian distribution estimation or smoothing may produce different $\gamma_{\text{marg}}$ estimates.

\subsection{Additional results}

Table \ref{tab:regrets_full_info_app} shows the utility and regret comparisons for different reward functions, including an additional comparison to Reward Function 5 in the Appendix, $w_{\text{TO}}$.

Figure \ref{fig:regrets_indiv} shows the regrets when the principal knows only each individual feature. Interestingly, use of several of these individual features leads to worse regret than if the principal knew no features and just estimated the marginal expected untreated potential outcome $E[Y(0)]$. This confirms that the regret amplification effect can occur for features other than the demographic information in Table \ref{tab:regrets_full_info}. 

\begin{table*}[!ht]
  \caption{Utility and regret comparisons for different reward functions. For each reward function $w$, we report utility $V(\pi^w)$, regret $R(\pi^w)$, and the realized treatment rate $P(T^{\pi^w} = 1)$.}
  \label{tab:regrets_full_info_app}
  \centering
  \begin{tabular}{lllllll}
    \toprule
    & \multicolumn{3}{c}{Horse Colic dataset}   &  \multicolumn{3}{c}{Stroke Trial dataset}     \\
    \cmidrule(r){2-7}
    Reward function & Utility  & Regret  & Treat rate & Utility & Regret & Treat rate \\
    \midrule
    $w_{\text{ATO}}$ & $0.00000$ & $0.1477$  & $0.1922$ & $0.00004$ & $0.0251$ & $0.0001$ \\
    $w_{\text{ATT}}$ & $0.00784$ & $0.1399$ & $0.0039$ & $0.00013$ & $0.0250$ & $0.0001$ \\
    $w_{\text{TO}}$ & $0.08568$ & $0.0621$ & $0.6706$ & $-0.08278$ & $0.1079$ & $0.6689$  \\
    $w_{\text{TT}}$ & $0.14774$ & $0.000$ & $0.2431$ & $0.02518$ & $0.000$ &  $0.1872$\\
    \midrule
    $w_{\text{TT}}$ (no info) & $0.10008$ & $0.0476$ & $0.6235$ & $-0.04888$ & $0.0741$ &  $0.4829$\\
    $w_{\text{TT}}$ (demographic info) & $0.10008$ & $0.0476$ & $0.6235$ & $-0.06391$ & $0.0891$ & $0.5041$ \\
    \bottomrule
  \end{tabular}
\end{table*}

\begin{figure}[!ht]\label{fig:regrets_indiv}
  \centering
  \begin{tabular}{ccc}
    Horse Colic dataset & Stroke Trial dataset \\
\begin{minipage}[t]{.5\textwidth}
\vspace{0pt}
\raggedleft
\includegraphics[width=\textwidth]{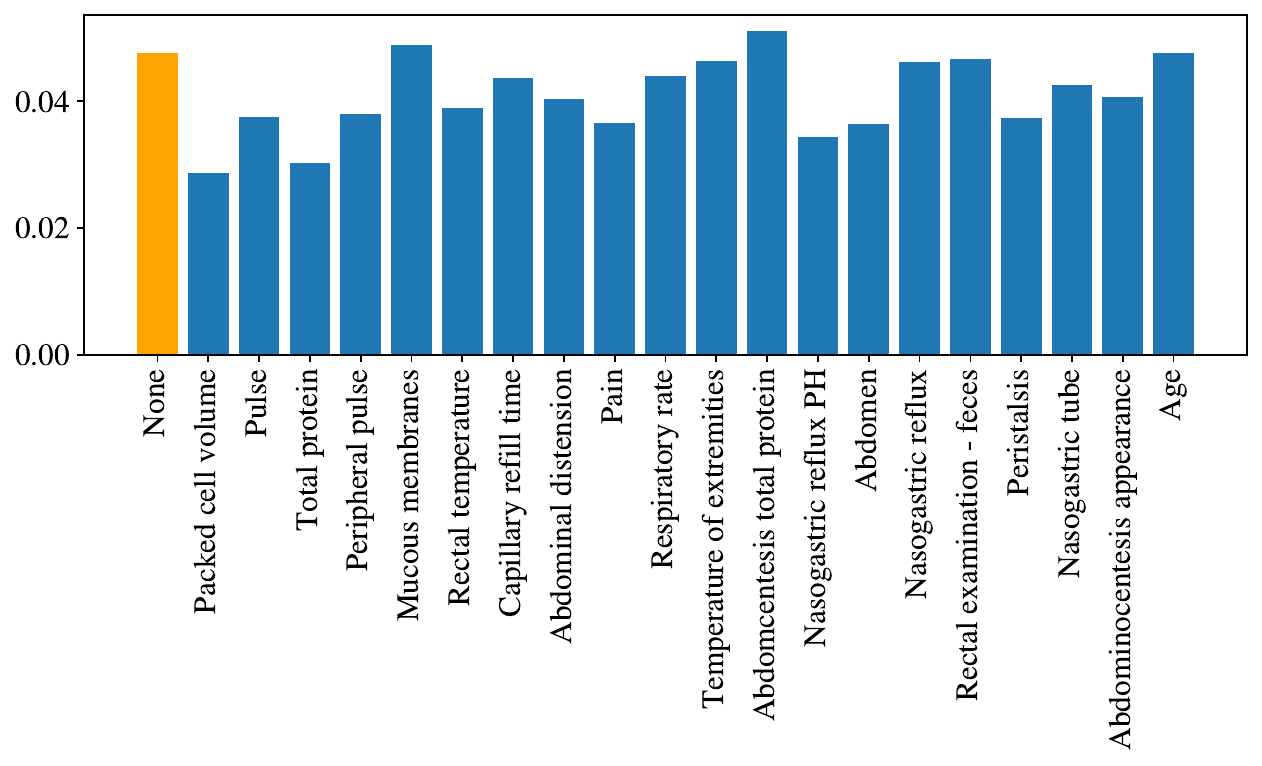}
\vspace{0pt}
\end{minipage}
       & 
\begin{minipage}[t]{.5\textwidth}
\vspace{0pt}
\raggedleft
\includegraphics[width=\textwidth]{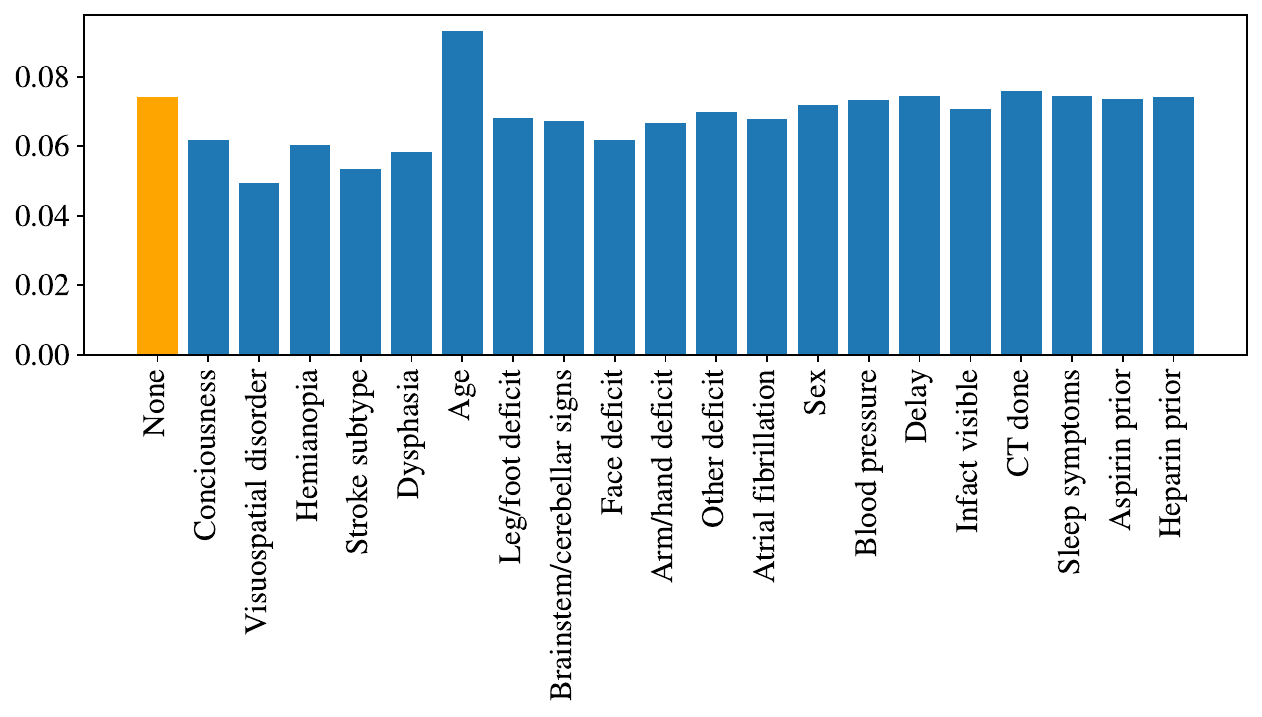}
\vspace{0pt}
\end{minipage}
\\
  \end{tabular}
  
  \caption{Regret if the principal only knows the labeled feature. Features are sorted in order of their individual $\gamma_{\text{marg}}$.}
\end{figure}

Figure \ref{fig:hc_gammas_regrets_cumul} is the equivalent of Figure \ref{fig:ist_gammas_regrets_cumul} for the Horse Colic dataset. For the Horse Colic dataset, the regret drops to close to zero if the principal knows only a few features. This is likely due to the fact that there is little variation in other feature values in the dataset conditioned \textit{packed cell volume} and \textit{pulse}, exacerbated by the fact that the dataset is so small. The value of $\gamma_{\text{marg}}$ for these two features combined is small compared to the individual $\gamma_{\text{marg}}$ values in Figure \ref{fig:hc_gammas_regrets_cumul} (\textit{left}), at $0.035$. 

This may change for larger datasets, where it may be more likely to observe more samples with the same \textit{packed cell volume} and \textit{pulse} values alongside more variation in other feature values. If in practice other features did not vary much given \textit{packed cell volume} and \textit{pulse}, then the regret shape in Figure \ref{fig:hc_gammas_regrets_cumul} (\textit{right}) would hold. More data would be needed to verify the joint distribution of $X, U$ to confirm this. 

The Stroke Trial dataset does not exhibit a similar effect since its three numerical features take significantly fewer possible values relative to the size of the dataset. 

\begin{figure}[!ht]\label{fig:hc_gammas_regrets_cumul}
  \centering
  \begin{tabular}{ccc}
    $\gamma_{\text{marg}}$ per feature & Regret for increasing feature sets \\
\begin{minipage}[t]{.45\textwidth}
\vspace{0pt}
\raggedleft
\includegraphics[trim=0 8 5 5,clip,width=\textwidth]{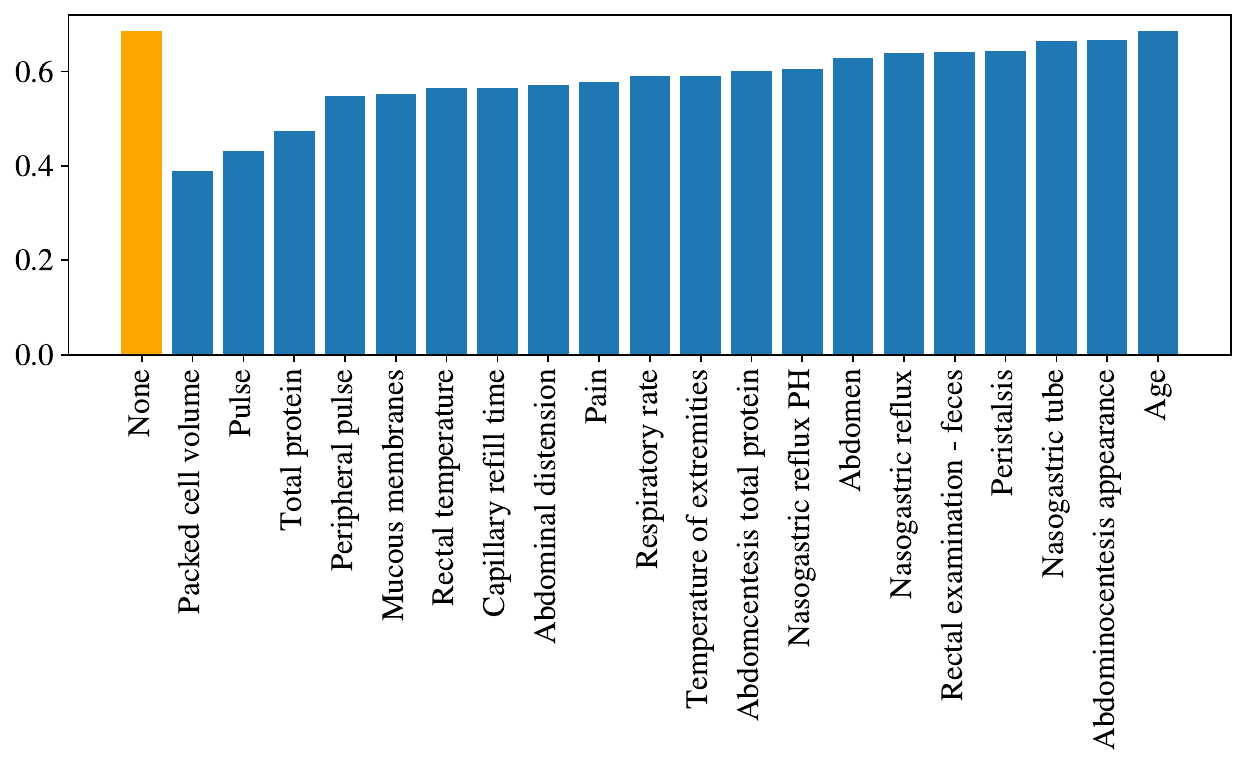}
\vspace{0pt}
\end{minipage}
       & 
\begin{minipage}[t]{.45\textwidth}
\vspace{0pt}
\raggedleft
\includegraphics[trim=0 8 5 5,clip,width=\textwidth]{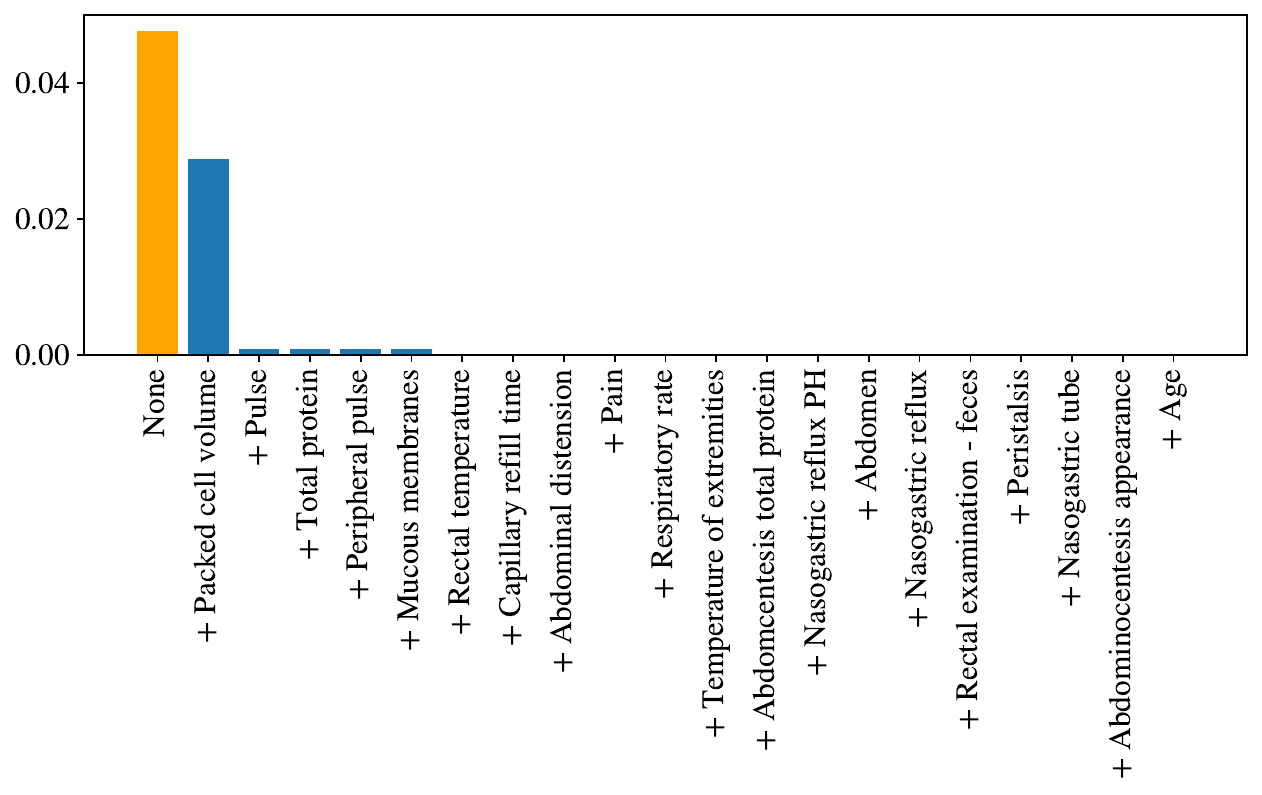}
\vspace{0pt}
\end{minipage}
\\
  \end{tabular}
  
  \caption{Regret under fine-grained information asymmetry on the Horse Colic dataset. The \textit{left} plot shows $\gamma_{\text{marg}}$ values if the principal only knows each individual feature. The \textit{right} plot shows regret as the principal accumulates features from the left (most important).}
\end{figure}

  

  

\section{Additional Discussion of Modeling Assumptions}

We do not explicitly model the agent's costs, and all budget constraints are contained in the agent's feasible set of treatment rules $\Pi$. This may be viewed as considering the agent's incentives when there is no cost to treatment, thus decoupling our analysis of the agent's incentives to do well on healthcare report cards from the broader treatment pricing market. 
In practice, the compensation hospitals and doctors receive for treatment could yield a positive adjustment or sometimes a negative adjustment to their utility per treated unit. Analysis of how other external incentives pair with the quality measure incentives would be interesting future work, but for now we focus on the incentives induced solely by the accountability metrics. 

Whether the information asymmetry modeled here is common or severe in the medical space has been debated and may change over time. \citet{dranove2003more} remark that ``providers may be able to improve their ranking by selecting patients on the basis of characteristics that are unobservable to the analysts but predictive of good outcomes.'' However, medical treatment protocols are also generally heavily codified, with online clinical decision support tools becoming increasingly widely used \cite{uptodate}.
Still, if the regulatory agency does not have full access to patents' medical records, then information asymmetry will arise.

\end{document}